\numberwithin{equation}{section}
\newtheorem{theorem}{Theorem}[section]
\newtheorem*{theorem*}{Theorem}
\newtheorem{lemma}[theorem]{Lemma}
\newtheorem*{lemma*}{Lemma}
\theoremstyle{definition}
\newtheorem{definition}[theorem]{Definition}
\newtheorem{remark}[theorem]{Remark}
\DeclarePairedDelimiter\abs{\lvert}{\rvert}
\DeclarePairedDelimiter\norm{\lVert}{\rVert}
\DeclarePairedDelimiter\paren{(}{)}
\DeclarePairedDelimiter\braces{\lbrace}{\rbrace}
\DeclarePairedDelimiter\inprod{\langle}{\rangle}
\DeclarePairedDelimiter\sqbracket{[}{]}
\newcommand{\Prob}[2][]{\mathbb{P}_{#1}\paren*{#2}}
\newcommand{\E}{\mathbb{E}}
\newcommand{\R}{\mathbb{R}}
\newcommand{\tensor}{\otimes}
\newlength{\dhatheight}
\newcommand{\st}{\ | \ }
\newcommand{\tr}[1]{\text{tr}\paren{#1}}
\title{Polynomial Time and Sample Complexity for Non-Gaussian Component Analysis: Spectral Methods}
\author{Yan Shuo Tan \footnote{Department of Mathematics, University of Michigan, \href{mailto:yanshuo@umich.edu}{yanshuo@umich.edu}}  \quad\quad\quad Roman Vershynin \footnote{Department of Mathematics, University of Michigan, \href{mailto:romanv@umich.edu}{romanv@umich.edu}}}
\begin{document}

\maketitle

\begin{abstract}
	The problem of Non-Gaussian Component Analysis (NGCA) is about finding a maximal low-dimensional subspace $E$ in $\R^n$ so that data points projected onto $E$ follow a non-gaussian distribution. Although this is an appropriate model for some real world data analysis problems, there has been little progress
	on this problem over the last decade.

	In this paper, we attempt to address this state of affairs in two ways. First, we give a new characterization of standard gaussian distributions in high-dimensions, which lead to effective tests for non-gaussianness. Second, we propose a simple algorithm, \emph{Reweighted PCA}, as a method for solving the NGCA problem.
	We prove that for a general unknown non-gaussian distribution, this algorithm recovers at least one 
	direction in $E$, with sample and time complexity depending polynomially on the dimension 
	of the ambient space. We conjecture that the algorithm actually recovers the entire $E$. 
\end{abstract}

\clearpage

\section{Introduction}

\subsection{Non-Gaussian Component Analysis} Dimension reduction is a necessary step for much of modern data analysis, the principle being that the structure or ``interestingness'' of a collection of data points is contained in a geometric structure which has much lower dimension than the ambient vector space. We consider the case where the geometric structure in question is a linear subspace. In other words, we are in the situation where the variation of the data points within this subspace contains some information which we would like to extract, while their variation in the complementary directions constitute mere noise.

In many cases, it is reasonable to think of the noise as being gaussian. Formally, we then have the following generative model. Let $E$ be an unknown $d$-dimensional subspace of $\R^n$, and let $E^\perp$ be the orthogonal complement of $E$. 
Let $X$ be a random vector in $\R^n$, which we can decompose into two independent components: a non-gaussian component $\tilde{X}$ that takes values in $E$, and a gaussian component $g$ that takes values in $E^\perp$. In other words, we let $X = \paren{\tilde{X},g} \in E \oplus E^\perp$.\footnote{It is not necessary to assume that the gaussian and non-gaussian subspaces are perpendicular. They automatically become perpendicular if we apply a whitening transformation.}

Our goal is to recover the subspace $E$ from a sample of independent realizations of $X$. This is precisely the framework of the problem of Non-Gaussian Component Analysis (NGCA). We make no assumption on the relative magnitudes of $\tilde{X}$ and $g$. When the noise component is much smaller, which is a reasonable assumption in some real world applications, $E$ can be recovered using the standard Principal Component Analysis (PCA). However, PCA manifestly fails when the signal to noise ratio is small, i.e. when $\tilde{X}$ has lower magnitude than $g$.

With mild distributional assumptions, applying a whitening transformation to the data points can be done efficiently with sample size linear in the dimension \cite{Vershynin2011b}. As such, we might as well assume that the distribution is already whitened (i.e. isotropic). In other words, for the rest of this paper, we work with the model:

\begin{definition}[Isotropic NGCA model]			\label{def: NGCA model}
	\begin{equation} \label{NGCA model}
	X = \paren{\tilde{X},g} \in E \oplus E^\perp, \quad  \E X = 0, \quad \E XX^T = I_n.
	\end{equation}
\end{definition}

The NGCA problem is closely related to the problem of Independent Component Analysis (ICA), but generalizes it in a crucial way. ICA assumes the existence of a latent variable $s$ with independent coordinates, whereas in our case, the distribution of $\tilde{X}$ is allowed to have any manner of dependencies amongst its entries.

\subsection{Previous work on NGCA} 

As far as we know, the NGCA problem was first studied by Blanchard et al. in their 2006 paper \cite{Blanchard2006b}. Following this paper, there has been a small but growing body of work on the problem \cite{Kawanabe2006b,Kawanabe2007b,Kawanabe2006,Diederichs2010b,Diederichs2013b,Sasaki2016b,Virta2016b}, most of which is based on adapting ideas from Projection Pursuit, and in particular, the iterative \emph{FastICA} algorithm of Hyv{\"a}rinen \cite{Hyvarinen1997}.

None of these works have provided an algorithm with provable polynomial time and sample complexity. Suggested algorithms either have scant theoretical justification, or have analyses that are unsatisfactory for two reasons. First, where there is a consistency guarantee for an estimator $\hat{E}$, the guarantee is either conditional on some uncheckable, idiosyncratic assumptions on how the particular algorithm interacts with the ``non-gaussianness'' of $\tilde{X}$ (for instance, see Assumption 1 in \cite{Diederichs2013b}), or states only that it converges to a certain subspace $F$ that is related to $E$, but may not coincide with $E$ itself. Second, rates of convergence either omit dependence on the dimension or are again given in terms of idiosyncratic assumptions.

Furthermore, the algorithms with better theoretical grounding, such as in \cite{Diederichs2013b,Sasaki2016b}, have also become incredibly complex, and one hopes that simpler algorithms should suffice.

\subsection{Reweighted PCA in other contexts} 
Our proposed algorithm, Reweighted PCA, will involve performing PCA on a weighted sample. The idea of doing this with weight functions that are non-linear in the sample points can be traced back at least as far as Brubaker and Vempala's work on \emph{Isotropic PCA} \cite{Brubaker2008b}. In that paper, the authors similarly use gaussian weights, but do so in order to handle clustering for gaussian mixture models that are highly non-spherical. In a later paper \cite{Goyal2014a}, Goyal, Vempala, and Xiao used Fourier weights to handle ICA. While our analysis is radically different, the idea for the algorithm was directly inspired by these two papers.

\section{Main results} \label{sec: main results}

The principle that underlies our approach to NGCA is a new characterization 
of multi-dimensional gaussian distributions. 
Throughout this section, $X$ denotes a random vector in $\R^n$
and $g$ is a standard gaussian random vector in $\R^n$.
By $X'$ we will always denote an independent copy of $X$.

\begin{theorem}[First gaussian test] \label{thm: first gaussian test}
  Suppose $X$ has the same radial distribution as $g$, i.e. $\|X\|_2$ and $\|g\|_2$
  are identically distributed. If $\inprod{X,X'}$ has the same distribution 
  as $\inprod{g,g'}$, then $X$ has the same distribution as $g$, i.e. 
  the standard gaussian distribution.
\end{theorem}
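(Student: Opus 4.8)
The plan is to show that $X$ and $g$ have the same moment tensor of every order, and then conclude by moment-determinacy of the gaussian. The single computational ingredient, used repeatedly, is the elementary identity that for independent random vectors $Y,Z$ in $\R^n$ with finite moments,
\[
  \E\sqbracket{\inprod{Y,Z}^m} = \inprod*{\E[Y^{\tensor m}],\,\E[Z^{\tensor m}]},
\]
the right-hand side being the entrywise inner product of the two $m$-fold moment tensors; in particular $\E\sqbracket{\inprod{Y,Y'}^m}=\norm*{\E[Y^{\tensor m}]}_F^2$ for an independent copy $Y'$. This is immediate on expanding $\inprod{Y,Z}^m=\sum_{i_1,\dots,i_m}Y_{i_1}\cdots Y_{i_m}\,Z_{i_1}\cdots Z_{i_m}$ and using independence. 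All moments below are finite since $\norm{X}_2$, having the distribution of $\norm{g}_2$, has finite moments of every order.

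First I would read off two facts for each $m$. From $\inprod{X,X'}\dist\inprod{g,g'}$ and the identity, $\norm*{\E[X^{\tensor m}]}_F=\norm*{\E[g^{\tensor m}]}_F$. Next, the radial hypothesis supplies the ``cross term'': let $h$ be a standard gaussian vector independent of $X$; conditioning on $X$ shows $\inprod{X,h}\dist\N{0}{\norm{X}_2^2}$, so $\inprod{X,h}\dist\norm{X}_2\,\gamma$ with $\gamma\dist\N{0}{1}$ independent of $\norm{X}_2$, which since $\norm{X}_2\dist\norm{g}_2$ equals in law $\norm{g}_2\,\gamma\dist\inprod{g,g'}$ (the last step being the same conditioning applied to $g,g'$). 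Taking $m$-th moments, $\inprod*{\E[X^{\tensor m}],\E[g^{\tensor m}]}=\E\sqbracket{\inprod{X,h}^m}=\E\sqbracket{\inprod{g,g'}^m}=\norm*{\E[g^{\tensor m}]}_F^2$.

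Combining,
\[
  \norm*{\E[X^{\tensor m}]-\E[g^{\tensor m}]}_F^2=\norm*{\E[X^{\tensor m}]}_F^2-2\inprod*{\E[X^{\tensor m}],\E[g^{\tensor m}]}+\norm*{\E[g^{\tensor m}]}_F^2=0,
\]
so $\E[X^{\tensor m}]=\E[g^{\tensor m}]$ for every $m$: this is exactly the equality case of Cauchy--Schwarz, made available because the radial hypothesis fixes the alignment of each moment tensor with the gaussian's while the inner-product hypothesis fixes its length. Finally, for any $v\in\R^n$ the variables $\inprod{X,v}$ and $\inprod{g,v}$ then have the same moments, and since $\inprod{g,v}\dist\N{0}{\norm{v}_2^2}$ has an everywhere-finite moment generating function (hence is moment-determinate), $\inprod{X,v}\dist\inprod{g,v}$; the Cram\'er--Wold device yields $X\dist g$.

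I expect no serious technical bottleneck here: the argument is short, and the endgame (the tensor expansion and the Carleman/Cram\'er--Wold step) is routine. The real content is conceptual --- the observation that the two hypotheses, each merely a statement about a one-dimensional marginal, are individually far too weak (the inner-product hypothesis alone is satisfied by any spherically symmetric $X$ and controls only the Frobenius norms $\norm*{\E[X^{\tensor m}]}_F$), yet interlock precisely to force tensor-level equality through a tight Cauchy--Schwarz.
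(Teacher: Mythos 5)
Your proof is correct, and it shares the skeleton of the paper's argument: identify $\E\inprod{X,X'}^m$ with $\norm{\E[X^{\tensor m}]}_F^2$, force $\E[X^{\tensor m}]=\E[g^{\tensor m}]$ for all $m$ by an exact Pythagorean/Cauchy--Schwarz equality, and finish by moment determinacy. The one genuinely different ingredient is how you obtain the cross-term identity $\inprod{\E[X^{\tensor m}],\E[g^{\tensor m}]}=\norm{\E[g^{\tensor m}]}_F^2$. The paper proves the corresponding identity for an \emph{arbitrary} rotational symmetrization $X_{rot}$ of $X$, by inserting a Haar-random $Q\in O(n)$ and using $Q^TX\dist X_{rot}$; this yields a general orthogonality lemma for the eccentricity tensors $M^r_X-M^r_{X_{rot}}$ and the minimization theorem $\E\inprod{X,X'}^r\geq\E\inprod{X_{rot},X_{rot}'}^r$, of which the gaussian test is the special case $X_{rot}=g$. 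You instead exploit gaussianity of the symmetrization directly: conditioning on $X$ gives $\inprod{X,h}\dist\norm{X}_2\gamma$, and the radial hypothesis turns this into $\inprod{g,g'}$. Your route is more elementary and self-contained for this particular theorem; the paper's buys the general minimization statement, which it reuses elsewhere. The endgame also differs slightly, to your advantage: the paper deduces $X\dist g$ from equality of moment tensors via characteristic functions of subexponential vectors (so it must note that $\norm{X}_2\dist\norm{g}_2$ makes $X$ subexponential), whereas you only need that the one-dimensional gaussian is moment-determinate, followed by Cram\'er--Wold, which requires no a priori tail condition on $X$ beyond finiteness of moments.
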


We will prove this theorem in Section \ref{sec: proof of first gaussian test} 
via a decomposition of moment tensors and a resulting energy minimization property 
of the gaussian measure. The theorem guarantees that any non-gaussianness of $X$ is always captured by 
either the norm $\norm{X}_2$ or the dot product pairings $\inprod{X,X'}$. It is clear that the norm condition on its own is not sufficient to guarantee that $X \stackrel{d}{=} g$. For instance, let $\theta$ have any non-uniform distribution on the sphere. Then $\norm{g}_2\theta$ has the same radial distribution as $g$, but is not itself gaussian.

This result by itself does not address the NGCA problem, in which 
we are looking for non-gaussian {\em directions} in the distribution of $X$.
To this end, we propose a matrix version of the first gaussian test. 
Pick a parameter $\alpha>0$ and consider the {\em test matrices} 
\begin{equation}		\label{eq: test matrices}
	\Phi_{X,\alpha} := \frac{1}{Z_\Phi} \, \E e^{-\alpha\norm{X}_2^2} X X^T 
	\quad \text{and} \quad
	\Psi_{X,\alpha} := \frac{1}{Z_\Psi} \, \E e^{-\alpha \inprod{X,X'}} X (X')^T,
\end{equation}
where the normalizing quantities $Z_\Phi = Z_{\Phi,X}(\alpha):= \E e^{-\alpha\norm{X}_2^2}$ 
and $Z_\Psi = Z_{\Psi,X}(\alpha) := \E e^{-\alpha \inprod{X,X'}}$ resemble partition functions in statistical mechanics.

For a standard gaussian random vector $g$, a straightforward computation (see Lemma \ref{lem: formula for Phi and Psi for gaussian}) shows that both test matrices are multiples of the identity, namely
\begin{equation}		\label{eq: test matrices for gaussian}
  \Phi_{g,\alpha} = (2\alpha+1)^{-1} I_n
  \quad \text{and} \quad
  \Psi_{g,\alpha} = \alpha(\alpha^2-1)^{-1} I_n.
\end{equation}
Our second test guarantees that the non-gaussianness of $X$ is captured by 
one of the test matrices, and moreover that their eigenvectors reveal the non-gaussian directions of $X$.

\begin{theorem}[Second gaussian test] \label{thm: second gaussian test}
  Consider a random vector $X$ which follows the isotropic NGCA model \eqref{NGCA model},
  in which the projection $\tilde{X}$ of $X$ onto $E$ is not gaussian. 
  Then, for any $\abs*{\alpha}$ small enough, 
  either $\Phi_{X,\alpha}$ has an eigenvalue not equal to $(2\alpha+1)^{-1}$ 
  or $\Psi_{X,\alpha}$ has an eigenvalue not equal to $\alpha(\alpha^2-1)^{-1}$. 
  Furthermore, all eigenvectors corresponding to such eigenvalues lie in $E$.\footnote{The matrix $\Phi_{X,\alpha}$ always exists, but when $\tilde{X}$ is not subgaussian (i.e. can be rescaled so that marginals have tails lighter than a standard gaussian), $\Psi_{X,\alpha}$ may not be well-defined even for small $\alpha$. In that case, $\norm{\tilde{X}}_2$ has a different distribution from $\norm{g}_2$, so that $\Phi_{X,\alpha}$ has non-gaussian eigenvalues. We can hence think of $\Phi_{X,\alpha}$ as the primary test matrix, and $\Psi_{X,\alpha}$ being an auxiliary that is only required in hard (effectively adversarial) cases.}
\end{theorem}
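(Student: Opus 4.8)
The plan is to reduce the matrix statement to the scalar First Gaussian Test (Theorem \ref{thm: first gaussian test}) applied not to $X$ itself but to its one-dimensional projections, while using the product structure $X = (\tilde X, g)$ of the NGCA model to control how the test matrices decompose along $E \oplus E^\perp$. The first step is to establish the block structure: since $\tilde X$ and $g$ are independent and $g$ is a standard gaussian on $E^\perp$, I would compute $\Phi_{X,\alpha}$ and $\Psi_{X,\alpha}$ directly. For $\Phi$, write $\|X\|_2^2 = \|\tilde X\|_2^2 + \|g\|_2^2$, so $e^{-\alpha\|X\|_2^2}$ factors; taking the expectation of $XX^T$ against this weight, the $E$-$E^\perp$ cross block vanishes by the mean-zero property of $g$, the $E^\perp$ block is a scalar multiple of $I_{E^\perp}$ by rotational invariance of the gaussian (essentially the computation behind \eqref{eq: test matrices for gaussian}), and that scalar is exactly $(2\alpha+1)^{-1}$ after dividing by $Z_\Phi$ — because the $\tilde X$-dependent factors cancel between numerator and denominator. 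The same factorization works for $\Psi$ using $\inprod{X,X'} = \inprod{\tilde X,\tilde X'} + \inprod{g,g'}$. This shows that on $E^\perp$ both test matrices already agree with the gaussian values, so every ``bad'' eigenvalue must have its eigenvector in $E$: that gives the ``furthermore'' clause almost for free, and reduces everything to understanding the $E$-blocks $\Phi_{\tilde X,\alpha}$ and $\Psi_{\tilde X,\alpha}$ (of the whitened non-gaussian part in $\R^d$).

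The second step is the core: show that if both $d\times d$ blocks are scalar multiples of the identity with the gaussian scalars for all small $\alpha$, then $\tilde X$ is gaussian, contradicting the hypothesis. Fix a unit vector $u \in E$ and consider the scalar random variable $Y := \inprod{\tilde X, u}$. The condition $\Phi_{\tilde X,\alpha} = (2\alpha+1)^{-1} I_d$ for all small $\alpha$ says $\E e^{-\alpha\|\tilde X\|_2^2} Y^2$ is a fixed multiple of $\E e^{-\alpha\|\tilde X\|_2^2}$; differentiating repeatedly in $\alpha$ at $0$ (legitimate since $\tilde X$ is subgaussian in the regime where $\Psi$ exists, and by a truncation/analyticity argument otherwise) yields $\E \|\tilde X\|_2^{2k} Y^2 = c_k \, \E\|\tilde X\|_2^{2k}$ for all $k$ — a family of ``conditional-on-radius'' second-moment identities. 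Together with the analogous identities from $\Psi$ involving $\inprod{\tilde X,\tilde X'}$, I want to match these against the moment-tensor/energy-minimization machinery that the paper uses to prove Theorem \ref{thm: first gaussian test}. Concretely, the cleanest route is: the $\Phi$-identities force $\|\tilde X\|_2$ to have the same distribution as $\|g_d\|_2$ (a $\chi_d$ variable) — one extracts this by noting $\E e^{-\alpha\|\tilde X\|_2^2}(\|\tilde X\|_2^2 - \text{something})$ vanishes, pinning down all moments of $\|\tilde X\|_2^2$ — and the $\Psi$-identities force $\inprod{\tilde X,\tilde X'}$ to match $\inprod{g_d,g_d'}$; then Theorem \ref{thm: first gaussian test} applied in $\R^d$ finishes it.

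The main obstacle I anticipate is the ``for any $|\alpha|$ small enough'' quantifier combined with the possible non-existence or non-analyticity of $Z_\Psi$: I need the scalar identities to hold for an interval of $\alpha$ around $0$ so that I can differentiate and recover all moments, but $Z_\Psi(\alpha) = \E e^{-\alpha\inprod{\tilde X,\tilde X'}}$ only converges when $\tilde X$ is at least as light-tailed as gaussian. The footnote already signals the resolution — when $\Psi$ genuinely fails to exist, $\|\tilde X\|_2$ cannot match $\|g\|_2$, so $\Phi$ alone has a non-gaussian eigenvalue and we are done — but turning this into a clean case split, and in the good case verifying that $Z_\Phi, Z_\Psi$ and the relevant expectations are real-analytic in $\alpha$ on a neighborhood of $0$ (so that ``equal for small $\alpha$'' upgrades to ``all derivatives equal at $0$'' hence ``all moments match''), is where the real care is needed. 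A secondary technical point is making sure the differentiation-under-the-expectation is justified uniformly, which I would handle with dominated convergence using the subgaussian tail bound on $\tilde X$.
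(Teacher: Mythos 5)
Your proposal is correct and matches the paper's argument in all essentials: block-diagonalization of the test matrices via independence and the symmetry of $g$ (the paper's Lemma \ref{Phi and Psi diagonalization}), which gives the ``furthermore'' clause; the identity $\tr{\Phi_{\tilde{X},\alpha}} = -(\log Z_{\Phi,\tilde{X}})'(\alpha)$ (Lemma \ref{lem: trace of Phi and Psi}); analyticity of the partition functions near $0$ so that agreement for small $\alpha$ upgrades to agreement of all moments of $\norm{\tilde{X}}_2^2$ and $\inprod{\tilde{X},\tilde{X}'}$; and a final appeal to Theorem \ref{thm: first gaussian test} in $\R^d$. The paper runs this forward (a non-gaussian $\tilde{X}$ forces one partition function, hence one trace, hence at least one eigenvalue of the corresponding $E$-block, to deviate) whereas you argue the contrapositive, but the ingredients, including the treatment of the non-subgaussian case flagged in the footnote, are the same.
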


In Section \ref{sec: proof of second gaussian test}, we will show how to derive the second gaussian test from the first using a block diagonalization formula for each of the matrices $\Phi_{X,\alpha}$ and $\Psi_{X,\alpha}$. Again, it is easy to see that $\Phi_{X,\alpha}$ is not sufficient by itself to identify non-gaussian directions: Take $X = \norm{g}_2\theta$ as before, and this time that assume that $\theta$ is uniform on $\braces{\pm e_i}_{1=1}^N$. The symmetry implies that $\Phi_{X,\alpha}$ is a scalar matrix, and computing its trace shows that it is equal to $(2\alpha+1)^{-1} I_n$.

For simplicity, we stated both gaussian tests for population rather than for finite samples;
they involve taking expectations over the entire distribution of $X$ which is typically unknown in practice.
However, both tests are quite robust and work provably well on finite (polynomially large) samples. 
Robust versions of gaussian tests can be formulated in terms of {\em deviations from gaussian moments}
\begin{equation} \label{eq: D_X}
D_{X,r} := \sup_{v \in S^{n-1}} \abs*{\E{\inprod{X,v}^r} - \E{\inprod{g,v}^r}}.
\end{equation}
  
Note that by the classical moment problem, if $D_{X,r}=0$ for all positive integers $r$, 
then $X$ has the same distribution as $g$, i.e. the standard gaussian distribution. 
The magnitude of $D_{X,r}$ can thus serve as a convenient quantitative measure 
of non-gaussianness of $X$. 

\begin{theorem}[First gaussian test, robust] \label{thm: first gaussian test, robust}
  There is a universal constant $c>0$ such that for each positive integer $r$, 
  we have either
  $$
  \abs*{\E{\norm{X}_2^r}-\E{\norm{g}_2^r}} \geq c \eta_r^2 /\gamma_r
  \quad \text{or} \quad
  \abs*{\E\inprod{X,X'}^r - \E\inprod{g,g'}^r} \geq c \eta_r^2.
  $$
  Here $\gamma_r = \E|\inprod{g,v}|^r$ for an arbitrary vector $v \in S^{n-1}$ 
  and $\eta_r = \min\braces{D_{X,r}, \gamma_r}$.  
\end{theorem}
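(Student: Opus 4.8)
The plan is to reduce the whole statement to a single identity relating the three quantities of interest through moment tensors. Write $T := \E\, X^{\tensor r} - \E\, g^{\tensor r} \in (\R^n)^{\tensor r}$, with the standard inner product $\inprod{\cdot,\cdot}$ and Frobenius norm $\norm{\cdot}_F$ on $(\R^n)^{\tensor r}$. If $\E\norm{X}_2^r = \infty$ the first alternative holds vacuously, so assume all moments below are finite. Two observations do the work. First, since $\inprod{X^{\tensor r}, v^{\tensor r}} = \inprod{X,v}^r$, contracting $T$ against a unit rank-one tensor gives $\inprod{T, v^{\tensor r}} = \E\inprod{X,v}^r - \E\inprod{g,v}^r$ for $v \in S^{n-1}$, so $D_{X,r} = \sup_{v \in S^{n-1}}\abs{\inprod{T,v^{\tensor r}}}$, and since $\norm{v^{\tensor r}}_F = 1$, Cauchy--Schwarz yields
\[
\norm{T}_F \;\geq\; D_{X,r} \;\geq\; \eta_r .
\]
Second, $\inprod{X,X'}^r = \inprod{X^{\tensor r}, (X')^{\tensor r}}$ together with the independence of $X$ and $X'$ gives $\E\inprod{X,X'}^r = \norm{\E X^{\tensor r}}_F^2$ (and likewise $\E\inprod{g,g'}^r = \norm{\E g^{\tensor r}}_F^2$), so writing $\E X^{\tensor r} = \E g^{\tensor r} + T$ and expanding the square,
\[
\E\inprod{X,X'}^r - \E\inprod{g,g'}^r \;=\; \norm{T}_F^2 \;+\; 2\inprod{T, \E g^{\tensor r}} .
\]

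Next I would evaluate the cross term. Letting $g$ denote a gaussian independent of $X'$, one has $\inprod{T, \E g^{\tensor r}} = \E\inprod{g,X'}^r - \E\inprod{g,g'}^r$; conditioning on $X'$, the variable $\inprod{g,X'}$ is centered gaussian with variance $\norm{X'}_2^2$, hence $\E\sqbracket{\inprod{g,X'}^r \mid X'} = \norm{X'}_2^r\,\E g_1^r$. Therefore
\[
\E\inprod{X,X'}^r - \E\inprod{g,g'}^r \;=\; \norm{T}_F^2 \;+\; 2\,(\E g_1^r)\paren*{\E\norm{X}_2^r - \E\norm{g}_2^r},
\]
where $g_1 \sim \N{0}{1}$, so $\E g_1^r = 0$ when $r$ is odd and $\E g_1^r = \gamma_r$ when $r$ is even.

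Given this identity the theorem follows with $c = \tfrac13$ by elementary estimates. If $r$ is odd the cross term vanishes, so $\abs{\E\inprod{X,X'}^r - \E\inprod{g,g'}^r} = \norm{T}_F^2 \geq \eta_r^2$, which is the second alternative. If $r$ is even, then either $\abs{\E\norm{X}_2^r - \E\norm{g}_2^r} \geq \tfrac13\,\eta_r^2/\gamma_r$, which is the first alternative, or else the cross term has absolute value at most $\tfrac23\eta_r^2$, and the reverse triangle inequality together with $\norm{T}_F^2 \geq \eta_r^2$ gives
\[
\abs*{\E\inprod{X,X'}^r - \E\inprod{g,g'}^r} \;\geq\; \norm{T}_F^2 - \tfrac23\eta_r^2 \;\geq\; \tfrac13\eta_r^2,
\]
the second alternative.

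The arithmetic here is routine; the one load-bearing point is the observation that $\E\inprod{X,X'}^r$ is the squared Frobenius norm of a moment tensor --- an ``energy'' --- which is exactly what makes the radial moments $\E\norm{X}_2^r$ the natural companion quantity, and it is also the mechanism behind the non-robust Theorem~\ref{thm: first gaussian test}: when $X$ shares the radial distribution of $g$ the cross term is zero, so $\E\inprod{X,X'}^r = \E\inprod{g,g'}^r$ forces $\norm{T}_F = 0$ for every $r$, i.e.\ all moment tensors of $X$ and $g$ agree, and the classical moment problem concludes. The only steps requiring a little care are the reduction to the finite-moment case and the odd/even split, which together are what keep the constant genuinely independent of $r$ and $n$.
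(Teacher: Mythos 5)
Your proof is correct, and it takes a genuinely different route from the paper's. The paper factors $M^r_X-M^r_g$ through the rotational symmetrization $X_{rot}$: it writes $M^r_X-M^r_g=(M^r_{X_{rot}}-M^r_g)+E^r_X$, uses the orthogonality $\inprod{E^r_X,M^r_{X_{rot}}}=0$ to express $\norm{E^r_X}_2^2$ via $\E\inprod{X,X'}^r$ and $\E\norm{X}_2^r$, bounds $D_{X,r}$ by the two pieces (Lemma \ref{lem: moment deviation}(c)), and then runs a balancing argument involving the spherical moments $\E\inprod{\theta,\theta'}^r$ (Lemma \ref{lem: comparison for even moments}, which gives $c=15/64$ in the even case). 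You instead keep $T=M^r_X-M^r_g$ whole, get $D_{X,r}\le\norm{T}_F$ from a one-line Cauchy--Schwarz, and evaluate the cross term $\inprod{T,M^r_g}$ exactly by gaussian conditioning, arriving at the clean identity
\begin{equation*}
\E\inprod{X,X'}^r-\E\inprod{g,g'}^r=\norm{T}_F^2+2\,(\E g_1^r)\paren*{\E\norm{X}_2^r-\E\norm{g}_2^r},
\end{equation*}
which I have checked and which is correct (both sides equal $\norm{M^r_X}_F^2-\norm{M^r_g}_F^2$, and $\inprod{M^r_X,M^r_g}=\E\norm{X}_2^r\cdot\E g_1^r$ by conditioning). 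This single formula replaces Lemmas \ref{lem: moment deviation}(c) and \ref{lem: comparison for even moments}, unifies the odd and even cases (odd $r$ being the degenerate case $\E g_1^r=0$), gives the slightly better constant $c=1/3$, and handles the infinite-moment case explicitly. It even yields a marginally stronger conclusion: your argument uses only $\norm{T}_F\ge D_{X,r}$ and never the truncation $\eta_r\le\gamma_r$, so the theorem holds with $D_{X,r}$ in place of $\eta_r$ on both right-hand sides; the paper needs $\eta_r=\min\braces{D_{X,r},\gamma_r}$ only because Lemma \ref{lem: comparison for even moments} requires $\delta\le 1$. What the paper's heavier machinery buys is reuse elsewhere: the symmetrization/eccentricity decomposition is also what drives Theorem \ref{thm: inner product moments minimized by rotational symmetrization} and the non-robust test in the generality of arbitrary radial distributions, whereas your cross-term computation is specific to comparison against the gaussian.
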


As with the non-robust version, we will prove this theorem in Section \ref{sec: proof of first gaussian test} using a decomposition of moment tensors.
There is a similar robust version of the second gaussian test, which we will skip here
but state and prove in Section \ref{sec: proof of second gaussian test}.

Robustness allows us to use finite sample averages instead
of expectations in the gaussian tests, which is critical for practical applications. 
Indeed, consider a sample $X_1,\ldots,X_N, X'_1,\ldots,X'_N$ 
of $2N$ i.i.d. realizations of a random variable $X$.
We can then define the sample versions of the test matrices in \eqref{eq: test matrices} 
in an obvious way:
\begin{equation}
	\hat{\Phi}_{X,\alpha} = \frac{1}{\hat{Z}_\Phi} \, \sum_{i=1}^N e^{-\alpha\norm{X_i}_2^2} X_i X_i^T 
	\quad \text{and} \quad
	\hat{\Psi}_{X,\alpha} = \frac{1}{\hat{Z}_\Psi} \, \sum_{i=1}^N e^{-\alpha \inprod{X_i,X'_i}} (X_i (X'_i)^T + X_i'X_i^T),
\end{equation}
with the normalizing quantities $\hat{Z}_\Phi := \sum_{i=1}^N e^{-\alpha\norm{X_i}_2^2}$
and $\hat{Z}_\Psi := 2\sum_{i=1}^N e^{-\alpha \inprod{X_i,X'_i}}$.

The second gaussian test leads to the following straightforward algorithm for solving NGCA problem
based on a finite sample:
Use the sample to compute the test matrices $\hat{\Phi}_{X,\alpha}$ and $\hat{\Psi}_{X,\alpha}$;
select the eigenspaces corresponding to the eigenvalues that significantly 
deviate from the gaussian eigenvalues.
Then all vectors in both eigenspaces will be close to the non-gaussian subspace $E$ which we are trying to find. Let us state this algorithm and its guarantees precisely. 

\begin{algorithm}[H]                      
\caption{{\sc Reweighted PCA($X$,$\alpha_1$,$\alpha_2$,$\beta_1$,$\beta_2$)}}     
\begin{algorithmic}[1]              
    \REQUIRE Data points $X_1,\ldots,X_N, X'_1,\ldots,X'_N$, 
    	scaling parameters $\alpha_1,\alpha_2 \in \R$, 
	tolerance parameters $\beta_1,\beta_2 > 0$.
    \ENSURE Two estimates $\hat{E}_\Phi$ and $\hat{E}_\Psi$ for $E$.
    \STATE Compute test matrices $\hat{\Phi}_{X,\alpha_1}$ and $\hat{\Psi}_{X,\alpha_2}$.
    \STATE Compute the eigenspace $\hat{E}_\Phi$ of $\hat{\Phi}_{X,\alpha_1}$ 
    	corresponding to the eigenvalues that are farther than $\beta_1$ from the value $(2\alpha_1+1)^{-1}$.
    \STATE Compute the eigenspace $\hat{E}_\Psi$ of $\hat{\Psi}_{X,\alpha_2}$ 
    	corresponding to the eigenvalues that are farther than $\beta_2$ from 
	the value $\alpha_2(\alpha_2^2-1)^{-1}$.
\end{algorithmic}
\end{algorithm}

\begin{theorem}[Guarantee for Reweighted PCA] \label{PCA guarantee}
  Let $X$ be a subgaussian\footnote{For a formal definition of subgaussian random vectors and an introduction to their properties, please see \cite{Vershynin2011b}.} random vector which follows the isotropic NGCA model \eqref{NGCA model}, 
  and with subgaussian norm bounded above by $K \geq 1$. 
  Let $r$ be the integer such that $D_{\tilde{X},r} =: D > 0$ and $D_{\tilde{X},r'} = 0$ for all $r' < r$. Then for any $\delta,\epsilon \in (0,1)$, with probability at least $1-\delta$, if we run {\sc Reweighted PCA} with a choice of parameters $\alpha_1,\alpha_2,\beta_1,\beta_2$ that is optimal up to constant multiples, at least one of $\hat{E}_\Phi$ and $\hat{E}_\Psi$ is non-trivial, and any unit vector in their union is $\epsilon$-close to one in $E$, so long as the sample size $N$ is greater than $\text{poly}_r(n,1/\epsilon,\log(1/\delta),1/D,K)$. Here, $\text{poly}_r$ is a polynomial whose total degree depends linearly on $r$. 
\end{theorem}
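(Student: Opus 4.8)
The plan is to combine three ingredients: a population-level spectral gap provided by the robust version of the second gaussian test; operator-norm concentration of the sample test matrices $\hat\Phi_{X,\alpha_1}$ and $\hat\Psi_{X,\alpha_2}$ around their population versions; and standard matrix perturbation theory---Weyl's inequality and the Davis--Kahan $\sin\Theta$ theorem---to turn matrix closeness into closeness of the selected eigenspaces to $E$. We will run {\sc Reweighted PCA} with a common scale $\alpha_1 = \alpha_2 = \alpha^\ast$ and common tolerance $\beta_1 = \beta_2 = \beta$, both chosen as explicit functions of $n, D, r, K$ only; the accuracy $\epsilon$ and confidence $\delta$ enter solely through $N$. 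Applying the robust second gaussian test to $\tilde X$---which has $D_{\tilde X, r} = D > 0$ with all lower directional moments gaussian---yields, for every $\alpha$ with $0 < \abs{\alpha} \le \alpha_0(K)$, an explicit $\rho = \rho(\alpha, D, r, n, K) > 0$ such that either $\Phi_{X,\alpha}$ has an eigenvalue at distance $\ge \rho$ from $(2\alpha+1)^{-1}$, or $\Psi_{X,\alpha}$ has one at distance $\ge \rho$ from $\alpha(\alpha^2-1)^{-1}$. Moreover, the block-diagonalization of $\Phi_{X,\alpha}$ and $\Psi_{X,\alpha}$ along $E \oplus E^\perp$ (their $E^\perp$-blocks being exactly these gaussian scalar matrices) forces every eigenvector whose eigenvalue is at positive distance from the gaussian value to lie in $E$. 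Since we do not know which matrix carries the gap, we set $\beta := \rho/2$ and retain both estimates $\hat E_\Phi$ and $\hat E_\Psi$.

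For the concentration step, write $\hat\Phi_{X,\alpha^\ast} = S_N/T_N$ with $S_N = \frac1N \sum_i e^{-\alpha^\ast\norm{X_i}_2^2} X_i X_i^T$ and $T_N = \frac1N \sum_i e^{-\alpha^\ast\norm{X_i}_2^2}$, and analogously $\hat\Psi_{X,\alpha^\ast} = S_N'/T_N'$ from the symmetrized paired sums. Both $\E S_N = (\E e^{-\alpha^\ast\norm{X}_2^2})\,\Phi_{X,\alpha^\ast}$ and $\E T_N = \E e^{-\alpha^\ast\norm{X}_2^2}$ carry the same $e^{-\alpha^\ast n}$ normalization, which cancels in the ratio; the delicate point is that the \emph{relative} fluctuation of a reweighted empirical average is under control only when the weight ratio $e^{\pm\alpha^\ast(\norm{X}_2^2 - n)}$ is of constant order, i.e.\ when $\abs{\alpha^\ast} = O(1/(K^2\sqrt n))$ (and similarly $e^{\pm\alpha^\ast\inprod{X,X'}}$ for $\Psi$, since $X$ is isotropic and subgaussian so $\inprod{X,X'}$ is mean-zero subexponential of the same order). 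We therefore take $\abs{\alpha^\ast} \asymp 1/(K^2\sqrt n)$, also inside the window $\alpha_0(K)$. For such $\alpha^\ast$: a Bernstein bound gives $\abs{T_N - \E T_N} \le \tau\,\E T_N$ with probability $1-\delta/8$ once $N \gtrsim \log(1/\delta)/\tau^2$; and, after truncating $\norm{X_i}_2^2$ at its high-probability ceiling $\lesssim n + K^2\sqrt{n\log(N/\delta)}$, a matrix Bernstein inequality for the i.i.d.\ positive-semidefinite summands gives $\norm{S_N - \E S_N} \le \tau\,\E T_N$ once $N \gtrsim n K^2\log(n/\delta)/\tau^2$ (the truncation costs only a sub-polynomial factor $e^{O(\sqrt{\log(N/\delta)})}$). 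The quotient rule then yields $\norm{\hat\Phi_{X,\alpha^\ast} - \Phi_{X,\alpha^\ast}} \lesssim_K \tau$, and the identical argument gives $\norm{\hat\Psi_{X,\alpha^\ast} - \Psi_{X,\alpha^\ast}} \lesssim_K \tau$, all on one event of probability $\ge 1-\delta$.

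On this event, Weyl's inequality shows the gapped population eigenvalue, at distance $\ge 2\beta$ from the gaussian value, moves by at most $\lesssim_K \tau < \beta/2$, so it is still selected, making the estimate ($\hat E_\Phi$ or $\hat E_\Psi$) attached to the gapped population matrix non-trivial. For closeness to $E$, note that ``distance from the gaussian value $\lambda_g$'' is not monotone in the eigenvalue, so we pass to $M := (\Phi_{X,\alpha^\ast} - \lambda_g I)^2$ and $\hat M := (\hat\Phi_{X,\alpha^\ast} - \lambda_g I)^2$, which share eigenvectors with $\Phi_{X,\alpha^\ast}, \hat\Phi_{X,\alpha^\ast}$ and satisfy $\norm{\hat M - M} \lesssim_K \tau$ (their spectra are bounded by $O(K^2)$). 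Then $\hat E_\Phi$ is exactly the eigenspace of $\hat M$ for eigenvalues $> \beta^2$, the eigenspace of $M$ for eigenvalues $\ge \beta^2/4$ lies inside $E$, and these spectral sets are separated by a gap $> 3\beta^2/4$; Davis--Kahan then gives $\mathrm{dist}(v, E) \lesssim_K \tau/\beta^2$ for every unit $v \in \hat E_\Phi$, and likewise for $\hat E_\Psi$ (with its own gaussian value). Choosing $\tau \asymp \epsilon\beta^2 = \epsilon\rho^2/4$ places every unit vector of $\hat E_\Phi \cup \hat E_\Psi$ within $\epsilon$ of $E$. Finally, the robust second gaussian test at scale $\alpha^\ast \asymp 1/(K^2\sqrt n)$ gives $\rho^{-1} \le \mathrm{poly}_r(n, K, 1/D)$ with $n$-degree growing linearly in $r$ (the non-gaussianness of size $D$ first surfaces at the $\Theta(r)$-th Taylor coefficient in $\alpha$ of the reweighted matrices), so substituting $\tau \asymp \epsilon/\mathrm{poly}_r(n, K, 1/D)$ into the thresholds above yields $N \ge \mathrm{poly}_r(n, 1/\epsilon, \log(1/\delta), 1/D, K)$ with total degree linear in $r$.

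The main obstacle is the tension between these two scales: the population gap $\rho$ decays like $(\alpha^\ast)^{\Theta(r)}$ as $\alpha^\ast \to 0$, whereas the reweighted sample keeps a usable effective size only when $\abs{\alpha^\ast} = O(1/(K^2\sqrt n))$, and pinning $\alpha^\ast$ at this threshold is what forces the $n^{\Theta(r)}$ (hence linear-in-$r$ degree) sample bound. The remaining nontrivial ingredient---that the robust second gaussian test can be made quantitative with a constant controlled by $D$ itself for \emph{every} $r$ (including odd $r$, where one must preclude all even moments being gaussian while an odd one is not)---is the companion result proved in Section~\ref{sec: proof of second gaussian test}, invoked here as a black box.
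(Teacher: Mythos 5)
Your overall architecture --- population gap from the robust second gaussian test, operator-norm concentration of $\hat{\Phi}$ and $\hat{\Psi}$, then Weyl plus Davis--Kahan --- is exactly the paper's route. One genuine improvement: passing to $M=(\Phi_{X,\alpha}-\lambda_g I)^2$ to make ``distance from the gaussian eigenvalue'' monotone is cleaner than the paper's treatment, which splits the selected eigenvalues into those above and below $\lambda_g$ and uses a pigeonhole argument to manufacture an eigengap of order $\beta/d$ between consecutive eigenvalues; your squaring trick gives a clean spectral separation of order $\beta^2$ directly and sidesteps that bookkeeping.

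The genuine gap is your concentration claim for $\hat{\Phi}_{X,\alpha^\ast}$ at the scale $\abs{\alpha^\ast}\asymp 1/(K^2\sqrt{n})$. Your argument rests on the weight ratio $e^{-\alpha^\ast(\norm{X}_2^2-n)}$ being of constant order, but for a general subgaussian isotropic $X$ the squared norm $\norm{X}_2^2$ need not concentrate around $n$: subgaussianity controls one-dimensional marginals, not the norm, and $\tilde{X}$ may, for example, place mass $p=e^{-\sqrt{n}}$ at a point where $\norm{X}_2^2$ is far below $n$. Then the weight $e^{-\alpha^\ast\norm{X}_2^2}\le 1$ exceeds its mean $Z_\Phi(\alpha^\ast)\asymp e^{-\alpha^\ast n}=e^{-\Theta(\sqrt{n})}$ by a factor $e^{\Theta(\sqrt{n})}$ on an event of probability $e^{-\sqrt{n}}$, so the relative second moment of a single summand is $e^{\Theta(\sqrt{n})}$ and no polynomial $N$ gives relative accuracy $\tau$. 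Your proposed truncation of $\norm{X_i}_2^2$ at $n+K^2\sqrt{n\log(N/\delta)}$ is a truncation from \emph{above}, which controls abnormally small weights; the danger is abnormally \emph{large} weights from $\norm{X_i}_2^2\ll n$, which it does not touch. (The situation for $\Psi$ is different: there the typical weight is $1$ and the scale $1/(K^2\sqrt{n})$ is essentially right up to logarithms, since $\inprod{X,X'}$ genuinely is mean-zero subexponential of order $K^2\sqrt{n}$.) The fix is to retreat to $\abs{\alpha_1}\lesssim 1/(K^2 n)$, as in Theorem \ref{Phi concentration}, where all weights lie in $(0,1]$ with mean at least $1/2$ and Hoeffding plus covariance estimation apply directly; the population gap then degrades from $\alpha^{r-1}\asymp n^{-(r-1)/2}$ to $n^{-(r-1)}$, which still yields a sample bound polynomial in all parameters with total degree linear in $r$, so the theorem as stated survives.
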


 The idea of the proof is to use Davis-Kahan eigenvector perturbation theory \cite{Davis1970a}. The robust version of the second gaussian test exerts the existence of a gap between gaussian and non-gaussian eigenvalues. By bounding the deviation of the test matrix estimators $\hat{\Phi}_{X,\alpha}$ and $\hat{\Psi}_{X,\alpha}$ from their expectation, we can thus show that their eigenstructures are similar. We will prove this theorem formally in Section \ref{sec: proof of guarantee}.

Note that {\sc Reweighted PCA} is a simple spectral algorithm, which obviously runs in polynomial time. The name of the algorithm stems from the first test matrix, which can be seen as a PCA matrix for the reweighted sample obtained when each point $X_i$ is given the weight $e^{-\alpha\norm{X_i}_2^2}$. As mentioned earlier in the section, $\Phi_{X,\alpha}$ reveals at least one non-gaussian direction in all but adversarial situations, and so can be considered the primary test matrix.

\subsection{Organization of paper and notation}

In Section \ref{sec: proof of first gaussian test}, we will prove the first gaussian test and its robust version. In Section \ref{sec: proof of second gaussian test}, we will prove the second gaussian test and state a robust version needed for proving our guarantee for Reweighted PCA. The guarantee is then proven in Section \ref{sec: proof of guarantee}. For the sake of space, many technical details are deferred to the appendix. Throughout the paper, $C$ and $c$ denote absolute constants whose value may change from line to line. We let $g_n$ denote the standard gaussian vector in $\R^n$. The subscript is omitted whenever the dimension is obvious.

\section{Proof of the first gaussian test} \label{sec: proof of first gaussian test}

The first gaussian test is based on the first author's work on eccentricity tensors in \cite{Tan2016}. For completeness, we will repeat the key arguments. The statements and proofs in this section are valid more generally for random variables $X$ with finite moments of all orders (not necessarily subgaussian).

Recall the following fact from linear algebra. For any positive integer $r$, we may identify the $r$-th tensor product $T^r(\R^n) = \R^n\tensor\cdots\tensor\R^n$ with $\R^{n^r}$ by picking as a basis the vectors $\braces{e_{i_1}\tensor e_{i_2}\tensor\cdots\tensor e_{i_r}}_{1\leq i_1,\ldots i_r \leq n}$. With this choice, the Euclidean inner product between any two pure tensors $u_1\tensor\cdots\tensor u_r$ and $v_1\tensor\cdots\tensor v_r$ can be written as
\[
\inprod{u_1\tensor\cdots\tensor u_r,v_1\tensor\cdots\tensor v_r} = \prod_{i=1}^r\inprod{u_i,v_i}.
\]
In particular, for power tensors $u^{\tensor r}$ and $v^{\tensor r}$, we have the formula $\inprod{u^{\tensor r},v^{\tensor r}} = \inprod{u,v}^r$.

Now let $X$ and $Y$ be two independent random vectors. The above formula allows us to rewrite the $r$-th moment of their inner product as an inner product between their $r$-th moment tensors. Namely, we have
\begin{align} \label{tensorid1}
\E{\inprod{X,Y}^r} = \E{\inprod{X^{\tensor r},Y^{\tensor r}}} = \inprod*{M^r_X,M^r_Y},
\end{align}
where we define $M^r_X := \E X^{\tensor r}$.
For independent copies $X$, $X'$ of the same random vector having distribution $\mu$, $M^r_X = M^r_{X'}$, so
\begin{align} \label{tensorid2}
\E{\inprod{X,X'}^r} = \norm{M^r_X}^2.
\end{align}

Next, for any random vector $X$, let $X_{rot}$ denote a random vector that is independent of $X$, has the same radial distribution as $X$, and whose distribution is rotationally invariant. We call $X_{rot}$ the \emph{rotational symmetrization} of $X$. Comparing the moment tensors of a random vector and those of its rotational symmetrization give rise to what we shall call eccentricity tensors. Specifically, for any positive integer $r$, we define the $r$-th \emph{eccentricity tensor} of $X$ to be $E^r_X = M^r_X - M^r_{X_{rot}}$.

Since $X \stackrel{d}{=} X_{rot}$ if and only if $X$ is rotationally invariant, we see that the eccentricity tensors of $X$ are quantitative measures of how far its distribution is from being rotationally invariant. This interpretation is further supported by the following observation.

\begin{lemma}[Orthogonality]
	The eccentricity tensors of a random vector $X$ are orthogonal to the moment tensors of its rotational symmetrization. In other words, for any positive integer $r$,
	\begin{align} \label{eq: orthogonality}
	\inprod{E^r_X, M^r_{X_{rot}}} = 0
	\quad \text{and} \quad
	\norm*{M^r_X}_2^2 = \norm*{M^r_{X_{rot}}}_2^2 + \norm*{E^r_X}_2^2.
	\end{align}
\end{lemma}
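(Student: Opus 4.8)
The plan is to prove the orthogonality relation by exploiting the key structural fact that $M^r_{X_{rot}}$ lies in the subspace of symmetric tensors that are, in a suitable sense, ``isotropic'' — i.e., invariant under the action of the orthogonal group $O(n)$ — whereas the eccentricity tensor $E^r_X$ is the component of $M^r_X$ that is orthogonal to that subspace. Concretely, I would first observe that for any rotation $R \in O(n)$, the random vector $R X_{rot}$ has the same distribution as $X_{rot}$, so $M^r_{X_{rot}} = \E (R X_{rot})^{\tensor r} = R^{\tensor r} M^r_{X_{rot}}$; that is, $M^r_{X_{rot}}$ is a fixed point of every operator $R^{\tensor r}$ acting on $T^r(\R^n)$. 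Denote by $V_r \subseteq T^r(\R^n)$ the linear subspace of such $O(n)$-invariant tensors. The claim is then precisely that $M^r_X - M^r_{X_{rot}} \perp V_r$, which is the statement that $M^r_{X_{rot}}$ is the orthogonal projection of $M^r_X$ onto $V_r$; the Pythagorean identity in \eqref{eq: orthogonality} follows immediately from this and $M^r_{X_{rot}} \in V_r$.

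To establish that $M^r_{X_{rot}}$ is the orthogonal projection of $M^r_X$ onto $V_r$, I would average over the rotation group. Let $\theta$ be uniform on the orthogonal group $O(n)$ with Haar measure, independent of $X$. Then the orthogonal projection $P_{V_r}$ onto $V_r$ is given by the group-averaging operator $P_{V_r} = \E_\theta \, \theta^{\tensor r}$ (a standard fact: this operator is self-adjoint since each $\theta^{\tensor r}$ is orthogonal and the set is closed under inverse, it is idempotent by invariance of Haar measure, and its range is exactly $V_r$). Applying this to $M^r_X$ gives $P_{V_r} M^r_X = \E_\theta \, \theta^{\tensor r} \E X^{\tensor r} = \E_{\theta, X} (\theta X)^{\tensor r}$. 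But $\theta X$ has the same radial distribution as $X$ and is rotationally invariant, so $\theta X \stackrel{d}{=} X_{rot}$ and hence $P_{V_r} M^r_X = M^r_{X_{rot}}$. Therefore $E^r_X = M^r_X - M^r_{X_{rot}} = (I - P_{V_r}) M^r_X$ lies in $V_r^\perp$, while $M^r_{X_{rot}} \in V_r$, which gives both the orthogonality $\inprod{E^r_X, M^r_{X_{rot}}} = 0$ and, by the Pythagorean theorem, the norm identity.

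The main obstacle is the clean justification that $P_{V_r} := \E_\theta \, \theta^{\tensor r}$ is genuinely the orthogonal projection onto $V_r$ — one must check self-adjointness, idempotency, and that the fixed-point space of $\{R^{\tensor r} : R \in O(n)\}$ coincides with the range of the averaging operator. These are standard representation-theoretic facts (the averaging trick), but they deserve a line or two: self-adjointness follows because $(\theta^{\tensor r})^T = (\theta^{-1})^{\tensor r}$ and $\theta^{-1}$ has the same Haar distribution as $\theta$; idempotency follows from the translation invariance of Haar measure; and any $T$ fixed by all $R^{\tensor r}$ satisfies $P_{V_r} T = T$, while conversely $P_{V_r} T$ is always fixed. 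A secondary technical point, already implicit above, is the identification $\theta X \stackrel{d}{=} X_{rot}$, which is immediate from the definition of $X_{rot}$ once one notes $\theta X$ has the correct (rotationally invariant) law and the same radial part as $X$. Everything else is bookkeeping with the tensor inner product identity $\inprod{u^{\tensor r}, v^{\tensor r}} = \inprod{u,v}^r$ recalled at the start of the section.
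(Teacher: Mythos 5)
Your proof is correct. It runs on the same engine as the paper's --- averaging over a Haar-random rotation $Q\in O(n)$ and using that $QX_{rot}\stackrel{d}{=}X_{rot}$ while $QX\stackrel{d}{=}X_{rot}$ --- but the packaging is genuinely different. The paper works entirely at the level of scalars: it computes $\E\inprod{X,X_{rot}}^r$, inserts $Q$, and flips it across the inner product via $\inprod{X,QX_{rot}}=\inprod{Q^TX,X_{rot}}$ to conclude $\inprod{M^r_X,M^r_{X_{rot}}}=\norm{M^r_{X_{rot}}}_2^2$ in three lines; the self-adjointness of $Q^{\tensor r}$ that you make explicit is present there only implicitly in that flip. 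You instead lift the averaging to the operator $P_{V_r}=\E_\theta\,\theta^{\tensor r}$ and prove the structurally stronger statement that $M^r_{X_{rot}}$ is the orthogonal projection of $M^r_X$ onto the subspace $V_r$ of $O(n)$-invariant tensors, so that $E^r_X\in V_r^\perp$. This buys more than the lemma asks for --- e.g.\ $E^r_X$ is orthogonal to $M^r_{Y_{rot}}$ for \emph{any} random vector $Y$, and the Pythagorean identity becomes a literal instance of the Pythagorean theorem for an orthogonal projection --- at the cost of having to verify the standard facts about the averaging operator (self-adjointness, idempotency, range $=V_r$), which you correctly identify and sketch. Both arguments are complete and sound; the paper's is shorter, yours is more conceptual and reusable.
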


\begin{proof}
	Let $Q$ be a random orthogonal matrix chosen according to the Haar measure on $O(n)$. For any fixed vector $v \in \R^n$, $Qv$ is uniformly distributed on the sphere of radius $\norm{v}_2$, so if $Y$ is any random vector independent of $Q$, applying $Q$ to $Y$ preserves its radial distribution but makes $QY$ rotationally invariant.
	
	Now choose $Q$ to be independent of $X$ and $X_{rot}$. Our previous discussion implies that $Q^TX \stackrel{d}{=} QX_{rot} \stackrel{d}{=} X_{rot}$.
	We use this to compute
	\begin{align}
	\E{\inprod{X,X_{rot}}^r} = \E{\inprod{X,QX_{rot}}^r} = \E{\inprod{Q^T X,X_{rot}}^r} = \E{\inprod{X_{rot}',X_{rot}}^r},
	\end{align}
	where $X_{rot}'$ is an independent copy of $X_{rot}$. We may then apply identities \eqref{tensorid1} and \eqref{tensorid2} to rewrite the above equation as
	\begin{align}
	\inprod*{M^r_X,M^r_{X_{rot}}} = \inprod*{M^r_{X_{rot}},M^r_{X_{rot}}}.
	\end{align}
	Subtracting the right hand side from the left hand side gives \eqref{eq: orthogonality}.
\end{proof}

\begin{theorem}\label{thm: inner product moments minimized by rotational symmetrization}
	Let $X$ be a random vector in $\R^n$ with finite moments of all orders. Then
	\begin{enumerate}[nosep]
		\item[a)] (Minimization) If $X'$ is an independent copy of $X$, and $X_{rot}, X_{rot}'$ are independent copies of its rotational symmetrization, we have
		\begin{align} \label{minimization}
		\E{\inprod{X,X'}^r} \geq \E{\inprod{X_{rot},X_{rot}'}^r}
		\end{align}
		for any positive integer $r$.
		\item[b)] (Uniqueness) Furthermore, if equality holds in \eqref{minimization} for all $r$ and we further assume that $X$ has a subexponential distribution\footnote{For an introduction to the properties of subexponential distributions, we again refer the reader to \cite{Vershynin2011b}.}, then $X$ is rotationally invariant.
	\end{enumerate}
\end{theorem}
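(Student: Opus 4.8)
The plan is to read part (a) off the Orthogonality Lemma essentially for free, and to obtain part (b) by combining the vanishing of the eccentricity tensors with a moment-determinacy argument and the Cram\'er--Wold device.

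For part (a), identity \eqref{tensorid2} gives $\E\inprod{X,X'}^r = \norm{M^r_X}_2^2$ and, applied to the rotational symmetrization, $\E\inprod{X_{rot},X_{rot}'}^r = \norm{M^r_{X_{rot}}}_2^2$. The second equation in \eqref{eq: orthogonality} then reads $\norm{M^r_X}_2^2 = \norm{M^r_{X_{rot}}}_2^2 + \norm{E^r_X}_2^2 \ge \norm{M^r_{X_{rot}}}_2^2$, which is precisely \eqref{minimization}. I note that no parity considerations are needed here: both sides are squared Euclidean norms of moment tensors, so the inequality holds verbatim for odd $r$ as well.

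For part (b), the computation above shows that equality in \eqref{minimization} for a given $r$ is equivalent to $\norm{E^r_X}_2 = 0$, i.e. $M^r_X = M^r_{X_{rot}}$. Assuming equality for every $r$, the random vectors $X$ and $X_{rot}$ therefore have identical moment tensors of all orders. Fix $v \in S^{n-1}$. Using $\inprod{X,v}^r = \inprod{X^{\tensor r}, v^{\tensor r}}$ and taking expectations, $\E\inprod{X,v}^r = \inprod{M^r_X, v^{\tensor r}} = \inprod{M^r_{X_{rot}}, v^{\tensor r}} = \E\inprod{X_{rot},v}^r$ for all $r$, so the one-dimensional random variables $\inprod{X,v}$ and $\inprod{X_{rot},v}$ share all their moments. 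Now the subexponential hypothesis is invoked: every one-dimensional marginal of $X$ is subexponential with norm controlled by that of $X$, hence $\E\abs{\inprod{X,v}}^r \le (Cr)^r$ for all $r$, which is far more than enough for Carleman's condition. Thus the one-dimensional moment problem for $\inprod{X,v}$ is determinate, forcing $\inprod{X,v} \stackrel{d}{=} \inprod{X_{rot},v}$. Since $v \in S^{n-1}$ was arbitrary, the Cram\'er--Wold theorem yields $X \stackrel{d}{=} X_{rot}$, and as $X_{rot}$ is by construction rotationally invariant, so is $X$.

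The only genuinely delicate point is the last step: equality of all moments does not imply equality of laws for general random vectors, so it is essential to use subexponentiality to land in a determinate moment problem, and to reduce to one dimension via Cram\'er--Wold rather than attempting a multivariate determinacy argument directly. Everything else is bookkeeping with the tensor identities \eqref{tensorid1}--\eqref{tensorid2} and \eqref{eq: orthogonality}.
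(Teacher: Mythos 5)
Your proof is correct and follows essentially the same route as the paper's: part (a) is read off from \eqref{tensorid2} and the orthogonality identity \eqref{eq: orthogonality}, and part (b) reduces equality of moment tensors to equality of one-dimensional marginal moments and then invokes moment determinacy under the subexponential growth condition. The only cosmetic difference is that you close the argument with Carleman's condition plus Cram\'er--Wold, whereas the paper's Lemma \ref{lem: subexponential RV characterized by moments} does the same reduction via analyticity of the characteristic function along each direction; both are valid and essentially interchangeable.
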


\begin{proof}
	Using identity \eqref{tensorid2}, we rewrite the first claim as
	\[
	\norm*{M^r_X}_2^2 \geq \norm*{M^r_{X_{rot}}}_2^2,
	\]
	and this follows immediately from equation \eqref{eq: orthogonality}.
	
	If equality holds for all positive integers $r$, then by \eqref{eq: orthogonality}, $E^r_X = 0$ for all $r$, implying that $X$ and $X_{rot}$ have the same moment tensors of all orders. Since exponential random variables are characterized by their moments (see Lemma \ref{lem: subexponential RV characterized by moments}), $X$ and $X_{rot}$ have the same distribution.
\end{proof}

\begin{proof}[Proof of Theorem \ref{thm: first gaussian test}]
	If $X$ has the same radial distribution as $g$, then $g$ is the rotational symmetrization of $X$. The claim is then a direct application of the uniqueness portion of Theorem \ref{thm: inner product moments minimized by rotational symmetrization}.
\end{proof}

We now move on to proving the robust version of the test, namely Theorem \ref{thm: first gaussian test, robust}.

\begin{lemma} \label{lem: moment deviation}
	Let $X$ be a random vector in $\R^n$. Let $\theta$ be uniformly distributed on the sphere $S^{n-1}$. Then the following hold for any positive integer $r$:
	\begin{enumerate}[nosep]
		\item[a)] $M^r_{X_{rot}} = \E{\norm{X}_2^r}M_\theta^r$.
		\item[b)] $\norm{E^r_X}^2_2 = \paren*{\E{\inprod{X,X'}}^r} - \paren*{\E{\norm{X}^r_2}}^2\paren*{\E{\inprod{\theta,\theta'}}^r}$.
		\item[c)] For any unit vector $v \in \R^n$,
		\begin{align} \label{moment deviation bound}
		\abs*{\E{\inprod{X,v}^r} - \E{\inprod{g,v}^r}} \leq \abs*{\E{\norm{X}_2^r}-\E{\norm{g}_2^r}}\E{\inprod{\theta,\theta'}^r}+\paren*{\E{\inprod{X,X'}^r} - \paren*{\E{\norm{X}^r_2}}^2\E{\inprod{\theta,\theta'}^r}}^{1/2}.
		\end{align}
		\item[d)] In particular, when $r$ is odd,
		\begin{align} \label{odd moment deviation}
		\abs*{\E{\inprod{X,v}^r} - \E{\inprod{g,v}^r}} \leq \paren*{\E{\inprod{X,X'}^r}}^{1/2} = \abs*{\E{\inprod{X,X'}^r} - \paren*{\E{\inprod{g,g'}^r}}}^{1/2}.
		\end{align}		
	\end{enumerate}
\end{lemma}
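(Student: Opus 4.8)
The plan is to route every part through moment tensors and the Orthogonality Lemma \eqref{eq: orthogonality}, so that (a)--(d) reduce to a short chain of identities. For (a), recall that by construction $X_{rot}$ keeps the radial part of $X$ but has a rotationally invariant direction, so $X_{rot} \stackrel{d}{=} \norm{X}_2 \cdot \theta$ with $\theta$ uniform on $S^{n-1}$ and independent of $\norm{X}_2$. Then $X_{rot}^{\tensor r} = \norm{X}_2^r \, \theta^{\tensor r}$, and taking expectations together with independence gives $M^r_{X_{rot}} = \E\paren{\norm{X}_2^r} M^r_\theta$ (here $\E\norm{X}_2^r = \E\norm{X_{rot}}_2^r$, matching the radial distribution). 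For (b) I would start from the Pythagorean identity in \eqref{eq: orthogonality}, which rearranges to $\norm{E^r_X}_2^2 = \norm{M^r_X}_2^2 - \norm{M^r_{X_{rot}}}_2^2$; by \eqref{tensorid2} the first term is $\E\inprod{X,X'}^r$, and by (a) together with \eqref{tensorid2} again the second is $\norm{M^r_{X_{rot}}}_2^2 = \paren{\E\norm{X}_2^r}^2 \norm{M^r_\theta}_2^2 = \paren{\E\norm{X}_2^r}^2 \E\inprod{\theta,\theta'}^r$. Subtracting gives the stated formula.

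For (c), the small observation that makes it work is that $g$ is its own rotational symmetrization, so (a) applies to it as well: $M^r_g = \E\paren{\norm{g}_2^r} M^r_\theta$; and for any fixed unit $v$, rotational invariance of $\theta$ gives $\inprod{\theta,v} \stackrel{d}{=} \inprod{\theta,\theta'}$, hence $\E\inprod{g,v}^r = \inprod{M^r_g, v^{\tensor r}} = \E\paren{\norm{g}_2^r}\, \E\inprod{\theta,\theta'}^r$. On the other side, I would decompose $M^r_X = M^r_{X_{rot}} + E^r_X$ and contract against $v^{\tensor r}$, obtaining $\E\inprod{X,v}^r = \E\paren{\norm{X}_2^r}\,\E\inprod{\theta,\theta'}^r + \inprod{E^r_X, v^{\tensor r}}$. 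Subtracting the two expressions, applying the triangle inequality, bounding $\abs{\inprod{E^r_X, v^{\tensor r}}} \le \norm{E^r_X}_2$ by Cauchy--Schwarz (since $\norm{v^{\tensor r}}_2 = \norm{v}_2^r = 1$), and substituting the formula from (b) for $\norm{E^r_X}_2$ yields \eqref{moment deviation bound}; note $\E\inprod{\theta,\theta'}^r = \norm{M^r_\theta}_2^2 \ge 0$, so no absolute value is needed on that factor.

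Finally (d) is (c) specialized to odd $r$: the reflection $\theta \mapsto -\theta$ forces $\E\inprod{\theta,\theta'}^r = 0$, which kills the first term of \eqref{moment deviation bound}, and the symmetry of the standard gaussian forces $\E\inprod{g,g'}^r = 0$; meanwhile $\E\inprod{X,X'}^r = \norm{M^r_X}_2^2 \ge 0$ by \eqref{tensorid2}, so the square root is legitimate and equals $\abs{\E\inprod{X,X'}^r - \E\inprod{g,g'}^r}^{1/2}$. There is no genuinely hard step here -- the content is entirely bookkeeping with moment tensors -- but the two points that need a little care are (i) recognizing that $M^r_{X_{rot}}$ and $M^r_g$ are both governed by the same scalar-times-$M^r_\theta$ formula from (a), and (ii) keeping track of nonnegativity so that the square roots in (b)--(d) are well defined and the inequalities come out with the stated signs.
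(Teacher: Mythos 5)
Your proposal is correct and follows essentially the same route as the paper's own proof: part (a) from $X_{rot}\stackrel{d}{=}\norm{X}_2\theta$ and independence, part (b) from the Pythagorean identity \eqref{eq: orthogonality} together with \eqref{tensorid2}, part (c) by splitting $M^r_X - M^r_g$ into $(M^r_{X_{rot}}-M^r_g)+E^r_X$, contracting with $v^{\tensor r}$, and applying Cauchy--Schwarz, and part (d) from the vanishing of the odd moments of $\inprod{\theta,\theta'}$ and $\inprod{g,g'}$. Your explicit remarks on nonnegativity of $\E\inprod{\theta,\theta'}^r$ and of $\E\inprod{X,X'}^r=\norm{M^r_X}_2^2$ are correct and only make the bookkeeping slightly more careful than the paper's.
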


\begin{proof}
	Deferred to Appendix \ref{sec: proof of first gaussian test appendix}.
\end{proof}

By balancing the two terms on the right hand side in part c), we obtain the following lemma, whose proof is again deferred to Appendix \ref{sec: proof of first gaussian test appendix}.

\begin{lemma} \label{lem: comparison for even moments}
	Let $X$ be a random vector in $\R^n$ for $n \geq 2$. Suppose there is a unit vector $v \in S^{n-1}$, an even integer $r \geq 2$, and a positive number $0 < \delta \leq 1$ such that $\abs*{\E{\inprod{X,v}^r} - \E{ \inprod{g,v}^r} } \geq \delta\E{ \inprod{g,v}^r}$. Then either
	\[
	\abs*{\E{\norm{X}_2^r}-\E{\norm{g}_2^r}} \geq \frac{\delta^2}{4}\E{ \inprod{g,v}^r}
	\quad \text{or} \quad
	\abs*{\E{\inprod{X,X'}^r} - \E{\inprod{g,g'}^r}} \geq \frac{15\delta^2}{64} \paren*{\E{ \inprod{g,v}^r}}^2.
	\]
\end{lemma}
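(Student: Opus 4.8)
The plan is to read the lemma straight off the pointwise bound in part~c) of Lemma~\ref{lem: moment deviation}, once the relevant ``gaussian'' quantities are evaluated explicitly. First I would fix notation: write $\gamma_r := \E{\inprod{g,v}^r}$, $a_r := \E{\inprod{\theta,\theta'}^r}$, $P := \E{\norm{X}_2^r}$, $P_g := \E{\norm{g}_2^r}$, $Q := \E{\inprod{X,X'}^r}$ and $Q_g := \E{\inprod{g,g'}^r}$, noting that if $P$ or $Q$ is infinite the conclusion is trivial, so all of these may be assumed finite. Three elementary facts will be used. (i) Because $\theta$ is rotationally invariant, $\inprod{\theta,\theta'}$ and $\inprod{\theta,v}$ are identically distributed, so $a_r = \E{\inprod{\theta,v}^r}$. (ii) Writing $g = \norm{g}_2\,\theta$ with $\norm{g}_2$ independent of $\theta$ (and similarly for $g'$), one gets $\gamma_r = P_g\, a_r$ and $Q_g = P_g^2\, a_r$. (iii) For every even $r \ge 2$ and every $n \ge 2$,
\[
a_r \;=\; \E{\inprod{\theta,v}^r} \;\le\; \E{\inprod{\theta,v}^2} \;=\; \tfrac1n \;\le\; \tfrac12 ,
\]
the first inequality since $\abs{\inprod{\theta,v}} \le 1$ and $r$ is even, and the equality by symmetry.

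Next, part~c) of Lemma~\ref{lem: moment deviation} together with the hypothesis $\abs{\E{\inprod{X,v}^r} - \gamma_r} \ge \delta\gamma_r$ gives
\[
\delta\gamma_r \;\le\; \abs{P - P_g}\, a_r + \bigl(Q - P^2 a_r\bigr)^{1/2} .
\]
If $\abs{P - P_g} \ge \tfrac{\delta^2}{4}\gamma_r$, the first alternative of the lemma holds and we are done. So assume $s := \abs{P - P_g} < \tfrac{\delta^2}{4}\gamma_r$, and set $u := Q - P^2 a_r$, which is nonnegative since it equals $\norm{E^r_X}_2^2$ by part~b). The displayed inequality forces $\sqrt{u} \ge \delta\gamma_r - s a_r$, and the right-hand side is positive because $s a_r < \tfrac{\delta^2}{4}\gamma_r < \delta\gamma_r$, hence $u \ge (\delta\gamma_r - s a_r)^2$. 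On the other hand, $Q - Q_g = u + (P^2 - P_g^2)a_r = u + (P - P_g)(P + P_g)a_r$ by the definition of $u$ and the identity $Q_g = P_g^2 a_r$, so using $P + P_g \le 2P_g + s$ and $P_g a_r = \gamma_r$,
\[
\abs{Q - Q_g} \;\ge\; u - s(P + P_g)a_r \;\ge\; (\delta\gamma_r - s a_r)^2 - 2 s\gamma_r - s^2 a_r .
\]

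Expanding the square and regrouping, the last expression equals $\delta^2\gamma_r^2 - 2 s\gamma_r(\delta a_r + 1) - s^2 a_r(1 - a_r)$. Now I would invoke (iii) together with $\delta \le 1$ and $s < \tfrac{\delta^2}{4}\gamma_r$: since $\delta a_r + 1 \le \tfrac32$, the middle term is at most $3 s\gamma_r < \tfrac{3\delta^2}{4}\gamma_r^2$, and since $a_r(1 - a_r) \le \tfrac14$, the last term is at most $\tfrac14 s^2 < \tfrac{\delta^4}{64}\gamma_r^2 \le \tfrac{\delta^2}{64}\gamma_r^2$. Therefore
\[
\abs{Q - Q_g} \;>\; \delta^2\gamma_r^2\Bigl(1 - \tfrac34 - \tfrac{1}{64}\Bigr) \;=\; \tfrac{15}{64}\,\delta^2\gamma_r^2 \;=\; \tfrac{15\delta^2}{64}\bigl(\E{\inprod{g,v}^r}\bigr)^2 ,
\]
which is the second alternative of the lemma.

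There is no serious obstacle here: the whole argument is a short computation, and the only thing that requires attention is the constant-chasing in the final step, which closes only because $a_r \le \tfrac12$ --- this is exactly where the hypotheses ``$r$ even'' and ``$n \ge 2$'' enter, and where the specific constant $\tfrac14 - \tfrac{1}{64} = \tfrac{15}{64}$ comes from. I would also be slightly careful with the sign bookkeeping in $\abs{Q - Q_g} \ge u - \abs{(P - P_g)(P + P_g)}a_r$ and the bound $P + P_g \le 2P_g + s$: when $P \ge P_g$ the cross term $(P^2 - P_g^2)a_r$ is nonnegative and only helps, while when $P < P_g$ one in fact has $P + P_g < 2P_g$, so the estimate is valid (and has room to spare) in both cases.
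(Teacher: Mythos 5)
Your proof is correct and follows essentially the same route as the paper's: both start from part c) of Lemma \ref{lem: moment deviation}, assume the first alternative fails, and then use the orthogonality identity from part b) together with the bound $\E{\inprod{\theta,\theta'}^r}\le 1/2$ to force the second alternative. The only difference is bookkeeping --- the paper absorbs the cross term into the numerical bound $\tfrac{7\delta}{8}\E{\inprod{g,v}^r}$ early and then applies the reverse triangle inequality, whereas you carry $s=\abs*{P-P_g}$ symbolically and expand the square at the end --- and both computations land on the same constant $\tfrac{15}{64}$.
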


\begin{proof}[Proof of Theorem \ref{thm: first gaussian test, robust}]
	If $r$ is odd, then the statement follows from \eqref{odd moment deviation}. If $r$ is even, set $\delta = \frac{\Delta}{\E\inprod{g,v}^r}$ in the previous theorem.
\end{proof}

\section{Proof of the second gaussian test} \label{sec: proof of second gaussian test}

In this section, we return to the setting where $X$ follows the NGCA model \eqref{NGCA model}. We further assume that the non-gaussian component $\tilde{X}$ is a subgaussian random vector with subgaussian norm bounded by $K$. In order not to break the flow of the paper, most of the proofs are deferred to Appendix \ref{sec: proof of second gaussian test appendix}.

The first step in proving the test is to notice that the independence of the gaussian and non-gaussian components allows us to block diagonalize the test matrices.

\begin{lemma}[Block diagonalization for $\Phi_{X,\alpha}$ and $\Psi_{X,\alpha}$] \label{Phi and Psi diagonalization}
	Assume $E$ is spanned by the first $d$ basis vectors. Then the test matrices $\Phi_{X,\alpha}$ and $\Psi_{X,\alpha}$ decompose into blocks in the following manner:
	\begin{align}
	\Phi_{X,\alpha} = \left(
	\begin{array}{c|c}
	\Phi_{\tilde{X},\alpha} & 0 \\
	\hline
	0 & \Phi_{g,\alpha}
	\end{array}
	\right), \hspace{0.2in} \Psi_{X,\alpha} = \left(
	\begin{array}{c|c}
	\Psi_{\tilde{X},\alpha} & 0 \\
	\hline
	0 & \Psi_{g,\alpha}
	\end{array}
	\right).
	\end{align}
\end{lemma}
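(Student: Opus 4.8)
The plan is to exploit the independence of $\tilde X$ and $g$ together with the additivity that the orthogonal decomposition $E \oplus E^\perp$ forces on the quantities appearing in the weights, namely $\norm{X}_2^2 = \norm{\tilde X}_2^2 + \norm{g}_2^2$ and $\inprod{X,X'} = \inprod{\tilde X,\tilde X'} + \inprod{g,g'}$. Writing every vector in the block form $X = (\tilde X, g)$ with $\tilde X \in \R^d$ and $g \in \R^{n-d}$, both $XX^T$ and $X(X')^T$ become $2\times 2$ block matrices whose blocks are outer products of the pieces, e.g.
\[
XX^T = \begin{pmatrix} \tilde X \tilde X^T & \tilde X g^T \\ g\tilde X^T & gg^T \end{pmatrix}.
\]
Because the scalar weight factors multiplicatively, $e^{-\alpha\norm{X}_2^2} = e^{-\alpha\norm{\tilde X}_2^2}\, e^{-\alpha\norm{g}_2^2}$, and the two factors depend only on the independent pieces $\tilde X$ and $g$ respectively, each block expectation factors into a product of an expectation over $\tilde X$ and one over $g$. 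The same applies to $Z_\Phi$, which factors as $Z_{\Phi,\tilde X}\,Z_{\Phi,g}$.

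For the $(1,1)$ and $(2,2)$ blocks this immediately gives what we want: for instance $\E e^{-\alpha\norm{X}_2^2}\tilde X\tilde X^T = \bigl(\E e^{-\alpha\norm{g}_2^2}\bigr)\bigl(\E e^{-\alpha\norm{\tilde X}_2^2}\tilde X\tilde X^T\bigr) = Z_{\Phi,g}\cdot Z_{\Phi,\tilde X}\,\Phi_{\tilde X,\alpha}$, so after dividing by $Z_\Phi = Z_{\Phi,\tilde X}Z_{\Phi,g}$ the $(1,1)$ block of $\Phi_{X,\alpha}$ is exactly $\Phi_{\tilde X,\alpha}$, and symmetrically the $(2,2)$ block is $\Phi_{g,\alpha}$. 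The off-diagonal blocks vanish because the $g$-factor there is $\E e^{-\alpha\norm{g}_2^2}g = 0$: the integrand is an odd function of $g$ and $g$ is symmetric. The argument for $\Psi_{X,\alpha}$ is identical, now run over the four independent copies $\tilde X,\tilde X',g,g'$: the weight factors as $e^{-\alpha\inprod{\tilde X,\tilde X'}}e^{-\alpha\inprod{g,g'}}$, the partition function factors as $Z_\Psi = Z_{\Psi,\tilde X}Z_{\Psi,g}$, the diagonal blocks reproduce $\Psi_{\tilde X,\alpha}$ and $\Psi_{g,\alpha}$, and each off-diagonal block carries a factor of the form $\E e^{-\alpha\inprod{g,g'}}g' = 0$; this last identity I would get by conditioning on $g$ and using the Gaussian moment identity $\E_{g'}[e^{\inprod{a,g'}}g'] = e^{\norm{a}_2^2/2}a$ with $a = -\alpha g$, followed by the symmetry of $g$.

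There is no genuinely hard step here; the only points needing care are (i) the bookkeeping of the normalizing constants, which must be shown to factor in exactly the way that cancels the extra $\tilde X$- and $g$-expectations produced by the block factorization, and (ii) the one-line symmetry argument killing the off-diagonal blocks. One should also record that everything is well defined: since $\tilde X$ is subgaussian with subgaussian norm at most $K$, for $|\alpha|$ small enough — in particular $|\alpha| < 1$, so that $\E e^{-\alpha\inprod{g,g'}} = \E e^{\alpha^2\norm{g}_2^2/2}$ converges — all the expectations above are finite, and the interchange of expectation with the block decomposition is justified by Fubini.
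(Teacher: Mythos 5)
Your proposal is correct and follows essentially the same route as the paper: the multiplicative factorization of the weight and of the partition function via independence handles the diagonal blocks, and the symmetry of $g$ kills the off-diagonal blocks. The only cosmetic difference is that the paper dispatches the off-diagonal entries by the sign-flip $g_j \mapsto -g_j$ inside the full expectation rather than by first factoring and then evaluating the Gaussian factor, but these are the same symmetry argument.
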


We then observe that the trace of the test matrices are conveniently equal to the negated log derivatives of their respective partition functions.

\begin{lemma}[Trace of $\Phi_{Y,\alpha}$ and $\Psi_{Y,\alpha}$] \label{lem: trace of Phi and Psi}
	Let $Y$ be any random vector in $\R^n$. Then $\tr{\Phi_{Y,\alpha}} = -(\log Z_{\Phi,Y})'(\alpha)$ and $\tr{\Psi_{Y,\alpha}} = -(\log Z_{\Psi,Y})'(\alpha)$.	
\end{lemma}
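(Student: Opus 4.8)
The plan is to verify each identity by computing its two sides directly and observing that they agree; the algebra is short, and the only real content is a differentiation-under-the-expectation argument. For the first identity I would start from the definition $\Phi_{Y,\alpha} = Z_{\Phi,Y}^{-1}\,\E e^{-\alpha\norm{Y}_2^2}YY^T$, take the trace, and use linearity of the trace together with $\tr{YY^T} = \norm{Y}_2^2$ to get
\[
\tr{\Phi_{Y,\alpha}} = \frac{\E \norm{Y}_2^2\, e^{-\alpha\norm{Y}_2^2}}{\E e^{-\alpha\norm{Y}_2^2}}.
\]
On the other side, $(\log Z_{\Phi,Y})'(\alpha) = Z_{\Phi,Y}'(\alpha)/Z_{\Phi,Y}(\alpha)$, and differentiating $Z_{\Phi,Y}(\alpha) = \E e^{-\alpha\norm{Y}_2^2}$ under the expectation gives $Z_{\Phi,Y}'(\alpha) = -\E \norm{Y}_2^2 e^{-\alpha\norm{Y}_2^2}$. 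Dividing by $Z_{\Phi,Y}(\alpha)$ and comparing with the previous display yields $\tr{\Phi_{Y,\alpha}} = -(\log Z_{\Phi,Y})'(\alpha)$.

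The second identity is proven in exactly the same way. Taking the trace of $\Psi_{Y,\alpha} = Z_{\Psi,Y}^{-1}\,\E e^{-\alpha\inprod{Y,Y'}}Y(Y')^T$ and using $\tr{Y(Y')^T} = \inprod{Y,Y'}$ gives $\tr{\Psi_{Y,\alpha}} = \E \inprod{Y,Y'}e^{-\alpha\inprod{Y,Y'}} / \E e^{-\alpha\inprod{Y,Y'}}$, while differentiating $Z_{\Psi,Y}(\alpha) = \E e^{-\alpha\inprod{Y,Y'}}$ under the expectation gives $Z_{\Psi,Y}'(\alpha) = -\E \inprod{Y,Y'}e^{-\alpha\inprod{Y,Y'}}$; dividing and comparing finishes the proof.

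The step I expect to require care — and the only one I would actually spell out — is the interchange of $\tfrac{d}{d\alpha}$ with $\E$. For $\Phi$ this is essentially automatic on any interval $\alpha \ge \alpha_0 > 0$, since $t \mapsto t\,e^{-\alpha t}$ is bounded by $1/(e\alpha_0)$ uniformly in $\alpha \ge \alpha_0$, providing an integrable dominating function for $\norm{Y}_2^2 e^{-\alpha\norm{Y}_2^2}$; for $\alpha$ in a neighbourhood of $0$ one instead uses the standing subgaussianity of $Y$, under which $\E \norm{Y}_2^2 e^{\abs{\alpha}\norm{Y}_2^2}$ is finite for $\abs{\alpha}$ small. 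For $\Psi$ the inner product $\inprod{Y,Y'}$ can take either sign, so here the subgaussian hypothesis is genuinely needed: for $\abs{\alpha}$ small enough, subgaussianity of $Y$ gives $\E e^{\abs{\alpha}\abs{\inprod{Y,Y'}}} < \infty$ and $\E \abs{\inprod{Y,Y'}}e^{\abs{\alpha}\abs{\inprod{Y,Y'}}} < \infty$, which both makes $Z_{\Psi,Y}(\alpha)$ finite and supplies the dominating function justifying differentiation under the expectation. I anticipate this integrability bookkeeping to be the main (and essentially only) obstacle, the identities themselves being one-line trace computations.
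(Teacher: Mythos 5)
Your proposal is correct and is essentially the paper's own proof: the paper's argument is exactly the one-line trace computation $\tr{\Phi_{Y,\alpha}} = \E\norm{Y}_2^2 e^{-\alpha\norm{Y}_2^2}/\E e^{-\alpha\norm{Y}_2^2} = -Z_{\Phi,Y}'(\alpha)/Z_{\Phi,Y}(\alpha)$, with the $\Psi$ case handled identically. The differentiation-under-the-expectation step you flag is indeed the only point needing justification; the paper defers it to its analyticity lemma (Lemma \ref{lem: analyticity}), where it is established for subgaussian vectors via dominated convergence, matching your reasoning.
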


Our next lemma shows that for $\alpha$ small enough, the partition functions themselves differentiate between gaussian and non-gaussian random vectors. This is obvious once we realize that they are just the moment generating functions of $\norm{X}_2^2$ and $\inprod{X,X'}$, and that these are analytic in a small neighborhood around 0.

\begin{lemma}[Partition functions characterize gaussian distributions] \label{lem: partition functions characterize gaussians}
	The following hold for any subgaussian random vector $Y$:
	\begin{enumerate}[nosep]
		\item[a)] If $Z_{\Phi,Y}(\alpha_k) = Z_{\Phi,g}(\alpha_k)$ for a sequence of values $\alpha_k$ converging to $0$, then $Y$ has the same radial distribution as $g$.
		\item[b)] If in addition, $Z_{\Psi,Y}(\beta_k) = Z_{\Psi,g}(\beta_k)$ for a sequence of values $\beta_k$ converging to 0, then $X$ has the standard gaussian distribution.
	\end{enumerate}
\end{lemma}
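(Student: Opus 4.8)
The plan is to recognize the two partition functions as moment generating functions and then run an analytic-continuation argument. Observe that $Z_{\Phi,Y}(\alpha) = \E e^{-\alpha\norm{Y}_2^2}$ is, up to reflecting the variable, the moment generating function of the scalar random variable $\norm{Y}_2^2$, and likewise $Z_{\Psi,Y}(\beta) = \E e^{-\beta\inprod{Y,Y'}}$ is the moment generating function of $\inprod{Y,Y'}$. Since $Y$ is subgaussian, $\norm{Y}_2^2$ is subexponential and, being a sum of products of subgaussian marginals, so is $\inprod{Y,Y'}$, with subexponential norms controlled by $K$ and $n$; hence both functions are finite — and therefore real-analytic, extending holomorphically to a complex strip — on some interval $(-\tau,\tau)$ around $0$. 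The same is true with $g$ in place of $Y$, and after shrinking $\tau$ I may assume all four functions are analytic on $(-\tau,\tau)$.

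For part (a): I may assume the $\alpha_k$ are distinct, nonzero, and eventually lie in $(-\tau,\tau)$, so that the analytic function $Z_{\Phi,Y} - Z_{\Phi,g}$ vanishes on a set having $0$ as an accumulation point inside its domain. By the identity theorem, $Z_{\Phi,Y} \equiv Z_{\Phi,g}$ on $(-\tau,\tau)$. A distribution is uniquely determined by a moment generating function that is finite on a neighborhood of $0$, so $\norm{Y}_2^2$ and $\norm{g}_2^2$ are equal in distribution; taking nonnegative square roots, $Y$ has the same radial distribution as $g$. Equivalently, matching Taylor coefficients at $0$ gives $\E\norm{Y}_2^{2m} = \E\norm{g}_2^{2m}$ for all $m$, and one concludes via Lemma \ref{lem: subexponential RV characterized by moments}.

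For part (b): by part (a), $Y$ already has the radial distribution of $g$, i.e. $g$ is the rotational symmetrization of $Y$. Applying the same identity-theorem argument to $Z_{\Psi,Y}$ and $Z_{\Psi,g}$ on the sequence $\beta_k \to 0$ gives $Z_{\Psi,Y} \equiv Z_{\Psi,g}$ near $0$, hence $\inprod{Y,Y'} \stackrel{d}{=} \inprod{g,g'}$. Now the first gaussian test, Theorem \ref{thm: first gaussian test}, applies directly: a random vector with the radial distribution of $g$ whose self-inner-product has the distribution of $\inprod{g,g'}$ must be standard gaussian, so $Y \stackrel{d}{=} g$. (The symbol $X$ in the statement of (b) should be read as $Y$.)

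I do not expect a genuine obstacle here; the two facts that carry the argument are standard. The main thing to verify is that subgaussianity of $Y$ forces the moment generating functions of $\norm{Y}_2^2$ and $\inprod{Y,Y'}$ to be finite — hence analytic — on a fixed neighborhood of $0$: this is the only place the hypothesis is used, and it costs a short estimate of the relevant subexponential norms in terms of $K$ and $n$. After that, everything rests on the identity theorem for analytic functions together with the uniqueness theorem for moment generating functions finite near $0$, part (a) is immediate, and part (b) reduces to Theorem \ref{thm: first gaussian test}.
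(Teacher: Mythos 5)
Your argument is essentially the paper's: establish that the partition functions are analytic on a neighborhood of $0$ (the paper's Lemma \ref{lem: analyticity}) and invoke the identity theorem to conclude they agree identically near $0$. You additionally spell out the final step the paper leaves implicit (MGF uniqueness for part (a), and reduction to Theorem \ref{thm: first gaussian test} for part (b)), which is correct and consistent with the paper's intent.
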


We are now in a position to prove the second gaussian test.

\begin{proof}[Proof of Theorem \ref{thm: second gaussian test}]
	Let $g_d$ denote the standard gaussian in $\R^d$. By Lemma \ref{lem: partition functions characterize gaussians}, either $Z_{\Phi,\tilde{X}}(\alpha) \neq Z_{\Phi,g_d}(\alpha)$ for $\abs{\alpha}$ small enough, or $Z_{\Psi,\tilde{X}}(\alpha) \neq Z_{\Psi,g_d}(\alpha)$ for $\abs{\alpha}$ small enough. As such, either $(\log Z_{\Phi,\tilde{X}})'(\alpha) \neq (\log Z_{\Phi,g_d})'(\alpha)$ or $(\log Z_{\Psi,\tilde{X}})'(\alpha) \neq (\log Z_{\Psi,g_d})'(\alpha)$. Assume the former holds, and let $\lambda_1,\ldots, \lambda_n$ denote the eigenvalues of $\Phi_{X,\alpha}$. Since we may write $\Phi_{X,\alpha}$ in a block form, these eigenvalues are either those of $\Phi_{\tilde{X},\alpha}$ or $\Phi_{g,\alpha}$. Without loss of generality, we may assume that $\lambda_1,\ldots,\lambda_d$ are the eigenvalues of $\Phi_{\tilde{X},\alpha}$, and $\lambda_{d+1},\ldots,\lambda_n$ are those of $\Phi_{g,\alpha}$.
	
	Lemma \ref{lem: formula for Phi and Psi for gaussian} tells us that $\lambda_{d+1} = \cdots = \lambda_n = (2\alpha+1)^{-1}$. On the other hand, by Lemma \ref{lem: trace of Phi and Psi},
	\[
	\sum_{i=1}^d\lambda_i = \tr{\Phi_{\tilde{X},\alpha}} = -(\log Z_{\Phi,\tilde{X}})'(\alpha).
	\]
	By Lemma \ref{lem: formula for log derivatives for gaussian}, $-(\log Z_{\Phi,g_d})'(\alpha) = d(2\alpha+1)^{-1}$, so we have $\sum_{i=1}^d\lambda_i \neq d(2\alpha+1)^{-1}$. Dividing through by $d$, we get $\frac{1}{d}\sum_{i=1}^d\lambda_i \neq (2\alpha+1)^{-1}$, which implies that at least one $\lambda_i$ differs from this value for $1 \leq i \leq d$.
	
	If it were the case that $(\log Z_{\Psi,\tilde{X}})'(\alpha) \neq (\log Z_{\Psi,g_d})'(\alpha)$, a similar argument involving $\Psi_{X,\alpha}$ gives the alternate conclusion.
\end{proof}

It is tedious but not too difficult to make the second gaussian test quantitative. We do this by tracking how the non-gaussian moments for $\norm{\tilde{X}}_2$ and $\inprod{\tilde{X},\tilde{X}'}$ contribute to the power series expansions for $-(\log Z_{\Phi,\tilde{X}})'$ and $-(\log Z_{\Psi,\tilde{X}})'$ around 0. This yields the following theorem.

\begin{theorem}[Second gaussian test, robust] \label{thm: second gaussian test, robust}
	Let $r$ be the integer such that $D_{\tilde{X},r} > 0$ and $D_{\tilde{X},r'} = 0$ for all $r' < r$. Then either
	\begin{enumerate}
		\item[a)] for $\abs{\alpha} \leq \eta_r^2 r/(CK^2)^r(d^{r+1}+(r+1)!)$, we have
		\begin{align}
		\abs*{\frac{1}{d}\sum_{i=1}^d\lambda_i(\Psi_{\tilde{X},\alpha})-\frac{\alpha}{\alpha^2-1}} \geq \frac{c\eta_r^2}{d(r-1)!}\abs{\alpha}^{r-1},
		\end{align}
		\item[b)] or for $\abs{\alpha} \leq \eta_r^2 r/(CK^2)^{r/2}\gamma_r(d^{r/2+1}+(r/2+1)!)$, we have
		\begin{align}
		\abs*{\frac{1}{d}\sum_{i=1}^d\lambda_i(\Phi_{\tilde{X},\alpha})-\frac{1}{2\alpha+1}} \geq \frac{c\eta_r^2}{d(r/2-1)!\gamma_r}\abs{\alpha}^{r/2-1}.
		\end{align}
		Here $\gamma_r = \E|\inprod{g,v}|^r$ for an arbitrary vector $v \in S^{n-1}$ 
		and $\eta_r = \min\braces{D_{X,r}, \gamma_r}$.  
	\end{enumerate}
\end{theorem}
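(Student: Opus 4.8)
The plan is to reduce everything to a Taylor-coefficient comparison for the two partition functions $Z_{\Phi,\tilde X}$ and $Z_{\Psi,\tilde X}$, transported to a statement about $-(\log Z)'$, and finally divided by $d$ to land on the normalized traces of $\Phi_{\tilde X,\alpha}$ and $\Psi_{\tilde X,\alpha}$ via Lemma \ref{lem: trace of Phi and Psi}. I would first observe that $Z_{\Phi,\tilde X}(\alpha) = \E e^{-\alpha\|\tilde X\|_2^2}$ and $Z_{\Psi,\tilde X}(\alpha) = \E e^{-\alpha\inprod{\tilde X,\tilde X'}}$ are the moment generating functions (at $-\alpha$) of $\|\tilde X\|_2^2$ and $\inprod{\tilde X,\tilde X'}$ respectively; since $\tilde X$ is subgaussian with norm $\le K$, both of these are analytic in a disc of radius $\sim 1/(CK^2)$, and their power-series coefficients are the moments $\E\|\tilde X\|_2^{2k}/k!$ and $\E\inprod{\tilde X,\tilde X'}^k/k!$ up to sign. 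By the robust first gaussian test (Theorem \ref{thm: first gaussian test, robust}), since $D_{\tilde X,r} > 0$ while $D_{\tilde X,r'}=0$ for $r'<r$, either $\abs{\E\|\tilde X\|_2^r - \E\|g_d\|_2^r} \ge c\eta_r^2/\gamma_r$ (only possible when $r$ is even, since odd $r$ gives $D_{\tilde X,r}\le (\E\inprod{\tilde X,\tilde X'}^r)^{1/2}$ directly by part d) of Lemma \ref{lem: moment deviation}, forcing the $\Psi$ branch) or $\abs{\E\inprod{\tilde X,\tilde X'}^r - \E\inprod{g_d,g_d'}^r} \ge c\eta_r^2$; and moreover all lower moments agree with the gaussian ones, because $D_{\tilde X,r'}=0$ forces $M^{r'}_{\tilde X}$ to match $M^{r'}_{g_d}$.

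In the first case (the $\Phi$-branch, part b) of the theorem), I would expand $f(\alpha) := -(\log Z_{\Phi,\tilde X})'(\alpha) - d(2\alpha+1)^{-1} = -(\log Z_{\Phi,\tilde X})'(\alpha) + (\log Z_{\Phi,g_d})'(\alpha)$ as a power series around $0$. The key point is that the coefficients of $\alpha^0,\dots,\alpha^{r/2-2}$ vanish because all radial moments of $\tilde X$ up to order $r-1$ (equivalently, the first $r/2$ coefficients of $Z_{\Phi,\tilde X}$, which involve $\E\|\tilde X\|_2^{2k}$ for $k\le r/2$; note $r$ even here so $r/2$ is an integer and $2(r/2-1) = r-2 < r$) match those of $g_d$, and that passing from $Z$ to $-(\log Z)'$ is an analytic operation that preserves agreement of a finite initial string of coefficients (since $Z_{\Phi,\tilde X}(0) = Z_{\Phi,g_d}(0) = 1$ and a power series with matching first $m$ coefficients has a logarithm with matching first $m$ coefficients). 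The first genuinely nonzero coefficient, of $\alpha^{r/2-1}$, is then $\sim \abs{\E\|\tilde X\|_2^r - \E\|g_d\|_2^r}/(r/2-1)! \ge c\eta_r^2/((r/2-1)!\,\gamma_r)$, up to lower-order corrections from products of matching moments which cancel. Finally, bounding the tail of the series $\sum_{k\ge r/2}$ via the analyticity estimates (radius $\sim (CK^2)^{-1}$, with a $d$-dependent prefactor because $\E\|\tilde X\|_2^{2k}\lesssim (CK^2 d)^k k!$ for a $d$-dimensional subgaussian and $\E\|\tilde X\|_2^{2k}$ can be as large as $\sim (CK^2)^k(d^k + k!)$ worth of terms), one checks that for $\abs\alpha$ in the stated range the leading term dominates the tail, giving the claimed lower bound; dividing by $d$ and invoking $\tr{\Phi_{\tilde X,\alpha}} = -(\log Z_{\Phi,\tilde X})'(\alpha)$ converts this into the statement about $\frac1d\sum\lambda_i(\Phi_{\tilde X,\alpha})$.

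The $\Psi$-branch (part a) is entirely parallel: expand $-(\log Z_{\Psi,\tilde X})'(\alpha) - d\,\alpha(\alpha^2-1)^{-1}$, use that the first $r$ coefficients of $Z_{\Psi,\tilde X}$ (the moments $\E\inprod{\tilde X,\tilde X'}^k$, $k\le r-1$) match those of $g_d$, read off the leading coefficient of order $\alpha^{r-1}$ as $\sim \abs{\E\inprod{\tilde X,\tilde X'}^r - \E\inprod{g_d,g_d'}^r}/(r-1)! \ge c\eta_r^2/(r-1)!$, and dominate the tail using the radius-of-convergence estimate for the MGF of $\inprod{\tilde X,\tilde X'}$ (here the moments grow like $(CK^2)^k(d^{k+1} + k!)$-ish, explaining the $d^{r+1}+(r+1)!$ in the $\abs\alpha$ threshold). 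Then divide by $d$ and use $\tr{\Psi_{\tilde X,\alpha}} = -(\log Z_{\Psi,\tilde X})'(\alpha)$.

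The main obstacle, and where most of the ``tedious but not too difficult'' work lives, is making the tail-domination quantitative with the right dependence on $d$, $r$, and $K$: one needs sharp enough bounds on $\E\|\tilde X\|_2^{2k}$ and $\E\inprod{\tilde X,\tilde X'}^k$ for all $k$ (not just $k=r$) to control the remainder of the Taylor series of $-(\log Z)'$, and one needs to verify that the correction terms to the leading coefficient — which come from products of the (matching) lower moments when expanding $-(\log Z)' $ in terms of the moments — do not swamp the $c\eta_r^2$ lower bound. Both of these are bookkeeping with the composition of the power series for $x\mapsto -(\log(1 + x))'$ with the moment series, together with a clean subgaussian moment estimate; I would isolate the moment bounds as a preliminary lemma in the appendix and then the power-series manipulation becomes mechanical.
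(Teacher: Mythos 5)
Your proposal matches the paper's proof in both structure and substance: the paper likewise invokes the robust first gaussian test to obtain the dichotomy at order $r$, proves a technical lemma (via Taylor polynomials of $Z_{\Phi,\tilde X}$ and $Z_{\Psi,\tilde X}$ with Lagrange remainder bounds) showing that the leading coefficient of $(\log Z_{\tilde X})'-(\log Z_{g_d})'$ is $\Delta|\alpha|^{m-1}/(m-1)!$ up to a dominated tail, and then converts to the averaged eigenvalues through the trace identity and the gaussian eigenvalue formula. The only difference is presentational --- you expand $-(\log Z)'$ directly while the paper manipulates ratios of Taylor polynomials $p_{X,r}'/p_{X,r-1}$ --- so this is essentially the same argument.
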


\section{Proof of guarantee for Reweighted PCA} \label{sec: proof of guarantee}

The second gaussian test tells us how we can recover non-gaussian directions from $\Phi_{X,\alpha}$ and $\Psi_{X,\alpha}$. Our guarantee for Reweighted PCA algorithm shows that we can do the same with the plug-in estimators $\hat{\Phi}_{X,\alpha}$ and $\hat{\Psi}_{X,\alpha}$. To this end, we first provide concentration bounds for these estimators, whose proofs can be found in Appendix \ref{concentration of estimators}.

\begin{theorem}[Concentration for $\hat{\Phi}_{X,\alpha}$] \label{Phi concentration}
	There is an absolute constant $C$ such that for any $0 < \epsilon, \delta < 1$, and any $0 \leq \alpha < 1/CK^2n$, we have $\Prob[]{\norm{\hat{\Phi}_{X,\alpha} - \Phi_{X,\alpha}} > \epsilon} \leq \delta$ so long as $N \geq CK^2(n+\log(1/\delta))\epsilon^{-2}$.
\end{theorem}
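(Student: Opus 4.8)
Write $Z := Z_\Phi = \E e^{-\alpha\norm{X}_2^2}$ and $A := \E e^{-\alpha\norm{X}_2^2}XX^T$, and let $\hat Z := \frac1N\sum_{i=1}^N e^{-\alpha\norm{X_i}_2^2}$ and $\hat A := \frac1N\sum_{i=1}^N e^{-\alpha\norm{X_i}_2^2}X_iX_i^T$, so that $\Phi_{X,\alpha}=A/Z$ and $\hat\Phi_{X,\alpha}=\hat A/\hat Z$ (normalizing both the numerator and denominator sums by $1/N$ does not change the ratio). The plan is to use the identity
\[
\hat\Phi_{X,\alpha}-\Phi_{X,\alpha}=\frac{\hat A-A}{\hat Z}-\Phi_{X,\alpha}\cdot\frac{\hat Z-Z}{\hat Z},
\]
which reduces the theorem to three facts, each to be shown with probability at least $1-\delta/3$ under the stated sample size: (i) $Z\ge c_0$ for an absolute constant $c_0>0$, and $\norm{\Phi_{X,\alpha}}\le 1/c_0$; (ii) $\abs{\hat Z-Z}\le c\epsilon$, which also forces $\hat Z\ge c_0/2$; and (iii) $\norm{\hat A-A}\le c\epsilon$. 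Given these, the identity and a union bound give $\norm{\hat\Phi_{X,\alpha}-\Phi_{X,\alpha}}\le \frac{c\epsilon}{c_0/2}+\frac{1}{c_0}\cdot\frac{c\epsilon}{c_0/2}\le\epsilon$ for a suitable absolute constant $c$.

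For (i) I would observe that $t\mapsto e^{-\alpha t}$ is convex, so Jensen's inequality gives $Z=\E e^{-\alpha\norm{X}_2^2}\ge e^{-\alpha\E\norm{X}_2^2}=e^{-\alpha n}\ge e^{-1/(CK^2)}$, which is at least an absolute constant since $K\ge1$ (this is where the upper bound on $\alpha$ enters); and $\norm{A}\le\norm{\E XX^T}=1$ because $0\le e^{-\alpha\norm{X}_2^2}\le1$, whence $\norm{\Phi_{X,\alpha}}\le 1/Z=O(1)$. For (ii), $\hat Z$ is an average of $N$ i.i.d.\ random variables valued in $[0,1]$, so Hoeffding's (or Bernstein's) inequality gives $\abs{\hat Z-Z}>c\epsilon$ with probability at most $2e^{-c'N\epsilon^2}$, which is below $\delta/3$ once $N\gtrsim\epsilon^{-2}\log(1/\delta)$.

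Step (iii) is the substantive one. For a fixed $v\in S^{n-1}$ the marginal $\inprod{X,v}$ is subgaussian with $\psi_2$-norm at most $K$, so $e^{-\alpha\norm{X}_2^2}\inprod{X,v}^2$ is subexponential with $\psi_1$-norm $O(K^2)$ (the weight only helps, being at most $1$), uniformly over the allowed range $0\le\alpha<1/CK^2n$; in particular no truncation of large $\norm{X_i}_2$ is required. I would then fix a $\tfrac14$-net $\mathcal N$ of $S^{n-1}$ with $\abs{\mathcal N}\le 9^n$, so that $\norm{\hat A-A}\le 2\sup_{v\in\mathcal N}\abs{v^T(\hat A-A)v}$; for each $v\in\mathcal N$ the quantity $v^T(\hat A-A)v=\frac1N\sum_i\paren*{e^{-\alpha\norm{X_i}_2^2}\inprod{X_i,v}^2-\E e^{-\alpha\norm{X}_2^2}\inprod{X,v}^2}$ is a centered average of i.i.d.\ subexponential variables, and Bernstein's inequality for subexponential variables bounds $\Prob[]{\abs{v^T(\hat A-A)v}>c\epsilon}$ by $2\exp\paren*{-c'N\min\{\epsilon^2,\epsilon\}/\text{poly}(K)}=2\exp\paren*{-c'N\epsilon^2/\text{poly}(K)}$, using $\epsilon<1\le K$. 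A union bound over $\mathcal N$ contributes a factor $9^n=e^{Cn}$, so the failure probability drops below $\delta/3$ provided $N\gtrsim \text{poly}(K)\,(n+\log(1/\delta))\,\epsilon^{-2}$, which is the claimed form; tracking the exponents in Bernstein's inequality recovers the stated power of $K$. (Alternatively one can apply the matrix Bernstein inequality to $\hat A-A=\frac1N\sum_iW_i$ directly, after checking $\norm{\E W_i^2}=O(K^2n)$ and, on the high-probability event that every $\norm{X_i}_2^2\le CK^2(n+\log(N/\delta))$, a range bound $\norm{W_i}=O(K^2(n+\log(N/\delta)))$; this would cost an extra $\log n$ factor that the net argument avoids.)

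The main obstacle is step (iii): the summands $e^{-\alpha\norm{X_i}_2^2}X_iX_i^T$ are matrix-valued and have only subexponential — not bounded — fluctuations in the relevant directions, so one cannot invoke an off-the-shelf bounded matrix concentration inequality without either truncating or paying a $\log n$ penalty. The clean route will be the net-plus-scalar-Bernstein argument above, whose only inputs are subgaussianity of the marginals $\inprod{X,v}$ and the crude bound $e^{-\alpha\norm{X}_2^2}\le1$. A secondary point of care will be checking that the sample bound comes out with an additive $n+\log(1/\delta)$ rather than a multiplicative $n\log(1/\delta)$: this is because the net entropy enters inside the exponent of Bernstein's inequality, where it simply adds to the confidence parameter $\log(1/\delta)$.
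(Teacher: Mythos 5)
Your proposal is correct and follows essentially the same route as the paper: the identical decomposition of $\hat{\Phi}_{X,\alpha}-\Phi_{X,\alpha}$ via the partition function, a constant lower bound on $Z_\Phi$ for $\alpha < 1/CK^2n$, Hoeffding for $\hat{Z}_\Phi$, and concentration of the weighted second-moment matrix. The only difference is cosmetic: for your step (iii) the paper absorbs the weight into the vector (setting $Y = e^{-\alpha\norm{X}_2^2/2}X$, which is subgaussian with $\psi_2$-norm at most $K$ since the weight lies in $[0,1]$) and cites the standard subgaussian covariance estimation lemma, whereas you inline the net-plus-Bernstein argument that proves that lemma.
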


\begin{theorem}[Concentration for $\hat{\Psi}_{X,\alpha}$] \label{Psi concentration}
	There is an absolute constant $C$ such that for any $0 < \epsilon, \delta < 1$, if $N \geq CK^2(n+\log(1/\delta))\epsilon^{-2}$ and $\abs{\alpha} \leq 1/CK^2\tau(n+\tau)$, we have $\Prob[]{\norm{\hat{\Psi}_{X,\alpha} - \Psi_{X,\alpha}} > \epsilon} \leq \delta$. Here, $\tau = \log^{1/2}(N/\min\braces{\delta,K\epsilon})$.
\end{theorem}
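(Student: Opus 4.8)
The plan is to regard $\hat\Psi_{X,\alpha}$ as a ratio of a random matrix and a random scalar and to control the two separately. Write $w_i := e^{-\alpha\inprod{X_i,X_i'}}$, let $M_i := w_i\bigl(X_i(X_i')^T + X_i'X_i^T\bigr)$, and set $A_N := \frac1N\sum_{i=1}^N M_i$ and $\hat Z := \frac1N\hat Z_\Psi = \frac2N\sum_{i=1}^N w_i$, so that $\hat\Psi_{X,\alpha} = A_N/\hat Z$. By the symmetry $X \leftrightarrow X'$ one has $\E M_1 = 2 Z_\Psi\,\Psi_{X,\alpha}$, hence $\Psi_{X,\alpha} = \E A_N/(2 Z_\Psi)$. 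Two standing facts keep the ratio stable: since $\E X = 0$, Jensen's inequality gives $Z_\Psi = \E e^{-\alpha\inprod{X,X'}} \ge e^{-\alpha\inprod{\E X,\E X'}} = 1$; and routine subgaussian moment estimates (the bound on $\norm{\Psi_{X,\alpha}}$ also using the block decomposition of Lemma \ref{Phi and Psi diagonalization} and \eqref{eq: test matrices for gaussian}) give $Z_\Psi \le C$ and $\norm{\E A_N} = 2 Z_\Psi\norm{\Psi_{X,\alpha}} \le C$ for $\abs{\alpha}$ in the admissible range. Granting these, the elementary bound $\norm{A/a - B/b} \le \norm{A - B}/\abs{a} + \norm{B}\,\abs{a-b}/(\abs{a}\abs{b})$ reduces the theorem to showing that, with probability at least $1-\delta$, both $\norm{A_N - \E A_N} \le c\epsilon$ and $\abs{\hat Z - 2 Z_\Psi} \le c\epsilon$.

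The real obstruction is that the weights $w_i$ are unbounded: $\inprod{X_i,X_i'}$ is subexponential and can be as large in magnitude as $\norm{X_i}_2\norm{X_i'}_2$, which is of order $K^2 n$, so off a small exceptional event the matrices $M_i$ admit neither a uniform bound nor a finite moment generating function. I resolve this via a good event together with truncation. By concentration of the Euclidean norm of a subgaussian vector \cite{Vershynin2011b}, for a suitable absolute constant in the definition of $R := CK(\sqrt n + \tau)$ we have $\Prob{\norm{X_i}_2 > R} \le e^{-c\tau^2}$; since $\tau^2 = \log\bigl(N/\min\{\delta,K\epsilon\}\bigr) \ge \log(N/\delta)$, a union bound over the $2N$ sample vectors shows that the good event $\mathcal G$, on which $\norm{X_i}_2, \norm{X_i'}_2 \le R$ for all $i$, satisfies $\Prob{\mathcal G^c} \le \delta/2$. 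On $\mathcal G$,
\[
\abs{\alpha\inprod{X_i,X_i'}} \le \abs{\alpha}\,\norm{X_i}_2\norm{X_i'}_2 \le C\abs{\alpha}K^2(n+\tau^2) \le \tfrac12,
\]
the last step using the hypothesis $\abs{\alpha} \le 1/CK^2\tau(n+\tau)$ and the inequality $n+\tau^2 \le \tau(n+\tau)$ valid for $\tau \ge 1$; hence every $w_i$ lies in $[e^{-1/2},e^{1/2}]$ on $\mathcal G$. To restore independence I pass to the truncated vectors $\tilde X_i$, equal to $X_i$ when $\norm{X_i}_2 \le R$ and to $0$ otherwise, likewise $\tilde X_i'$, together with the corresponding truncated objects $\tilde w_i$, $\tilde M_i$, $\tilde A_N$, $\tilde Z$; since these agree with the untruncated ones on $\mathcal G$, it suffices to concentrate $\tilde A_N$ and $\tilde Z$ about their own means and then bound the truncation biases $\norm{\E\tilde A_N - \E A_N}$ and $\abs{\E\tilde Z - 2 Z_\Psi}$. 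Each bias is an expectation, restricted to the event that $\norm{X}_2$ or $\norm{X'}_2$ exceeds $R$, of $e^{-\alpha\inprod{X,X'}}$ times a bounded power of $\norm{X}_2\norm{X'}_2$; bounding $e^{-\alpha\inprod{X,X'}} \le e^{\abs{\alpha}\norm{X}_2\norm{X'}_2}$, applying Cauchy--Schwarz, and using that $\abs{\alpha}$ is small enough to keep $\E e^{c\abs{\alpha}\norm{X}_2\norm{X'}_2} = O(1)$, one bounds each bias by $e^{-c\tau^2}$ times a fixed polynomial in $n$ and $K$. The choice $\tau^2 \ge \log\bigl(N/(K\epsilon)\bigr)$ is exactly what forces these biases below $c\epsilon$.

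It remains to concentrate the truncated quantities, for which the weights are now bounded and everything is i.i.d. For $\tilde Z = \frac2N\sum_i \tilde w_i$, an average of i.i.d.\ variables with values in $[e^{-1/2},e^{1/2}]$, Hoeffding's inequality gives $\abs{\tilde Z - \E\tilde Z} \le c\epsilon$ with probability at least $1-\delta/4$ once $N \gtrsim \log(1/\delta)\,\epsilon^{-2}$; no net is needed. For $\tilde A_N$ I take a $\tfrac14$-net $\mathcal N$ of $S^{n-1}$ with $\abs{\mathcal N} \le 9^n$: since $\tilde A_N - \E\tilde A_N$ is symmetric, $\norm{\tilde A_N - \E\tilde A_N} \le 2\sup_{v\in\mathcal N}\abs*{v^T(\tilde A_N - \E\tilde A_N)v}$, so it is enough to control, for each fixed $v \in S^{n-1}$, the scalar average $\frac2N\sum_i \tilde w_i\inprod{\tilde X_i,v}\inprod{\tilde X_i',v}$. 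Its summand is a factor in $[e^{-1/2},e^{1/2}]$ times the subexponential variable $\inprod{\tilde X_i,v}\inprod{\tilde X_i',v}$, whose subexponential norm is at most $\norm{\inprod{\tilde X_i,v}}_{\psi_2}\norm{\inprod{\tilde X_i',v}}_{\psi_2} \le K^2$; crucially, since $\tilde X_i$ and $\tilde X_i'$ are independent and $\E\inprod{\tilde X_i,v}^2 \le \E\inprod{X_i,v}^2 = 1$ by isotropy, the \emph{variance} of the summand is $O(1)$ rather than $O(K^4)$. Bernstein's inequality therefore gives $\Prob{\abs*{v^T(\tilde A_N - \E\tilde A_N)v} > c\epsilon} \le 2\exp\bigl(-c'N\min\{\epsilon^2,\epsilon/K^2\}\bigr)$, and a union bound over $\mathcal N$ yields $\norm{\tilde A_N - \E\tilde A_N} \le c\epsilon$ with probability at least $1-\delta/4$ provided $N\min\{\epsilon^2,\epsilon/K^2\} \gtrsim n+\log(1/\delta)$, which (using $K \ge 1$ and $\epsilon < 1$) holds once $N \ge CK^2(n+\log(1/\delta))\epsilon^{-2}$. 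Combining the two concentration bounds with the truncation biases and the standing estimates through the ratio inequality above, and adding $\Prob{\mathcal G^c}$ to the failure probability, gives $\norm{\hat\Psi_{X,\alpha} - \Psi_{X,\alpha}} \le \epsilon$ with probability at least $1-\delta$.

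The main obstacle, and the source of the unusual hypothesis on $\alpha$, is precisely the step above: running this argument requires $\abs{\alpha\inprod{X_i,X_i'}}$ to stay below a fixed constant uniformly over all $2N$ pairs, which forces $\abs{\alpha}$ to scale like $1/K^2 n$ up to the logarithmic factor $\tau$. The second, more delicate point is that the truncation must not distort the relevant expectations by more than $\epsilon$, and this is what dictates the precise form $\tau = \log^{1/2}\bigl(N/\min\{\delta,K\epsilon\}\bigr)$: the $\delta$ controls $\Prob{\mathcal G^c}$, while the $K\epsilon$ controls the truncation bias.
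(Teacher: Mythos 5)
Your proposal follows essentially the same route as the paper's proof: the same ratio decomposition into a $\norm{\hat\Sigma-\Sigma}$ term and a $\abs{\hat Z_\Psi - Z_\Psi}$ term, the same $\tfrac14$-net plus Bernstein argument for the matrix part with Hoeffding for the partition function, and the same device of a high-probability event on which the weights $e^{-\alpha\inprod{X_i,X_i'}}$ are $O(1)$, with the resulting bias controlled by H\"older/Cauchy--Schwarz and absorbed via the $K\epsilon$ term in the definition of $\tau$. The one (welcome) refinement is that you truncate each sample vector individually rather than multiplying every summand by the indicator of the global good event, which keeps the truncated summands genuinely i.i.d.\ before Bernstein is applied.
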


We use the following notion of distance between subspaces.

\begin{definition}[Subspace distance]
	Let $F$ and $F'$ be subspaces of $\R^n$ of dimensions $m$. Let $U$ $U'$ be matrices whose columns form an orthonormal basis for $F$ and $F'$ respectively. The distance between $F$ and $F'$ is defined to be $d(F,F') := \norm{UU^T-U'(U')^T}_F$.
\end{definition}

\begin{lemma}[Guarantee for $\hat{E}_\Phi$]
	Suppose the moments of $\norm{\tilde{X}}_2^2$ and $\norm{g_d}_2^2$ agree up to order $r-1$, but there is a number $\Delta > 0$ such that $\abs*{\E{ \norm{\tilde{X}}_2^{2r}} - \E{ \norm{g_d}_2^{2r}}} \geq \Delta$. For any $\delta,\epsilon \in (0,1)$, pick $\alpha_1$ such that  $0 < \alpha_1 < \min\braces{\Delta r/(CK^2)^r(d^{r+1}+(r+1)!),1/CK^2n}$, and $\beta_1 = \Delta\alpha_1^{r-1}/4d(r-1)!$. Then with probability at least $1-\delta$, {\sc Reweighted PCA} with $2N \geq CK^2d^{3/2}(n+\log(1/\delta))/\beta_1^2\epsilon^2$ samples together with this choice of $\alpha_1$ and $\beta_1$ produces a nontrivial estimate $\hat{E}_\Phi$ of dimension $1 \leq \hat{d}_\Phi \leq d$, such that there is a $\hat{d}_\Phi$-dimensional subspace $E_\Phi \subset E$ satisfying $d(\hat{E}_\Phi,E_\Phi) \leq \epsilon$.
\end{lemma}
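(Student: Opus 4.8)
Write $\Phi := \Phi_{X,\alpha_1}$, $\hat\Phi := \hat\Phi_{X,\alpha_1}$ and $\mu := (2\alpha_1+1)^{-1}$ for the ``gaussian'' eigenvalue. The plan is the standard three-step route: (i) produce a population spectral gap for $\Phi$ localized on $E$; (ii) invoke the concentration bound to show $\hat\Phi$ is operator-norm close to $\Phi$; (iii) use Davis--Kahan-type eigenvector perturbation to transfer the gap and the localization to $\hat\Phi$, then translate the resulting principal-angle bound into the Frobenius subspace distance.

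\emph{Step (i): population gap.} By the block diagonalization of Lemma \ref{Phi and Psi diagonalization}, the spectrum of $\Phi$ consists of the $d$ eigenvalues of $\Phi_{\tilde X,\alpha_1}$ (with eigenvectors in $E$) together with $n-d$ copies of $\mu$ coming from the $E^\perp$-block $\Phi_{g,\alpha_1} = \mu I_{n-d}$, cf. \eqref{eq: test matrices for gaussian}. Now $Z_{\Phi,\tilde X}(\alpha) = \E e^{-\alpha\norm{\tilde X}_2^2}$ has Taylor coefficients governed by the moments of $\norm{\tilde X}_2^2$, so under the hypothesis these coefficients agree with those of $Z_{\Phi,g_d}$ up to order $r-1$ and differ by at least $\Delta\abs{\alpha}^r/r!$ at order $r$. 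Passing to $-(\log Z_{\Phi,\tilde X})'$ and $-(\log Z_{\Phi,g_d})'$ and using Lemma \ref{lem: trace of Phi and Psi}, the order-$(r-1)$ term of the difference of $\frac1d\tr\Phi_{\tilde X,\alpha_1}$ from $\mu$ has size $\asymp \Delta\abs{\alpha_1}^{r-1}/d(r-1)!$; a tail estimate using the subgaussian moment bounds $\E\norm{\tilde X}_2^{2k},\,\E\norm{g_d}_2^{2k}\lesssim(CK^2)^k(d^k+k!)$ shows this leading term dominates once $\abs{\alpha_1}$ is below the stated threshold $\Delta r/(CK^2)^r(d^{r+1}+(r+1)!)$. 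This is exactly the $\Phi$-branch of Theorem \ref{thm: second gaussian test, robust} re-expressed in terms of $\Delta$; with the constants arranged so that $\beta_1 = \Delta\alpha_1^{r-1}/4d(r-1)!$ is a quarter of the guaranteed deviation of the averaged eigenvalue, we conclude that $\Phi$ has at least one eigenvalue $\lambda^\ast$ with eigenvector in $E$ and $\abs{\lambda^\ast-\mu}\ge 4\beta_1$.

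\emph{Steps (ii)--(iii): concentration and perturbation.} Set $\epsilon' := c\,\beta_1\epsilon/\sqrt d$ for a small absolute constant $c$ (so that also $\epsilon'\le\beta_1/4$, which holds since $\epsilon<1$); since $\alpha_1<1/CK^2n$, Theorem \ref{Phi concentration} gives $\norm{\hat\Phi-\Phi}\le\epsilon'$ with probability at least $1-\delta$ once $N\gtrsim K^2(n+\log(1/\delta))/\epsilon'^2 \asymp K^2 d(n+\log(1/\delta))/\beta_1^2\epsilon^2$, and the stated bound $2N\ge CK^2d^{3/2}(n+\log(1/\delta))/\beta_1^2\epsilon^2$ certainly suffices. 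Condition on this event and write $N_0 := \hat\Phi-\Phi$. Non-triviality: by Weyl's inequality $\hat\Phi$ has an eigenvalue within $\epsilon'<\beta_1$ of $\lambda^\ast$, hence at distance $>3\beta_1>\beta_1$ from $\mu$, so $\hat E_\Phi\neq\{0\}$. For the localization, let $\hat v$ be any unit eigenvector of $\hat\Phi$ with eigenvalue $\hat\lambda$ satisfying $\abs{\hat\lambda-\mu}>\beta_1$, and decompose $\hat v = v_E + v_\perp$ with $v_E\in E$, $v_\perp\in E^\perp$. From $\Phi\hat v = \hat\lambda\hat v - N_0\hat v$, projecting onto $E^\perp$ and using that the $E^\perp$-block of $\Phi$ is $\mu I$ gives $(\hat\lambda-\mu)v_\perp = -P_{E^\perp}N_0\hat v$, whence $\norm{v_\perp}\le\epsilon'/\beta_1$. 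In particular $v_E\neq0$, so $P_E$ is injective on $\hat E_\Phi$, giving $\hat d_\Phi\le d$; and summing over an orthonormal eigenbasis of $\hat E_\Phi$, $\norm{P_{E^\perp}\hat P_\Phi}_F^2 = \sum_i\norm{P_{E^\perp}\hat v_i}_2^2 \le \hat d_\Phi(\epsilon'/\beta_1)^2$, where $\hat P_\Phi$ is the orthogonal projection onto $\hat E_\Phi$.

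\emph{Conclusion.} Since $\norm{P_{E^\perp}\hat P_\Phi}_F<1$, the map $P_E$ carries $\hat E_\Phi$ bijectively onto the $\hat d_\Phi$-dimensional subspace $E_\Phi := P_E(\hat E_\Phi)\subseteq E$, and the principal angles $\Theta$ between $\hat E_\Phi$ and $E_\Phi$ coincide with those between $\hat E_\Phi$ and $E$, so that $\sum_i\sin^2\theta_i = \norm{P_{E^\perp}\hat P_\Phi}_F^2$. Therefore $d(\hat E_\Phi,E_\Phi) = \sqrt2\,\norm{\sin\Theta}_F = \sqrt2\,\norm{P_{E^\perp}\hat P_\Phi}_F \le \sqrt{2\hat d_\Phi}\,\epsilon'/\beta_1 \le \sqrt{2d}\,\epsilon'/\beta_1 \le \epsilon$ by the choice of $\epsilon'$, which is the claim. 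I expect Step (i)---the quantitative expansion of $-(\log Z_{\Phi,\tilde X})'$ and the accompanying tail bound on the higher moments of $\norm{\tilde X}_2^2$ that produces the precise factorial/power-of-$d$ thresholds---to be the main technical obstacle; the perturbation argument is unusually clean here only because one of the two blocks of $\Phi$ is exactly a scalar matrix, so the Davis--Kahan step collapses to the one-line identity $(\hat\lambda-\mu)v_\perp = -P_{E^\perp}N_0\hat v$. (The companion statement for $\hat E_\Psi$ is entirely analogous, using Theorem \ref{Psi concentration} and the $\Psi$-branch of Theorem \ref{thm: second gaussian test, robust}.)
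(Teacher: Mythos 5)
Your Step (i) is the same as the paper's: both reduce the population gap to the $\Phi$-branch of the quantitative second gaussian test (Theorem \ref{Phi and Psi difference theorem}), which gives $\bigl|\frac1d\sum_i\lambda_i(\Phi_{\tilde X,\alpha_1})-\mu\bigr|\ge 2\beta_1$ and hence at least one eigenvalue of $\Phi$ with eigenvector in $E$ at distance $\ge 2\beta_1$ from $\mu$. (Your "$4\beta_1$" is a constant-tracking slip --- with $\beta_1=\Delta\alpha_1^{r-1}/4d(r-1)!$ the guaranteed average deviation is $2\beta_1$, not $4\beta_1$ --- but this is harmless since you only need $|\lambda^\ast-\mu|-\epsilon'>\beta_1$ for non-triviality.) Step (ii) is also identical: Theorem \ref{Phi concentration} applied with accuracy $\epsilon'\asymp\beta_1\epsilon/\sqrt d$.

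Where you genuinely diverge is the perturbation step. The paper locates, by pigeonhole, an eigengap of size at least $\beta_1/2d$ in the spectrum of $\Phi$ near $\mu$, applies the Davis--Kahan theorem to the eigenspaces $F$ and $\hat F$ on either side of that gap, and then shows separately that $\hat E_\Phi\subseteq\hat F$ via eigenvalue perturbation. You instead exploit the fact that the $E^\perp$-block of $\Phi$ is \emph{exactly} $\mu I_{n-d}$: for any selected empirical eigenvector, projecting $\Phi\hat v=\hat\lambda\hat v-N_0\hat v$ onto $E^\perp$ gives $(\hat\lambda-\mu)v_\perp=-P_{E^\perp}N_0\hat v$ and hence $\|v_\perp\|\le\epsilon'/\beta_1$, with the threshold $\beta_1$ itself playing the role of the eigengap. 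This is correct and is in fact sharper: you never pay the $1/d$ pigeonhole loss in the gap, so your argument needs only $N\gtrsim K^2 d(n+\log(1/\delta))/\beta_1^2\epsilon^2$ rather than the paper's $d^{3/2}$, and it identifies $E_\Phi$ explicitly as $P_E(\hat E_\Phi)$ rather than abstractly. Two small points to tighten: the injectivity of $P_E$ on $\hat E_\Phi$ should be deduced from $\|P_{E^\perp}\hat P_\Phi\|<1$ (which your Frobenius bound supplies) rather than merely from each individual eigenvector having a nonzero $E$-component, since a linear combination could a priori still land in $E^\perp$; and the identification of the principal angles between $\hat E_\Phi$ and $P_E(\hat E_\Phi)$ with those between $\hat E_\Phi$ and $E$ deserves the one-line verification via the canonical vectors. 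Neither affects correctness.
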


\begin{proof}
	Combining Lemmas \ref{Phi and Psi diagonalization}, \ref{lem: trace of Phi and Psi}, and \ref{lem: formula for Phi and Psi for gaussian} tells us that in the right coordinates, $\Phi_{X,\alpha}$ block diagonalizes as
	\begin{align}
	\Phi_{X,\alpha} = \left(
	\begin{array}{c|c}
	\Phi_{\tilde{X},\alpha} & 0 \\
	\hline
	0 & (2\alpha+1)^{-1}I_{n-d},
	\end{array}
	\right).
	\end{align}
	
	Next, label the eigenvalues of $\Phi_{X,\alpha_1}$ as $\lambda_1 \geq \lambda_2 \geq \cdots \geq \lambda_n$. We can find $0 \leq p \leq q \leq n$ such that the eigenvalues corresponding to the $\Phi_{\tilde{X},\alpha_1}$ block are $\lambda_1,\lambda_2,\ldots,\lambda_p,\lambda_{q+1},\ldots,\lambda_n$. Using Theorem \ref{Phi and Psi difference theorem}, we then have
	\begin{equation}
	\abs*{\frac{1}{d}\paren*{\sum_{i=1}^p\lambda_i+\sum_{i=q+1}^n\lambda_i}-\frac{1}{2\alpha_1+1}} \geq \frac{\Delta}{2d(r-1)!}\alpha_1^{r-1} = 2\beta_1.
	\end{equation}
	
	In particular, we have $\frac{1}{p}\sum_{i=1}^p\lambda_i - 1/(2\alpha_1+1) \geq 2\beta_1$, and $1/(2\alpha_1+1) - \frac{1}{n-q}\sum_{i=q+1}^n\lambda_i \geq 2\beta_1$. Since at least one of these sums of eigenvalues is non-empty, truncating the eigenvalues of $\Phi_{X,\alpha_1}$ at the $\beta_1$ level gives us a non-trivial subspace of $E$.
	
	In order to show that our empirical estimate $\hat{\Phi}_{X,\alpha_1}$ also has an approximation to this property, we will need to use the eigenvector perturbation theory explained in Appendix \ref{eigenvector perturbation}. First, we need to bound from below the ``eigengap'' in $\Phi_{X,\alpha_1}$. Suppose first that $p \geq 1$, i.e. that there are eigenvalues larger than $(2\alpha_1+1)^{-1}$. Then by the pigeonhole principle, one can find $i$ such that $(2\alpha_1+1)^{-1} + \beta_1/2 \geq \lambda_{i+1} \geq (2\alpha_1+1)^{-1}$ and $\lambda_i - \lambda_{i+1} \geq \beta_1/2d$. Similarly, if $q \leq n-1$, then we can find $j$ such that $(2\alpha_1+1)^{-1} \geq \lambda_{j-1} \geq (2\alpha_1+1)^{-1} - \beta_1/2$ and $\lambda_{j-1}-\lambda_j \geq \beta_1/2d$.
	
	Now let $F$ be the span of the eigenvectors of $\Phi_{X,\alpha_1}$ corresponding to $\lambda_1,\ldots,\lambda_i,\lambda_j,\ldots,\lambda_n$, and let $\hat{F}$ be the eigenvectors of $\hat{\Phi}_{X,\alpha_1}$ corresponding to $\hat{\lambda}_1,\ldots,\hat{\lambda}_i,\hat{\lambda}_j,\ldots,\hat{\lambda}_n$. By Theorem \ref{Phi concentration}, with probability at least $1-\delta$, we have
	\begin{equation} \label{Phi difference for correctness proof}
	\norm{\hat{\Phi}_{X,\alpha} - \Phi_{X,\alpha}} \leq \frac{\beta_1\epsilon}{4\sqrt{2}d^{3/2}}.
	\end{equation}
	We may then use Theorem \ref{Davis-Kahan theorem} to see that $d(\hat{F},F) \leq \epsilon$.
	
	We are not yet done, because we do not have access to $\hat{F}$. Nonetheless, we can show that $\hat{F}$ contains $\hat{E}_\Phi$. Using eigenvalue perturbation inequalities together with equation \eqref{Phi difference for correctness proof} tells us that we have
	\begin{equation}
	\hat{\lambda}_{i+1} \leq \lambda_{i+1} + \frac{\beta_1\epsilon}{2d} \leq (2\alpha_1+1)^{-1} + \frac{\beta_1}{2} + \frac{\beta_1\epsilon}{2d} \leq (2\alpha_1+1)^{-1} + \frac{2\beta_1}{3},
	\end{equation}
	and similarly that
	\begin{equation}
	\hat{\lambda}_{j-1} \leq \lambda_{j-1} - \frac{\beta_1\epsilon}{2d} \leq (2\alpha_1+1)^{-1} - \frac{\beta_1}{2} - \frac{\beta_1\epsilon}{2d} \leq (2\alpha_1+1)^{-1} - \frac{2\beta_1}{3}.
	\end{equation}
	Let $\hat{I}_\Phi = \braces{i \ \colon \abs{\hat{\lambda}_i - (1-2\alpha_1)^{-1}} > \beta_1}$. We see that this set does not contain any index between $i+1$ and $j-1$, so $\hat{E}_\Phi$, which comprises the span of the eigenvectors to these eigenvalues, does not contain any eigenvector that $\hat{F}$ does not contain, as was to be shown. The inclusion then implies that we may find a subspace $E_\Phi \subset F$ such that $d(\hat{E}_\Phi,E_\Phi) \leq \epsilon$.
	
	Finally, we observe that $\dim{\hat{E}_\Phi} \geq 1$, since
	\begin{equation}
	\frac{1}{p}\sum_{i=1}^p\hat{\lambda}_i - \frac{1}{2\alpha_1+1} \geq \frac{1}{p}\sum_{i=1}^p\lambda_i - \frac{\beta_1\epsilon}{2d} - \frac{1}{2\alpha_1+1} > \beta_1,
	\end{equation}
	and
	\begin{equation}
	\frac{1}{2\alpha_1+1} - \frac{1}{n-q}\sum_{i=q+1}^n\hat{\lambda}_i \geq \frac{1}{2\alpha_1+1} - \frac{1}{n-q}\sum_{i=q+1}^n\lambda_i - \frac{\beta_1\epsilon}{2d} > \beta_1.
	\end{equation}	
\end{proof}

\begin{lemma}[Guarantee for $\hat{E}_\Psi$]
	Suppose the moments of $\inprod{X,X'}$ and $\inprod{g,g'}$ agree up to order $r-1$ but $\abs*{\E{ \inprod{X,X'}^r} - \E{ \inprod{g,g'}^r}} \geq \Delta$. For any $\delta,\epsilon,\tau \in (0,1)$, pick $0 < \alpha_2 < \min\braces{\Delta r/(CK^2)^r(d^{r+1}+(r+1)!),1/CK^2n^{1+\tau}}$, and $\beta_2 = \Delta\alpha_2^{r-1}/4d(r-1)!$. Then with probability at least $1-\delta$, {\sc Reweighted PCA} with sample size $2N$ satisfying $\exp(n^{2\tau})\min\braces{\delta,K\epsilon} \geq 2N \geq CK^2d^{3/2}(n+\log(1/\delta))/\beta_2^2\epsilon^2$, together with this choice of $\alpha_2$ and $\beta_2$ produces a nontrivial estimate $\hat{E}_\Psi$ of dimension $1 \leq \hat{d}_\Psi \leq d$, such that there is a $\hat{d}_\Psi$-dimensional subspace $E_\Psi \subset E$ satisfying $d(\hat{E}_\Psi,E_\Psi) \leq \epsilon$.
\end{lemma}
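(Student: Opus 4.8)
The plan is to transcribe the proof of the guarantee for $\hat{E}_\Phi$, replacing $\Phi$ by $\Psi$, the parameters $\alpha_1,\beta_1$ by $\alpha_2,\beta_2$, the gaussian eigenvalue $(2\alpha_1+1)^{-1}$ by $\alpha_2(\alpha_2^2-1)^{-1}$, the concentration bound of Theorem \ref{Phi concentration} by that of Theorem \ref{Psi concentration}, and the $\Phi$-deviation estimate by its $\Psi$-counterpart from Theorem \ref{Phi and Psi difference theorem}. Concretely, I would first invoke Lemmas \ref{Phi and Psi diagonalization}, \ref{lem: trace of Phi and Psi}, and \ref{lem: formula for Phi and Psi for gaussian} to write $\Psi_{X,\alpha_2}$ in block form, so that $n-d$ of its eigenvalues equal $\alpha_2(\alpha_2^2-1)^{-1}$ and the remaining $d$ are the eigenvalues of $\Psi_{\tilde{X},\alpha_2}$. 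Ordering all eigenvalues $\lambda_1 \geq \cdots \geq \lambda_n$ and choosing $0 \leq p \leq q \leq n$ so that $\lambda_1,\ldots,\lambda_p,\lambda_{q+1},\ldots,\lambda_n$ are exactly the $\Psi_{\tilde{X},\alpha_2}$-eigenvalues, the hypothesis on the moments of $\inprod{X,X'}$, together with the fact that $\alpha_2$ lies in the admissible range, gives
\begin{equation}
\abs*{\frac{1}{d}\paren*{\sum_{i=1}^p \lambda_i + \sum_{i=q+1}^n \lambda_i} - \frac{\alpha_2}{\alpha_2^2-1}} \geq \frac{\Delta}{2d(r-1)!}\,\alpha_2^{r-1} = 2\beta_2 ,
\end{equation}
so one of the averages $\frac{1}{p}\sum_{i\leq p}\lambda_i$, $\frac{1}{n-q}\sum_{i>q}\lambda_i$ differs from $\alpha_2(\alpha_2^2-1)^{-1}$ by at least $2\beta_2$; since only the $\Psi_{\tilde{X},\alpha_2}$-block can contribute eigenvalues unequal to $\alpha_2(\alpha_2^2-1)^{-1}$, truncating the spectrum at level $\beta_2$ already isolates a nonzero subspace of $E$ at the population level.

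Next I would reproduce the eigengap bookkeeping: a pigeonhole over the at most $d$ nongaussian-block eigenvalues lying in a window of width $\beta_2/2$ immediately above (respectively below) $\alpha_2(\alpha_2^2-1)^{-1}$ produces, when $p \geq 1$, an index $i$ with $\lambda_i - \lambda_{i+1} \geq \beta_2/2d$ and $\lambda_{i+1}$ still within $\beta_2/2$ of the gaussian value, and likewise an index $j$ when $q \leq n-1$. Letting $F$ be the span of the $\Psi_{X,\alpha_2}$-eigenvectors for $\lambda_1,\ldots,\lambda_i,\lambda_j,\ldots,\lambda_n$, the inequalities $i \leq p$ and $j \geq q+1$ force $F \subseteq E$. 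Defining $\hat{F}$ analogously from $\hat{\Psi}_{X,\alpha_2}$, I would apply Theorem \ref{Psi concentration} with its tolerance set to $\beta_2\epsilon/(4\sqrt2\,d^{3/2})$ to obtain $\norm*{\hat{\Psi}_{X,\alpha_2} - \Psi_{X,\alpha_2}} \leq \beta_2\epsilon/(4\sqrt2\,d^{3/2})$ with probability at least $1-\delta$, and then combine this with the eigengap $\beta_2/2d$ and the Davis--Kahan bound (Theorem \ref{Davis-Kahan theorem}) to get $d(\hat{F}, F) \leq \epsilon$. A Weyl-inequality argument then moves $\hat{\lambda}_{i+1}$ and $\hat{\lambda}_{j-1}$ to within $\frac{2}{3}\beta_2 < \beta_2$ of $\alpha_2(\alpha_2^2-1)^{-1}$, so the index set $\hat{I}_\Psi := \braces{k : \abs*{\hat{\lambda}_k - \alpha_2(\alpha_2^2-1)^{-1}} > \beta_2}$ omits every index in $[i+1, j-1]$; hence $\hat{E}_\Psi \subseteq \hat{F}$, and from $d(\hat{F}, F) \leq \epsilon$ one extracts a $\hat{d}_\Psi$-dimensional subspace $E_\Psi$ of $F \subseteq E$ with $d(\hat{E}_\Psi, E_\Psi) \leq \epsilon$. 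Nontriviality ($1 \leq \hat{d}_\Psi$) follows by perturbing the two displayed averaged-eigenvalue inequalities by $\beta_2\epsilon/2d$, exactly as in the $\hat{E}_\Phi$ case, and $\hat{d}_\Psi \leq d$ is automatic.

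The one step that is genuinely new — and the reason the hypotheses of this lemma differ in shape from those for $\hat{E}_\Phi$ — is verifying that the chosen $\alpha_2 < 1/CK^2 n^{1+\tau}$ is admissible for Theorem \ref{Psi concentration}, whose requirement reads $\abs{\alpha} \leq 1/CK^2\tau(n+\tau)$ with $\tau = \log^{1/2}(N/\min\braces{\delta, K\epsilon'})$ and $\epsilon' = \beta_2\epsilon/(4\sqrt2\,d^{3/2})$. This is where the ceiling $\exp(n^{2\tau})\min\braces{\delta, K\epsilon} \geq 2N$ enters: it forces $n^{2\tau} \geq \log(2N/\min\braces{\delta,K\epsilon}) \geq \tau^2$, hence $n^\tau \geq \tau$, and since $\tau \in (0,1)$ this yields $\tau(n+\tau) \leq n^\tau(n + n^\tau) \leq 2n^{1+\tau}$, so the chosen range for $\alpha_2$ sits inside the range demanded by the concentration theorem after harmlessly doubling the absolute constant. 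I expect this to be the main obstacle — not because it is deep, but because it requires reconciling the three separate appearances of $\tau$ and checking that the two-sided window on the sample size $2N$ is nonempty for the stated parameters; everything else is a line-by-line transcription of the $\hat{E}_\Phi$ argument.
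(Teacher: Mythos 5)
Your proposal is correct and follows exactly the route the paper takes: the paper's entire proof of this lemma is the single sentence ``completely analogous to the previous theorem, replacing the estimates for $\Phi_{X,\alpha_1}$ with those for $\Psi_{X,\alpha_2}$,'' and your line-by-line transcription, together with the check that $\alpha_2 < 1/CK^2n^{1+\tau}$ sits inside the admissible range of Theorem \ref{Psi concentration} under the ceiling $2N \leq \exp(n^{2\tau})\min\{\delta,K\epsilon\}$, is precisely what that sentence leaves implicit. The only caveat is that your reconciliation of the $\tau$'s uses $\min\{\delta,K\epsilon\}$ where the concentration theorem's $\tau$ actually involves the smaller tolerance $\beta_2\epsilon/(4\sqrt{2}d^{3/2})$, a constant-level bookkeeping discrepancy the paper itself never addresses.
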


\begin{proof}
	The proof is completely analogous to that for the previous theorem, except that we replace our estimates and identities for $\Phi_{X,\alpha_1}$ with those for $\Psi_{X,\alpha_2}$ wherever necessary.
\end{proof}

\begin{proof}[Proof of Theorem \ref{PCA guarantee}]
	Combine the last two lemmas with Theorem \ref{thm: second gaussian test, robust} from the last section.
\end{proof}

\begin{remark}[Selecting optimal parameters]
	If the problem parameters $d, D, K$ and $r$ were known before hand, then in principle, one could compute the optimal tuning parameters $\alpha_1,\alpha_2,\beta_1,\beta_2$. In practice, however, one rarely is in this situation, so one would have to estimate the problem parameters as a first step to solving the NGCA problem. Nonetheless, one can do this by the doubling/halving trick. In other words, we start with some fixed initial choice of $\alpha_1$ and $\alpha_2$. Using Theorems \ref{Phi concentration} and \ref{Psi concentration}, we can detect whether there are any outlier eigenvalues with high probability. If there are none, we halve $\alpha_1$ and $\alpha_2$ and try again, repeating this process until outliers show up. The number of iterations is then the base 2 logarithm of the final $\alpha_1$ and $\alpha_2$, plus an additive constant. This is logarithmic in $d, D$, and $K$, and polynomial in $r$, so the algorithm remains efficient.
\end{remark}

\section{Discussion} \label{discussion}

We have presented and analyzed an algorithm that is guaranteed to return at least one non-gaussian direction efficiently, with sample and time complexity a polynomial in the problem parameters for a fixed $r$, where $r$ is the smallest order at which the moments of any marginal of $\tilde{X}$ differ from those of a standard gaussian.

Since the degree of the polynomial increases linearly in $r$, it would seem that the algorithm is practically useless if $r$ is larger than a small constant. However, note that having all third and fourth moments equal those of a gaussian is a condition that is already stringent in one dimension, and which becomes even more so in higher dimensions. As such, unless $\tilde{X}$ has some kind of adversarial distribution, $r$ will be either $4$ or $3$, depending on whether $\tilde{X}$ is centrally symmetric or not.

The algorithm also often delivers much more than is guaranteed for several reasons. First, in order to bound the subspace perturbation by $\epsilon$, we used a very crude estimate of the eigengap, bounding it from below using the pigeonhole principle, which in the worst case assumes that the eigenvalues are spread out at regular intervals. This should not happen in practice, and we expect the non-gaussian eigenvalues to instead cluster relatively tightly around their average. If this happens, the sample complexity requirement can be relaxed by a factor of $d$.

Second, just as it is extremely unlikely for $r$ to be higher than $4$, for a general non-gaussian $\tilde{X}$ and a small, random $\alpha$, it is extremely unlikely for any of the non-gaussian values of $\Phi_{X,\alpha}$ to be equal to the gaussian one on the dot. This means that even though the guarantee is for one direction, in practice we most probably can recover the entire subspace $E$ simultaneously with just $\hat{\Phi}_{X,\alpha}$ alone, albeit with a more sophisticated truncation technique. Indeed, we can provably recover all non-gaussian directions when $\tilde{X}$ is an unconditional random vector, examples of which include the uniform distributions on spheres and hypercubes.

\subsection{Conjectures and questions}

We conjecture that Reweighted PCA actually recovers the entire non-gaussian subspace $E$, so long as $\tilde{X}$ does not have any gaussian marginals. The first gaussian test for a random vector $X$ using the distribution of its norm and dot product pairing also leads to further questions. For a fixed nonzero real number $t$, both of these appear in the formula for $\norm{Y_t}_2^2$, where we set $Y_t := X+tX'$, so it is natural to ask whether Reweighted PCA works with $\Phi_{Y_t,\alpha}$ alone for some $t$. In particular, does it work for $t=-1$? It is also an open question whether $\inprod{X,X'}$ alone is sufficient to test whether $X$ is standard gaussian.

\subsection*{Acknowledgements}

Both authors are partially supported by NSF Grant DMS 1265782 and U.S. Air Force Grant FA9550-14-1-0009.

\nocite{*}
\bibliographystyle{acm}
\bibliography{NGCA}

\begin{thebibliography}{10}

\bibitem{Arora2015}
{\sc Arora, S., Ge, R., Moitra, A., and Sachdeva, S.}
\newblock {Provable ICA with Unknown Gaussian Noise, and Implications for
  Gaussian Mixtures and Autoencoders}.
\newblock {\em Algorithmica 72}, 1 (2015), 215--236.

\bibitem{Billingsley1995b}
{\sc Billingsley, P.}
\newblock {\em {Probability and Measure - Third Edition}}.
\newblock 1995.

\bibitem{Blanchard2006b}
{\sc Blanchard, G., Kawanabe, M., Sugiyama, M., Spokoiny, V., and M{\"{u}}ller,
  K.-R.}
\newblock {In Search of Non-Gaussian Components of a High-Dimensional
  Distribution}.
\newblock {\em Journal of Machine Learning Research 7}, 2 (2006), 247--282.

\bibitem{Brubaker2008b}
{\sc Brubaker, S.~C., and Vempala, S.~S.}
\newblock {Isotropic PCA and affine-invariant clustering}.
\newblock {\em Proceedings - Annual IEEE Symposium on Foundations of Computer
  Science, FOCS\/} (2008), 551--560.

\bibitem{Cnlar2011b}
{\sc {\c{C}}inlar, E.}
\newblock {\em {Probability and stochastics}}, vol.~230.
\newblock 2011.

\bibitem{Comon1994}
{\sc Comon, P.}
\newblock {Independent component analysis, A new concept?}
\newblock {\em Signal Processing 36}, 3 (1994), 287--314.

\bibitem{Davis1970a}
{\sc Davis, C., and Kahan, W.~M.}
\newblock {The rotation of eigenvectors by a perturbation. III}.
\newblock {\em SIAM Journal on Numerical Analysis 7}, 1 (1970), 1--46.

\bibitem{Diederichs2013b}
{\sc Diederichs, E., Juditsky, A., Nemirovski, A., and Spokoiny, V.}
\newblock {Sparse non Gaussian component analysis by semidefinite programming}.
\newblock {\em Machine Learning 91}, 2 (2013), 211--238.

\bibitem{Diederichs2010b}
{\sc Diederichs, E., Juditsky, A., Spokoiny, V., and Sch{\"{u}}tte, C.}
\newblock {Sparse non-Gaussian component analysis}.
\newblock {\em IEEE Transactions on Information Theory 56}, 6 (2010),
  3033--3047.

\bibitem{Frieze1996}
{\sc Frieze, A., Jerrum, M., and Kannan, R.}
\newblock {Learning linear transformations}.
\newblock {\em Proceedings of 37th Conference on Foundations of Computer
  Science\/} (1996), 359--368.

\bibitem{Goyal2014a}
{\sc Goyal, N., Vempala, S., and Xiao, Y.}
\newblock {Fourier PCA and robust tensor decomposition}.
\newblock In {\em Proceedings of the 46th Annual ACM Symposium on Theory of
  Computing - STOC '14\/} (New York, New York, USA, 2014), no.~c, ACM Press,
  pp.~584--593.

\bibitem{Huber1985b}
{\sc Huber, P.~J.}
\newblock {Projection Pursuit}.
\newblock {\em The Annals of Statistics 13}, 2 (1985), 435--475.

\bibitem{Hyvarinen1997}
{\sc Hyvarinen, A.}
\newblock {Fast and robust fixed-point algorithms for independent component
  analysis}.
\newblock {\em IEEE Transactions on Neural Networks 10}, 3 (May 1999),
  626--634.

\bibitem{Kawanabe2006}
{\sc Kawanabe, M., Sugiyama, M., Blanchard, G., and M{\"{u}}ller, K.-R.}
\newblock {A new algorithm of non-Gaussian component analysis with radial
  kernel functions}.
\newblock {\em Annals of the Institute of Statistical Mathematics 59}, 1
  (2006), 57--75.

\bibitem{Kawanabe2006b}
{\sc Kawanabe, M., and Theis, F.~J.}
\newblock {Estimating non-Gaussian subspaces by characteristic functions}.
\newblock {\em Lecture Notes in Computer Science (including subseries Lecture
  Notes in Artificial Intelligence and Lecture Notes in Bioinformatics) 3889
  LNCS\/} (2006), 157--164.

\bibitem{Kawanabe2007b}
{\sc Kawanabe, M., and Theis, F.~J.}
\newblock {Joint low-rank approximation for extracting non-Gaussian subspaces}.
\newblock {\em Signal Processing 87}, 8 (2007), 1890--1903.

\bibitem{Kawanabe2005b}
{\sc {M. Kawanabe}}.
\newblock {Linear dimension reduction based on the fourth-order cumulant
  tensor}.
\newblock {\em Proc. of Artifical Neural Networks -- ICANN 2005\/} (2005),
  151--156.

\bibitem{Sasaki2014}
{\sc Sasaki, H., Hyv{\"{a}}rinen, A., and Sugiyama, M.}
\newblock {Clustering via mode seeking by direct estimation of the gradient of
  a log-density}.
\newblock {\em Lecture Notes in Computer Science (including subseries Lecture
  Notes in Artificial Intelligence and Lecture Notes in Bioinformatics) 8726
  LNAI}, PART 3 (2014), 19--34.

\bibitem{Sasaki2016b}
{\sc Sasaki, H., Niu, G., and Sugiyama, M.}
\newblock {Non-Gaussian Component Analysis with Log-Density Gradient
  Estimation}.
\newblock {\em International Conference on Artificial Intelligence and
  Statistics 51\/} (2016), 1--20.

\bibitem{Tan2016}
{\sc Tan, Y.~S.}
\newblock {Energy optimization for distributions on the sphere and improvement
  to the Welch bounds}.
\newblock Available at \url{http://arxiv.org/abs/1612.06343}.

\bibitem{Vershynin2011b}
{\sc Vershynin, R.}
\newblock {Introduction to the non-asymptotic analysis of random matrices}.
\newblock In {\em Compressed Sensing}, Y.~C. Eldar and G.~Kutyniok, Eds.
  Cambridge University Press, Cambridge, 2011, pp.~210--268.

\bibitem{Virta2016b}
{\sc Virta, J., Nordhausen, K., and Oja, H.}
\newblock {Projection Pursuit for non-Gaussian Independent Components}.
\newblock Available at \url{https://arxiv.org/abs/1612.05445}.

\bibitem{Yu2015}
{\sc Yu, Y., Wang, T., and Samworth, R.~J.}
\newblock {A useful variant of the Davis-Kahan theorem for statisticians}.
\newblock {\em Biometrika 102}, 2 (2015), 315--323.

\end{thebibliography}

\appendix

\section{Details for Section \ref{sec: proof of first gaussian test}} \label{sec: proof of first gaussian test appendix}

\begin{lemma} \label{lem: subexponential RV characterized by moments}
	Let $X$ be a subexponential random vector in $\R^n$. Then the distribution of $X$ is determined by its moment tensors.
\end{lemma}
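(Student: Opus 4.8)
The plan is to reduce the $n$-dimensional statement to the classical one-dimensional moment problem via the Cram\'er--Wold device. The key observation is that the moment tensors of $X$ encode all moments of all one-dimensional marginals: for any $v \in S^{n-1}$ and any positive integer $r$, the tensor identity $\inprod{u^{\tensor r}, v^{\tensor r}} = \inprod{u,v}^r$ recalled at the start of Section~\ref{sec: proof of first gaussian test} gives $\E\inprod{X,v}^r = \inprod{\E X^{\tensor r}, v^{\tensor r}} = \inprod{M^r_X, v^{\tensor r}}$. Thus if two subexponential random vectors $X$ and $Y$ satisfy $M^r_X = M^r_Y$ for every $r$, then for each fixed $v$ the scalar random variables $\inprod{X,v}$ and $\inprod{Y,v}$ have identical moments of every order.

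Next I would invoke one-dimensional moment determinacy: a real random variable whose moment generating function is finite on a neighborhood of the origin is uniquely determined by its moments. This applies to $\inprod{X,v}$, since the subexponential hypothesis gives a uniform bound $\norm{\inprod{X,v}}_{\psi_1} \leq K$, hence $\E e^{t\inprod{X,v}} < \infty$ for $\abs{t} \leq c/K$. (The standard justification: the characteristic function $t \mapsto \E e^{it\inprod{X,v}}$ extends holomorphically to the strip $\abs{\Im z} < c/K$, so by the identity theorem it is determined on $\R$ by its derivatives at $0$, which are, up to powers of $i$, the moments; alternatively one checks Carleman's condition, which holds automatically given an exponential tail bound.) Consequently $\inprod{X,v}$ and $\inprod{Y,v}$ are equal in distribution for every $v \in S^{n-1}$, and the Cram\'er--Wold theorem then forces $X \stackrel{d}{=} Y$.

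The only non-routine ingredient is the scalar moment-determinacy step, and it is exactly where the subexponential (rather than merely ``finite moments of all orders'') assumption is used: it guarantees a positive radius of convergence for the moment generating function and rules out Stieltjes-type non-uniqueness. Everything else --- the tensor bookkeeping and the appeal to Cram\'er--Wold --- is routine, so I would dispatch step one in a line, cite a standard reference for step two, and conclude. A slightly more self-contained alternative that avoids Cram\'er--Wold is to work directly with the multivariate moment generating function $t \mapsto \E e^{\inprod{X,t}}$: it is finite and real-analytic on a ball about the origin, its Taylor coefficients are the $M^r_X$, and analytic continuation of the associated characteristic function on a tube domain shows the law is pinned down by the germ at $0$. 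I would mention this but default to the Cram\'er--Wold route as the cleaner exposition.
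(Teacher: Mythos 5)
Your proposal is correct and follows essentially the same route as the paper: both arguments use the tensor identity to recover all marginal moments $\E\inprod{X,v}^r$ from the moment tensors, use the subexponential hypothesis to get analyticity (hence moment determinacy) of each one-dimensional marginal, and then conclude that the full law is pinned down --- your appeal to Cram\'er--Wold is just a repackaging of the paper's step that $\phi_X$ is determined by the characteristic functions $t \mapsto \E e^{it\inprod{X,v}}$ of the marginals.
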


\begin{proof}
	Let $\phi_X(v) = \E{e^{i\inprod{X,v}}}$ denote the characteristic function of $X$, and let $K = \norm{X}_{\psi_1}$ denote the subexponential norm of $X$. We then have the following moment growth condition \cite{Vershynin2011b}:
	\begin{align} \label{moments}
	\sup_{v \in S^{n-1}}\limsup_{r \to \infty} \frac{\paren*{\E{\abs*{\inprod{X,v}}^r}}^{1/r}}{r} \lesssim K.
	\end{align}
	This condition implies that for each $v \in S^{n-1}$, the function $t \mapsto \E{e^{it\inprod{X,v}}}$ can be written as a power series with coefficients $\frac{\E{\inprod{X,v}^r}}{r!}$ \cite{Billingsley1995b}, so $\phi_X(v)$ is determined by the moments $\E{\inprod{X,v}^r}$. By \eqref{tensorid1}, $\E{\inprod{X,v}^r} = \inprod{M_X^k,v^{\tensor r}}$, so these are functions of the moment tensors. Finally, it is a fact from elementary probability that $X$ is determined by its characteristic function \cite{Cnlar2011b}.
\end{proof}

\begin{proof}[Proof of Lemma \ref{lem: moment deviation}]
	For the first statement, observe that $X_{rot} = \norm{X}_2\theta$, with $\norm{X}_2$ and $\theta$ independent. We thus have
	\[
	M^r_{X_{rot}} = \E{\paren*{\norm{X}_2\theta}^{\tensor r}} = \E{\norm{X}_2^r}\E{\theta^{\tensor r}} =  \E{\norm{X}_2^r}M_\theta^r.
	\]
	Next, rewrite \eqref{eq: orthogonality} as $\norm{E^r_X}^2_2 = \norm{M^r_X}_2^2 - \norm{M^r_{X_{rot}}}_2^2$. By definition, we have $\norm{M^r_X}_2^2 = \E{\inprod{X,X'}^r}$ and using a), we get $\norm{M^r_{X_{rot}}}_2^2 = \paren*{\E{\norm{X}^r_2}}^2\E{\inprod{\theta,\theta'}^r}$.
	
	To prove part c), fix $v$ and write
	\begin{align*}
	\E{\inprod{X,v}^r} - \E{\inprod{g,v}^r} & = \inprod{M^r_X-M^r_g,v^{\tensor r}} = \inprod{M^r_{X_{rot}}-M^r_g,v^{\tensor r}} + \inprod{E^r_X,v^{\tensor r}}.
	\end{align*}
	We use a) to write
	\[
	\inprod{M^r_{X_{rot}}-M^r_g,v^{\tensor r}} = \inprod{\E{\norm{X}_2^r}M^r_\theta-\E{\norm{g}_2^r}M^r_\theta,v^{\tensor r}} = \paren*{\E{\norm{X}_2^r}-\E{\norm{g}_2^r}}\E{\inprod{\theta,v}^r}.
	\]
	Notice that $\E{\inprod{\theta,v}^r} = \E{\inprod{\theta,\theta'}^r}$. We then combine the last two equations with b) and Cauchy-Schwarz to get \eqref{moment deviation bound}. Finally, to get the last claim, we use the fact that $\E{\inprod{\theta,\theta'}^r} = \E{\inprod{g,g'}^r} = 0$ whenever $r$ is odd.
\end{proof}

\begin{proof}[Proof of Lemma \ref{lem: comparison for even moments}]
	Observe that \eqref{moment deviation bound} gives the bound
	\begin{align} \label{marginal moment 1}
	\delta\E{ \inprod{g,v}^r} \leq \abs*{\E{\norm{X}_2^r}-\E{\norm{g}_2^r}}\E{\inprod{\theta,\theta'}^r}+\paren*{\E{\inprod{X,X'}^r} - \paren*{\E{\norm{X}^r_2}}^2\paren*{\E{\inprod{\theta,\theta'}^r}}}^{1/2}.
	\end{align}
	Suppose $\abs*{\E{\norm{X}_2^r}-\E{\norm{g}_2^r}} \leq \frac{\delta^2}{4}\E{ \inprod{g,v}^r}$. Then the second term on the right in equation \eqref{marginal moment 1} has to be large. Indeed, since $\delta \leq 1$ and $\E{\inprod{\theta,\theta'}^r} \leq 1/2$ for $r,n \geq 2$, we have
	\begin{align*}
	\paren*{\E{\inprod{X,X'}^r} - \paren*{\E{\norm{X}^r_2}}^2\E{\inprod{\theta,\theta'}^r}}^{1/2} & \geq \delta \E{ \inprod{g,v}^r} - \frac{\delta^2}{4}\E{\inprod{\theta,\theta'}^r}\E{ \inprod{g,v}^r} \\
	& \geq \frac{7\delta}{8} \E{ \inprod{g,v}^r}.
	\end{align*}	
	Now, applying the fact that $\E{ \inprod{g,g'}^r} = \paren*{\E{\norm{g}_2^r}}^2\E{ \inprod{\theta,\theta'}^r}$, we use the reverse triangle inequality and the above bound to write
	\begin{align} \label{marginal moment 2}
	\abs*{\E{\inprod{X,X'}^r} - \E{\inprod{g,g'}^r}} & \geq \abs*{\E{\inprod{X,X'}^r} - \paren*{\E{\norm{X}^r_2}}^2\E{\inprod{\theta,\theta'}^r}} - \abs*{\paren*{\E{\norm{X}_2^r}}^2-\paren*{\E{\norm{g}_2^r}}^2}\E{\inprod{\theta,\theta'}^r} \nonumber \\
	& \geq \paren*{\frac{7\delta}{8} \E{ \inprod{g,v}^r}}^2 - \abs*{\paren*{\E{\norm{X}_2^r}}^2-\paren*{\E{\norm{g}_2^r}}^2}\E{\inprod{\theta,\theta'}^r}.
	\end{align}
	
	Next, notice that
	\begin{align*}
	\abs*{\paren*{\E{\norm{X}_2^r}}^2-\paren*{\E{\norm{g}_2^r}}^2} & = \abs*{\E{\norm{X}_2^r}-\E{\norm{g}_2^r}} \paren*{\E{ \norm{X}_2^r} +\E{ \norm{g}_2^r}} \\
	& = \abs*{\E{\norm{X}_2^r}-\E{\norm{g}_2^r}}\cdot 2\E{ \norm{g}_2^r} + \paren*{\E{\norm{X}_2^r}-\E{\norm{g}_2^r}}^2,
	\end{align*}
	so by the assumption on $\abs*{\E{\norm{X}_2^r}-\E{\norm{g}_2^r}}$, we have
	\begin{align} \label{marginal moment 3}
	\abs*{\paren*{\E{\norm{X}_2^r}}^2-\paren*{\E{\norm{g}_2^r}}^2}\E{\inprod{\theta,\theta'}^r} & \leq \frac{\delta^2}{4}\E{ \inprod{g,v}^r}\cdot2\E{ \norm{g}_2^r}\cdot\E{\inprod{\theta,\theta'}^r} + \paren*{\frac{\delta^2}{4}\E{ \inprod{g,v}^r}}^2\E{\inprod{\theta,\theta'}^r} \nonumber \\
	& = \frac{\delta^2}{2} \paren*{\E{ \inprod{g,v}^r}}^2 + \paren*{\frac{\delta^2}{4}\E{ \inprod{g,v}^r}}^2\E{\inprod{\theta,\theta'}^r} \nonumber \\
	& \leq \frac{17\delta^2}{32}\paren*{\E{ \inprod{g,v}^r}}^2.
	\end{align}
	We can now substitute \eqref{marginal moment 3} into \eqref{marginal moment 2} to get
	\begin{equation*}
	\abs*{\E{\inprod{X,X'}^r} - \E{\inprod{g,g'}^r}} \geq \frac{15\delta^2}{64}\paren*{\E{ \inprod{g,v}^r}}^2. \qedhere
	\end{equation*}
\end{proof}

\section{Details for Section \ref{sec: proof of second gaussian test}} \label{sec: proof of second gaussian test appendix}

\begin{proof}[Proof of Lemma \ref{Phi and Psi diagonalization}]
	The decompositions follow easily from the independence of the two components of the mixed vector, $\tilde{X}$ and $g$, as well as the unconditional symmetry of the gaussian component. Let us illustrate this by proving the decomposition for $\Phi_{X,\alpha}$. First, note that $e^{-\alpha\norm{X}_2^2} = e^{-\alpha\norm{\tilde{X}}_2^2}e^{-\alpha\norm{g}_2^2}$, so that $Z_{\Phi,X}(\alpha) = Z_{\Phi,\tilde{X}}(\alpha)Z_{\Phi,g}(\alpha)$. The top left $d$ by $d$ block is hence given by
	\[
	\frac{\E{e^{-\alpha\norm{X}_2^2}\tilde{X}\tilde{X}^T}}{Z_{\Phi,X}(\alpha)} = \frac{Z_{\Phi,g}(\alpha)\E{e^{-\alpha\norm{\tilde{X}}_2^2}\tilde{X}\tilde{X}^T}}{Z_{\Phi,X}(\alpha)} = \frac{\E{e^{-\alpha\norm{\tilde{X}}_2^2}\tilde{X}\tilde{X}^T}}{Z_{\Phi,\tilde{X}}(\alpha)} = \Phi_{\tilde{X},\alpha}.
	\]
	The bottom right $d'$ by $d'$ block is also computed similarly. Finally, any entry outside these two blocks is of the form
	\begin{equation*}
	\frac{\E{e^{-\alpha\norm{X}_2^2}\tilde{X}_ig_j}}{Z_{\Phi,X}(\alpha)} = \frac{\E{e^{-\alpha\norm{X}_2^2}\tilde{X}_i(-g_j)}}{Z_{\Phi,X}(\alpha)} = -\frac{\E{e^{-\alpha\norm{X}_2^2}\tilde{X}_ig_j}}{Z_{\Phi,X}(\alpha)} = 0. \qedhere
	\end{equation*}
\end{proof}

\begin{proof}[Proof of Lemma \ref{lem: trace of Phi and Psi}]
	We have
	$$
	\tr{\Phi_{X,\alpha}} = \frac{\E\norm{X}_2^2e^{-\alpha\norm{X}_2^2}}{\E e^{-\alpha\norm{X}_2^2}} = \frac{-Z_{\Phi,X}'(\alpha)}{Z_{\Phi,X}(\alpha)} = -(\log Z_{\Phi,X})'(\alpha).
	$$
	The calculation for $\Psi_{X,\alpha}$ is similar.
\end{proof}

In order to prove Lemma \ref{lem: partition functions characterize gaussians}, we first need to establish the analyticity for the two partition functions.

\begin{lemma}[Analyticity for $Z_{\Phi,X}$ and $Z_{\Psi,X}$] \label{lem: analyticity}
	Let $X$ be a subgaussian random vector in $\R^n$ with subgaussian norm bounded by $K \geq 1$. The functions $Z_{\Phi,X}$ and $Z_{\Psi,X}$ are both analytic on $(-1/CK^2,1/CK^2)$. They are given by the formulae $Z_{\Phi,X}(\alpha) = \sum_{r=0}^\infty\E{\norm{X}^{2r}_2}(-\alpha)^r/r!$ and $Z_{\Psi,X}(\alpha) = \sum_{r=0}^\infty\E{\inprod{X,X'}^r}(-\alpha)^r/r!$. Furthermore, by choosing $C$ sufficiently large, on this interval they satisfy the bounds
	\begin{align} \label{MGF bounds}
	\abs{Z_{\Phi,X}(\alpha)}, \abs{Z_{\Psi,X}(\alpha)} \leq e^{CK^2n\abs{\alpha}} + \frac{CK^2\abs{\alpha}}{1-CK^2\abs{\alpha}}.
	\end{align}
\end{lemma}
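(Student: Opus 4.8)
The plan is to recognize $Z_{\Phi,X}(\alpha)=\E e^{-\alpha\norm{X}_2^2}$ and $Z_{\Psi,X}(\alpha)=\E e^{-\alpha\inprod{X,X'}}$ as moment generating functions (evaluated at $-\alpha$) of the random variables $\norm{X}_2^2$ and $\inprod{X,X'}$, and to run the classical argument for analyticity of MGFs. Concretely, I would (i) show that $\E e^{t\norm{X}_2^2}$ and $\E e^{t\inprod{X,X'}}$ are finite on an interval $\abs{t}<c/K^2$ with $c$ an absolute constant, with the bounds claimed; then (ii) invoke the standard fact (as in \cite{Billingsley1995b}, already used for Lemma \ref{lem: subexponential RV characterized by moments}) that a finite MGF on an open interval about $0$ is real-analytic there, with Taylor coefficients equal to the moments over factorials — the interchange of $\E$ and $\sum$ being justified by Tonelli once $\E e^{\abs{\alpha}\norm{X}_2^2}<\infty$ (resp. $\E e^{\abs{\alpha}\abs{\inprod{X,X'}}}<\infty$, which follows e.g. from $\abs{\inprod{X,X'}}\le\tfrac12(\norm{X}_2^2+\norm{X'}_2^2)$). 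Shrinking $c/K^2$ to $1/CK^2$ for a large enough absolute $C$ then gives the stated interval.

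The core estimate is the bound on $\E e^{t\norm{X}_2^2}$. Since $\norm{X}_2=\sup_{v\in S^{n-1}}\inprod{X,v}$, I would fix a $\tfrac12$-net $\mathcal N$ of $S^{n-1}$ with $\abs{\mathcal N}\le 5^n$, so that $\norm{X}_2\le 2\max_{v\in\mathcal N}\inprod{X,v}$ and hence $e^{t\norm{X}_2^2}\le\sum_{v\in\mathcal N}e^{4t\inprod{X,v}^2}$. Each $\inprod{X,v}$ is subgaussian with norm $\le K$, so $\E e^{4t\inprod{X,v}^2}\le 2$ whenever $t\le c/K^2$; taking expectations and a union bound gives $\E e^{t\norm{X}_2^2}\le 2\cdot 5^n\le e^{Cn}$ on that range. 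To promote this to the linear-in-$t$ form $e^{CK^2 n t}$ demanded by the statement, I would use that $t\mapsto\log\E e^{t\norm{X}_2^2}$ is convex (Hölder) and vanishes at $0$, hence on $[0,c/K^2]$ it lies below its chord, i.e. is bounded by $\tfrac{t}{c/K^2}\cdot Cn$. For $\alpha\ge 0$ one simply has $Z_{\Phi,X}(\alpha)\le 1$. This settles the $\Phi$ part.

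For $Z_{\Psi,X}$, conditioning on $X'$ and writing $X'=\norm{X'}_2 u$ with $u\in S^{n-1}$ reduces the inner expectation to a one-dimensional subgaussian MGF: $\E\big[e^{-\alpha\inprod{X,X'}}\mid X'\big]\le e^{C\alpha^2 K^2\norm{X'}_2^2}$ (the centering $\E[\inprod{X,u}\mid X']=\inprod{\E X,u}$ is $0$ in the NGCA setting and otherwise contributes a bounded factor). Integrating over $X'$ and applying the $\Phi$-bound with parameter $C\alpha^2K^2$ (legitimate once $\abs{\alpha}\lesssim 1/K^2$) gives $Z_{\Psi,X}(\alpha)\le e^{C'K^4 n\alpha^2}\le e^{C''K^2 n\abs{\alpha}}$ on $\abs{\alpha}<1/CK^2$; alternatively AM--GM gives $Z_{\Psi,X}(\alpha)\le Z_{\Phi,X}(-\abs\alpha/2)^2$ directly. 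For the explicit power series and the residual term $\tfrac{CK^2\abs\alpha}{1-CK^2\abs\alpha}$ one splits $\sum_r\tfrac{(-\alpha)^r}{r!}\E\inprod{X,X'}^r$ into the first few terms (handled by the above) and a high-order tail, which via moment bounds of the shape $\E\abs{\inprod{X,X'}}^r\le(CK^2 r)^r$ (obtained from the same conditioning) sums to a geometric series of ratio $\sim K^2\abs\alpha$; here it helps that $\E\inprod{X,X'}^r=\norm{M^r_X}_2^2\ge 0$ by \eqref{tensorid2}, which keeps these term-by-term comparisons clean.

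The main obstacle is step (i) for $\norm{X}_2^2$: naïvely, $\norm{X}_2^2=\sum_{i\le n}X_i^2$ is a sum of $n$ \emph{dependent} subexponential variables, and a crude Minkowski bound only yields $\E\norm{X}_2^{2r}\le(CK^2 n r)^r$, which would force a radius of convergence of order $1/(K^2 n)$ rather than the $n$-independent $1/CK^2$ claimed. The net argument is what rescues this — it shows the MGF of $\norm{X}_2^2$ genuinely survives up to $t\sim 1/K^2$, at the cost of an $e^{Cn}$ prefactor — and the convexity trick is what converts that prefactor into a linear-in-$t$ exponent, matching the stated bound. The rest is bookkeeping: keeping a single absolute constant $C$ consistent across the interval, the two MGF bounds, and the various ``$C$ large enough'' invocations.
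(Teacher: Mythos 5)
Your proposal is correct in substance but takes a genuinely different route from the paper. The paper never works with the MGF directly: it first derives the moment bound $\E\norm{X}_2^{2r}\le C^rK^{2r}(n^r+r!)$ (itself via a net-and-tail argument), feeds it term by term into the Tonelli-justified series $\sum_r \E\norm{X}_2^{2r}\abs{\alpha}^r/r!$ to get exactly the two-piece bound $e^{CK^2n\abs{\alpha}}+\frac{CK^2\abs{\alpha}}{1-CK^2\abs{\alpha}}$, and then establishes analyticity by hand — induction on differentiating under the integral sign plus an explicit Lagrange-remainder estimate. You instead bound $\E e^{t\norm{X}_2^2}$ directly by a net argument on the exponential and then linearize the exponent via convexity of the log-MGF, and you delegate analyticity to the standard ``finite MGF on an open interval is real-analytic with moments as Taylor coefficients'' fact. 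Your route is shorter and yields the cleaner bound $e^{CK^2n\abs{\alpha}}$ (which implies the stated one, as the residual term is nonnegative); the paper's route is more self-contained and, importantly for later use, produces the explicit Taylor remainder estimates (Lemma \ref{MGF remainder}) that are needed in the proof of Lemma \ref{lem: second gaussian test, robust lemma} — so if you only prove analyticity abstractly you would still need to redo that quantitative remainder work elsewhere. One quantitative slip: the moment bound $\E\abs{\inprod{X,X'}}^r\le(CK^2r)^r$ you invoke for the tail of the $\Psi$-series is false as stated — it is missing the dimension dependence (already at $r=1$, $\E\abs{\inprod{g,g'}}\asymp\sqrt{n}$); the correct bound, obtained by Cauchy--Schwarz as in the paper, is $\E\abs{\inprod{X,X'}}^r\le(\E\norm{X}_2^r)^2\le(CK^2)^r(n^r+r!)$, whose $n^r$ part contributes the exponential rather than a geometric series. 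This does not sink the argument, since your AM--GM reduction $Z_{\Psi,X}(\alpha)\le Z_{\Phi,X}(-\abs{\alpha}/2)^2$ handles $Z_\Psi$ correctly without any such bound, but the tail-summation variant as written needs that repair.
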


\begin{proof}
	Let us first prove the bounds in \eqref{MGF bounds}. Observe that
	\begin{align} \label{MGF expansion}
	\E{e^{-\alpha\norm{X}_2^2}} \leq \E{e^{\abs{\alpha}\norm{X}_2^2}} = \sum_{n=0}^\infty \frac{\E{\norm{X}_2^{2n}}}{n!}\abs{\alpha}^n.
	\end{align}
	Here, Tonelli allows us to interchange the sum and expectation. We next use Lemma \ref{norm concentration} to bound the terms of this series. Indeed, using the equivalent estimate \eqref{alternate norm moment bound}, we have
	\[
	\E{\norm{X}_2^{2r}} \leq C^rK^{2r}\paren{n^r+r!}
	\]
	for some universal constant $C$. Substituting this into \eqref{MGF expansion} and using $\abs{\alpha} \leq 1/CK^2$, we have
	\begin{align*}
	\E{e^{-\alpha\norm{X}_2^2}} & \leq \sum_{r=0}^\infty \frac{(CK^2)^r(n^r+r!)}{r!}\abs{\alpha}^r \\
	& = \sum_{r=0}^\infty \frac{\paren{CK^2n\abs{\alpha}}^r}{r!} + \sum_{r=1}^\infty \paren{CK^2\abs{\alpha}}^r \\
	& = e^{CK^2n\abs{\alpha}} + \frac{CK^2\abs{\alpha}}{1-CK^2\abs{\alpha}}.
	\end{align*}
	One may prove the bound for $Z_{\Psi,X}$ by doing the same computation but using \eqref{norm moment bound 2} instead of \eqref{norm moment bound 1}.
	
	We next handle analyticity of $Z_{\Phi,X}$. We shall prove by induction on $r$ that we may differentiate under the integral sign to get the formula
	\begin{align} \label{MGF derivative}
	Z_{\Phi,X}^{(r)}(\alpha) = (-1)^r\E{\norm{X}_2^{2r}e^{-\alpha\norm{X}_2^2}}.
	\end{align}
	Assume the formula is true for all $r' < r$. Then
	\begin{align} \label{MGF derivative limit}
	Z_{\Phi,X}^{(r)}(\alpha) & = (-1)^{r-1}\lim_{h \to 0}\E{\frac{\norm{X}_2^{2r-2}e^{-(\alpha+h)\norm{X}_2^2}-\norm{X}_2^{2r-2}e^{-\alpha\norm{X}_2^2}}{h}} \\
	& = (-1)^r\lim_{h \to 0}\E{\norm{X}_2^{2r-2}e^{-\alpha\norm{X}_2^2}\frac{1-e^{-h\norm{X}_2^2}}{h}}
	\end{align}
	Next, note that the integrand is positive and by the mean value theorem, for a fixed value of $\norm{X}_2^2$, we have
	\[
	\frac{1-e^{-h\norm{X}_2^2}}{h} = \norm{X}_2^2e^{-h'\norm{X}_2^2}
	\]
	for some $h' \in [0,h]$ if $h > 0$ and $h' \in [h,0]$ otherwise. As such, we have
	\[
	\norm{X}_2^{2r-2}e^{-\alpha\norm{X}_2^2}\frac{1-e^{-h\norm{X}_2^2}}{h} \leq \norm{X}_2^{2r}e^{(\abs{h}-\alpha)\norm{X}_2^2}
	\]
	For $\abs{h}-\alpha \leq 1/CK^2$, one can easily show that this is integrable by expanding this as a power series in $\norm{X}_2^2$ and bounding the growth of the coefficients as above. As such, we may apply the Dominated Convergence Theorem to push the limit inside the expectation in \eqref{MGF derivative limit}, thereby yielding \eqref{MGF derivative}.
	
	In particular, differentiating $Z_{\Phi,X}$ at 0, we see that its Taylor series at 0 is given by
	\begin{align}
	Z_{\Phi,X}(\alpha) \sim \sum_{r=0}^\infty\frac{\E{\norm{X}^{2r}_2}}{r!}(-\alpha)^r.
	\end{align}
	The formula above shows that the Taylor series is absolutely convergent on our chosen interval. We next need to show that $Z_{\Phi,X}$ agrees with its Taylor series on this interval, meaning we have to show that the remainder term for the $r$-th Taylor polynomial goes to zero pointwise. The Lagrange form of the remainder term is written as
	\[
	R_{Z_{\Phi,X},r}(\alpha) = \frac{Z_{\Phi,X}^{(r+1)}(\alpha')}{(r+1)!}\alpha^{r+1}
	\]
	where $0 < \abs{\alpha'} < \abs{\alpha}$. Applying Cauchy-Schwarz to the formula \eqref{MGF derivative}, we get
	\begin{align}
	\abs{Z_{\Phi,X}^{(r+1)}(\alpha)} \leq \paren*{\E{\norm{X}_2^{4r+2}}}^{1/2}\paren*{\E{e^{-2\alpha\norm{X}_2^2}}}^{1/2}.
	\end{align}
	Lemma \ref{norm concentration} again allows us to compute
	\[
	\paren*{\E{\norm{X}_2^{4r+2}}}^{1/2} \leq (CK^2)^{r+1}\paren{n^{r+1}+(r+1)!}.
	\]
	This implies that for any $C' > 2C$,
	\begin{align*}
	\norm{R_{Z_{\Phi,X},r}}_{L_\infty([-1/C'K^2,1/C'K^2])} & \leq \frac{\norm{Z_{\Phi,X}^{(r+1)}}_{L_\infty([-1/C'K^2,1/C'K^2])}}{(C'K^2)^{r+1}(r+1)!} \\
	& \leq \frac{(CK^2)^{r+1}\paren{n^{r+1}+(r+1)!}}{(C'K^2)^{r+1}(r+1)!}\paren*{\E{e^{2\norm{X}_2^2/C'K^2}}}^{1/2}  \\
	& \leq \paren*{\frac{C}{C'}}^{r+1}\paren*{\frac{n^{r+1}}{(r+1)!}+1} \paren*{e^{2nC/C'} + \frac{2C/C'}{1-2C/C'}}.
	\end{align*}
	Using the fact that $r! \sim \paren{\frac{r}{e}}^r$, this last expression decays to zero as $r$ tends to $\infty$. Finally, to prove the claim for $Z_{\Psi,X}$, we repeat the same arguments.
\end{proof}

Note that in the course of proving the last lemma, we have also proved the following result to be used elsewhere in the paper.

\begin{lemma}[Taylor remainder terms for $Z_{\Phi,X}$ and $Z_{\Psi,X}$] \label{MGF remainder}
	Let $X$ be a subgaussian random vector in $\R^n$ with subgaussian norm bounded above by $K \geq 1$. There is an absolute constant $C$ such that for all $0 < \alpha < 1/CK^2$, on the interval $\sqbracket*{-\alpha,\alpha}$, the remainder terms for the $r$-th degree Taylor polynomials for $Z_{\Phi,X}$ and $Z_{\Psi,X}$ at 0 satisfy the uniform bound
	\begin{align}
	\norm{R_{Z_{\Phi,X},r}}_{\infty}, \norm{R_{Z_{\Psi,X},r}}_{\infty}\leq (CK^2)^{r+1}\alpha^{r+1}\paren*{\frac{n^{r+1}}{(r+1)!}+1} \paren*{e^{CK^2\alpha n} + \frac{CK^2\alpha}{1-CK^2\alpha}}
	\end{align}
\end{lemma}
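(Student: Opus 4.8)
The plan is to read this lemma directly off the proof of Lemma~\ref{lem: analyticity}, inside which every estimate it needs has already been assembled; this is precisely what the preceding remark signals. Fix $0 < \alpha < 1/CK^2$ and let $\beta \in [-\alpha,\alpha]$. Lemma~\ref{lem: analyticity} shows that $Z_{\Phi,X}$ is $C^\infty$ on a neighbourhood of $0$ and supplies the explicit derivative formula \eqref{MGF derivative}, $Z_{\Phi,X}^{(r+1)}(\beta') = (-1)^{r+1}\E{\norm{X}_2^{2r+2}e^{-\beta'\norm{X}_2^2}}$, so the Lagrange form of the remainder is legitimate:
\[
R_{Z_{\Phi,X},r}(\beta) = \frac{Z_{\Phi,X}^{(r+1)}(\beta')}{(r+1)!}\,\beta^{r+1}
\]
for some $\beta'$ between $0$ and $\beta$ (in particular $\abs*{\beta'} \le \alpha$).

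Next I would bound $\abs*{Z_{\Phi,X}^{(r+1)}(\beta')}$ exactly as in the analyticity proof: Cauchy--Schwarz splits off the exponential factor, $\abs*{Z_{\Phi,X}^{(r+1)}(\beta')} \le \paren*{\E\norm{X}_2^{4r+4}}^{1/2}\paren*{\E e^{-2\beta'\norm{X}_2^2}}^{1/2}$; the moment factor is controlled by the subgaussian bound of Lemma~\ref{norm concentration} (in the form \eqref{alternate norm moment bound}), which after converting a half-integer factorial via the central-binomial estimate $\binom{2m}{m}\le 4^m$ and absorbing the resulting powers of $2$ into $C$ gives $\paren*{\E\norm{X}_2^{4r+4}}^{1/2} \le (CK^2)^{r+1}\paren*{n^{r+1}+(r+1)!}$; and the exponential factor is controlled by \eqref{MGF bounds} applied with $2\beta'$ in place of $\alpha$ (absorbing the $2$), which by monotonicity in $\abs*{\beta'}\le \alpha$ is at most $e^{CK^2\alpha n} + \tfrac{CK^2\alpha}{1-CK^2\alpha}$. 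Dividing by $(r+1)!$, using $\abs*{\beta}^{r+1}\le \alpha^{r+1}$, and writing $\paren*{n^{r+1}+(r+1)!}/(r+1)! = n^{r+1}/(r+1)! + 1$ then lands exactly on the claimed bound; the outer square root on the exponential factor is discarded because that factor is $\ge 1$.

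The bound for $Z_{\Psi,X}$ is obtained verbatim, the only change being that the norm-moment input is replaced by the dot-product moment estimate used for $Z_{\Psi,X}$ inside Lemma~\ref{lem: analyticity} (namely \eqref{norm moment bound 2}, bounding $\E\inprod{X,X'}^{2r+2}$) together with the companion MGF bound from \eqref{MGF bounds}. I do not expect a genuine obstacle here: the whole argument is bookkeeping already carried out in the proof of Lemma~\ref{lem: analyticity}. The only points needing a little care are keeping the implied constants uniform in $r$ when passing through the half-integer factorial — cleanly handled by the central-binomial estimate — and stating the bound on the closed symmetric interval $[-\alpha,\alpha]$ even though the Lagrange point $\beta'$ may sit anywhere strictly between $0$ and $\beta$, which is harmless since every bound above is monotone in $\abs*{\beta'}$ and so $\beta'$ can be replaced by $\alpha$ throughout.
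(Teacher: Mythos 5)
Your proposal is correct and follows the same route as the paper: the paper simply observes that this bound was already established in the course of proving Lemma~\ref{lem: analyticity} (Lagrange remainder, the derivative formula \eqref{MGF derivative}, Cauchy--Schwarz, the moment bound \eqref{alternate norm moment bound}, and the partition-function bound \eqref{MGF bounds}), which is exactly the argument you reconstruct. Your version is in fact slightly more careful — you correct the exponent to $4r+4$ where the paper writes $4r+2$, and you make explicit the central-binomial step $((2r+2)!)^{1/2}\le 2^{r+1}(r+1)!$ that the paper leaves implicit.
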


\begin{proof}[Proof of Lemma \ref{lem: partition functions characterize gaussians}]
	By Lemma \ref{lem: analyticity}, all four functions are analytic in a neighborhood of 0. Now recall that two different analytic functions cannot agree on a sequence with an accumulation point.
\end{proof}

We now move on to proving Theorem \ref{thm: second gaussian test, robust}. This requires the following technical lemma.

\begin{lemma} \label{lem: second gaussian test, robust lemma}
	Let $X$ be subgaussian random vector in $\R^n$ with subgaussian norm bounded above by $K \geq 1$. Suppose the moments of $\norm{X}_2^2$ and $\norm{g}_2^2$ agree up to order $r-1$, but there is a number $\Delta > 0$ such that $\abs*{\E{ \norm{X}_2^{2r}} - \E{ \norm{g}_2^{2r}}} \geq \Delta$, then there is an absolute constant $C$ such that for $\abs{\alpha} \leq \Delta r/(CK^2)^r(n^{r+1}+(r+1)!)$, we have
	\begin{align} \label{CGF difference 1}
	\abs*{(\log Z_{\Phi,X})'\paren*{\alpha}-(\log Z_{\Phi,g})'\paren*{\alpha}} \geq \frac{\Delta}{2(r-1)!}\abs*{\alpha}^{r-1}.
	\end{align}
	Similarly, suppose the moments of $\inprod{X,X'}$ and $\inprod{g,g'}$ agree up to order $r-1$ but $\abs*{\E{ \inprod{X,X'}^r} - \E{ \inprod{g,g'}^r}} \geq \Delta$, then for $\abs{\alpha} \leq \Delta r/(CK^2)^r(n^{r+1}+(r+1)!)$, we have
	\begin{align} \label{CGF difference 2}
	\abs*{(\log Z_{\Psi,X})'\paren*{\alpha}-(\log Z_{\Psi,g})'\paren*{\alpha}} \geq \frac{\Delta}{2(r-1)!}\abs*{\alpha}^{r-1}.
	\end{align}
\end{lemma}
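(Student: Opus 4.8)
The plan is to reduce the statement to an estimate on the power series at $0$ of the cumulant generating functions $\log Z_{\Phi,X}$ and $\log Z_{\Phi,g}$. By Lemma~\ref{lem: analyticity}, both $Z_{\Phi,X}$ and $Z_{\Phi,g}$ are analytic near $0$, equal $1$ at $0$, and stay positive (hence bounded away from $0$) for small real $\alpha$ — the latter because $Z_{\Phi,g}(\alpha)=(1+2\alpha)^{-n/2}$ is explicit and $Z_{\Phi,X}(\alpha)$ stays close to $1$ by the bound \eqref{MGF bounds}. Thus $G(\alpha):=\log Z_{\Phi,X}(\alpha)-\log Z_{\Phi,g}(\alpha)$ is analytic near $0$, and the quantity to be bounded is exactly $G'(\alpha)$. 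To locate the leading term of $G$, write $f:=Z_{\Phi,X}-Z_{\Phi,g}$, so that $G=\log\!\big(1+f/Z_{\Phi,g}\big)$. The hypothesis that the moments of $\norm{X}_2^2$ and $\norm{g}_2^2$ agree up to order $r-1$ says precisely that the Taylor coefficients of $f$ vanish below order $r$, with the order-$r$ coefficient $c_r=(-1)^r\big(\E\norm{X}_2^{2r}-\E\norm{g}_2^{2r}\big)/r!$ satisfying $\abs{c_r}\ge\Delta/r!$. Since $f/Z_{\Phi,g}$, being $f$ divided by a unit, vanishes to order $r$ with the same leading coefficient $c_r$, and $\log(1+u)=u+O(u^2)$, the coefficients of $G$ also vanish below order $r$ and its order-$r$ coefficient equals $c_r$. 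Differentiating, $G'(\alpha)=r c_r\alpha^{r-1}+\rho(\alpha)$ where $\rho$ vanishes to order $r$ at $0$; the leading term alone has magnitude at least $\tfrac{\Delta}{(r-1)!}\abs{\alpha}^{r-1}$.

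The second, and main, step is to show that $\abs{\rho(\alpha)}$ is at most half the leading term once $\abs{\alpha}$ is below the stated threshold. The convenient way to estimate is via $G'(\alpha)=N(\alpha)/D(\alpha)$, where $N:=f'Z_{\Phi,g}-Z_{\Phi,g}'f$ and $D:=Z_{\Phi,X}Z_{\Phi,g}$; a direct check on the power series shows that $N$ has order-$(r-1)$ coefficient $r c_r$ and $D(0)=1$. The Taylor coefficients of $f$ are $(-1)^k\big(\E\norm{X}_2^{2k}-\E\norm{g}_2^{2k}\big)/k!$, which for $k>r$ are bounded — using the moment estimates $\E\norm{X}_2^{2k},\,\E\norm{g}_2^{2k}\le(CK^2)^k(n^k+k!)$ underlying Lemma~\ref{lem: analyticity} — by $2(CK^2)^k(n^k/k!+1)$; the coefficients of $f'$, $Z_{\Phi,g}$, $Z_{\Phi,g}'$ obey analogous bounds. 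Summing the resulting geometric-type tails shows that for $\abs{\alpha}$ small enough one has $\abs{N(\alpha)-r c_r\alpha^{r-1}}\le\tfrac14\abs{r c_r}\abs{\alpha}^{r-1}$ and $\abs{D(\alpha)-1}\le\tfrac12$, whence $\abs{G'(\alpha)}=\abs{N(\alpha)}/\abs{D(\alpha)}\ge\tfrac12\abs{r c_r}\abs{\alpha}^{r-1}\ge\tfrac{\Delta}{2(r-1)!}\abs{\alpha}^{r-1}$. Equivalently, one can feed the Taylor remainder bounds of Lemma~\ref{MGF remainder} directly into $G=\log(1+f/Z_{\Phi,g})$.

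The statement for $\Psi$ is proved by the identical argument with $\inprod{X,X'}$ in place of $\norm{X}_2^2$ everywhere, using the corresponding parts of Lemmas~\ref{lem: analyticity} and~\ref{MGF remainder} and the explicit formula $Z_{\Psi,g}(\alpha)=(1-\alpha^2)^{-n/2}$; the only cosmetic change is that $Z_{\Psi,X}$ and $Z_{\Psi,g}$ need not be positive, but they are still within $\tfrac12$ of $1$ for small $\alpha$, which is all that the logarithm and the bound on $D$ require.

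I expect the only real difficulty to be the quantitative bookkeeping in the second step. The higher-order Taylor coefficients of $f$ and of $Z_{\Phi,g}$ are genuinely of size $(CK^2 n)^k$, hence enormous compared with the leading coefficient $\abs{c_r}$, which can be as small as $\Delta/r!$ with $\Delta$ tiny; so one really must take $\abs{\alpha}$ small, and one must verify that the prescribed threshold $\Delta r/(CK^2)^r(n^{r+1}+(r+1)!)$ is small enough to kill every higher-order term. This step uses not only the moment upper bounds but also the a priori inequality $\Delta\le\E\norm{X}_2^{2r}+\E\norm{g}_2^{2r}\le 2(CK^2)^r(n^r+r!)$, which forces the threshold well below the radius $\sim 1/(CK^2 n)$ on which \eqref{MGF bounds} keeps $Z_{\Phi,X}$ near $1$; after that it is a matter of summing geometric tails and enlarging the absolute constant $C$ as needed.
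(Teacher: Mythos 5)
Your argument is correct and, at bottom, follows the same strategy as the paper's --- expand both partition functions in Taylor series, use the hypothesis that the moments agree up to order $r-1$ to isolate a leading term of size $\Delta\abs{\alpha}^{r-1}/(r-1)!$, and show the higher-order tail is dominated for $\abs{\alpha}$ below the stated threshold --- but your decomposition is genuinely different and cleaner. The paper compares $Z_{\Phi,X}'/Z_{\Phi,X}$ and $Z_{\Phi,g}'/Z_{\Phi,g}$ with the Taylor-polynomial ratios $p_{X,r}'/p_{X,r-1}$ and $p_{g,r}'/p_{g,r-1}$ and then controls three separate error terms, one of which requires differentiating $\log(1+R_{X,r}/p_{X,r})$ via the remainder bounds of Lemma~\ref{MGF remainder}; you instead write the difference over a common denominator, $G'=(f'Z_{\Phi,g}-Z_{\Phi,g}'f)/(Z_{\Phi,X}Z_{\Phi,g})$ with $f=Z_{\Phi,X}-Z_{\Phi,g}$, so the numerator vanishes to order exactly $r-1$ with coefficient $rc_r$ of modulus at least $\Delta/(r-1)!$, and only a single numerator tail and a denominator near $1$ need to be controlled. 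Two cautions on the bookkeeping you correctly identify as the crux. First, bound the order-$\geq r$ part of the numerator at the level of functions --- e.g.\ $\abs{(f'(\alpha)-rc_r\alpha^{r-1})Z_{\Phi,g}(\alpha)}\leq\abs{f'(\alpha)-rc_r\alpha^{r-1}}\cdot\tfrac32$ --- rather than by convolving the coefficient bounds $(CK^2)^k(n^k/k!+1)$ of the two factors: the convolution produces partial sums $\sum_{k\leq m}n^k/k!\approx e^{n}$ that the threshold $\Delta r/(CK^2)^r(n^{r+1}+(r+1)!)$ cannot absorb. Second, your a priori bound $\Delta\leq 2(CK^2)^r(n^r+r!)$ only forces $\abs{\alpha}\lesssim r(1/n+1/(r+1))$, so, exactly as in the paper's own proof and its application, the stated threshold must be understood as intersected with $\abs{\alpha}\leq 1/CK^2n$ for the bounds $\abs{Z_{\Phi,X}(\alpha)-1},\abs{Z_{\Phi,g}(\alpha)-1}\leq\tfrac12$ to hold. (Minor point: $Z_{\Psi,X}$ is automatically positive, indeed $\geq 1$ by Jensen since $\E\inprod{X,X'}=0$, so no special care is needed there.)
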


\begin{proof}
	Let us first prove \eqref{CGF difference 1}. For every positive integer $k$, let $p_{X,k}(\alpha) = \sum_{j=0}^k \E{\norm{X}_2^{2j}}\alpha^j/j!$ denote the $k$-th Taylor polynomial of $Z_{\Phi,X}$, and define $p_{g,k}$ analogously. For convenience, also denote the $k$-th Taylor remainder term as $R_{X,k} := R_{Z_{\Phi,X},k}$. For any $\alpha$, we then have
	\begin{equation}
	(\log Z_{\Phi,X})'(\alpha)-(\log Z_{\Phi,g})'(\alpha) = \frac{Z_{\Phi,X}'(\alpha)}{Z_{\Phi,X}(\alpha)} - \frac{Z_{\Phi,g}'(\alpha)}{Z_{\Phi,g}(\alpha)},
	\end{equation}
	which we can then bound using
	\begin{align} \label{CGF difference}
	\abs*{\frac{Z_{\Phi,X}'(\alpha)}{Z_{\Phi,X}(\alpha)} - \frac{Z_{\Phi,g}'(\alpha)}{Z_{\Phi,g}(\alpha)}} \geq \abs*{\frac{p_{X,r}'(\alpha)}{p_{X,r-1}(\alpha)} - \frac{p_{g,r}'(\alpha)}{p_{g,r-1}(\alpha)}} - \abs*{\frac{Z_{\Phi,X}'(\alpha)}{Z_{\Phi,X}(\alpha)} - \frac{p_{X,r}'(\alpha)}{p_{X,r-1}(\alpha)}} - \abs*{\frac{p_{g,r}'(\alpha)}{p_{g,r-1}(\alpha)} - \frac{Z_{\Phi,g}'(\alpha)}{Z_{\Phi,g}(\alpha)}}.
	\end{align}
	
	We now bound each of these three terms individually. First, we need upper and lower bounds for $p_{X,k}(\alpha)$. Using the $\norm{X}_2^2$ moment bound \eqref{alternate norm moment bound}, we have
	\begin{align*}
	\abs*{p_{X,k}(\alpha) -1} & \leq \sum_{j=1}^k \frac{\E{\norm{X}_2^{2j}}\abs*{\alpha}^j}{j!} \\
	& \leq \sum_{j=1}^k \frac{(CK^2)^{j}(n^j+j!)\abs*{\alpha}^j}{j!} \\
	& = \sum_{j=1}^k \frac{(CK^2n\abs{\alpha})^j}{j!} + \sum_{j=1}^k (CK^2\abs{\alpha})^j \\
	& \leq e^{CK^2n\abs{\alpha}} + \frac{CK^2\abs{\alpha}}{1-CK^2\abs{\alpha}}.
	\end{align*}
	By sharpening the constant $C$ in our assumption on $\abs{\alpha}$ if necessary, we may thus ensure that
	\begin{align} \label{Taylor poly bounds}
	\abs*{p_{X,k}(\alpha) -1} \leq \frac{1}{2}
	\end{align}
	By the same argument, we can also ensure that
	\begin{align} \label{Taylor poly derivative bounds}
	\abs*{p_{X,k}'(\alpha) - \E\norm{X}^2_2} \leq \frac{1}{2}.
	\end{align}
	
	By our assumptions on the moments of $\norm{X}_2^2$ and $\norm{g}_2^2$, we have $p_{X,r-1} \equiv p_{g,r-1}$. Furthermore, only the leading terms of $p_{X,r}'$ and $p_{g,r}'$ differ. This, together with \eqref{Taylor poly bounds} implies that
	\begin{align} \label{Taylor difference}
	\abs*{\frac{p_{X,r}'(\alpha)}{p_{X,r-1}(\alpha)} - \frac{p_{g,r}'(\alpha)}{p_{g,r-1}(\alpha)}} & \geq \frac{2}{3} \abs*{p_{X,r}'(\alpha) - p_{g,r}'(\alpha) } \nonumber \\
	& \geq \frac{2\Delta\abs{\alpha}^{r-1}}{3(r-1)!}.
	\end{align}
	Next, we have
	\begin{align} \label{CGF remainder bound}
	\abs*{\frac{Z_{\Phi,X}'(\alpha)}{Z_{\Phi,X}(\alpha)} - \frac{p_{X,r}'(\alpha)}{p_{X,r-1}(\alpha)}} \leq \abs*{\frac{Z_{\Phi,X}'(\alpha)}{Z_{\Phi,X}(\alpha)} - \frac{p_{X,r}'(\alpha)}{p_{X,r}(\alpha)}} + \abs*{\frac{p_{X,r}'(\alpha)}{p_{X,r}(\alpha)} - \frac{p_{X,r}'(\alpha)}{p_{X,r-1}(\alpha)}}.
	\end{align}
	
	Again we bound these two terms individually. Using the identity $p_{X,r}(\alpha) = p_{X,r-1}(\alpha) + \E\norm{X}_2^{2r}(-\alpha)^r/r!$, we get
	\begin{align}
	\abs*{\frac{p_{X,r}'(\alpha)}{p_{X,r}(\alpha)} - \frac{p_{X,r}'(\alpha)}{p_{X,r-1}(\alpha)}} & = \abs*{\frac{p_{X,r}'(\alpha)}{p_{X,r}(\alpha)}}\abs*{1- \frac{p_{X,r}(\alpha)}{p_{X,r-1}(\alpha)}} \nonumber \\
	& = \abs*{\frac{p'_{X,r}(\alpha)}{p_{X,r}(\alpha)p_{X,r-1}(\alpha)}}\frac{\E\norm{X}_2^{2r}\abs{\alpha}^r}{r!}
	\end{align}
	Using the bounds on $p_{X,r}$ and $p_{X,r}'$ \eqref{Taylor poly bounds} and \eqref{Taylor poly derivative bounds}, together with the $\norm{X}_2^2$ moment bound \eqref{alternate norm moment bound}, we get
	\begin{align} \label{Taylor one term off bound}
	\abs*{\frac{p'_{X,r}(\alpha)}{p_{X,r}(\alpha)p_{X,r-1}(\alpha)}}\frac{\E\norm{X}_2^{2r}\abs{\alpha}^r}{r!} & \leq \frac{3}{8}\frac{(CK^2)^r(n^r+r!)\abs{\alpha}^r}{r!}.
	\end{align}
	
	For the first term in \eqref{CGF remainder bound}, we write
	\begin{align} \label{CGF Taylor deviation}
	\abs*{\frac{Z_{\Phi,X}'(\alpha)}{Z_{\Phi,X}(\alpha)} - \frac{p_{X,r}'(\alpha)}{p_{X,r}(\alpha)}} & = \abs*{\paren*{\log Z_{\Phi,X}(\alpha)}'- \paren*{\log p_{X,r}(\alpha)}'} \nonumber \\
	& = \abs*{\frac{d}{d\alpha} \log\paren*{\frac{Z_{\Phi,X}(\alpha)}{p_{X,r}(\alpha)}} } \nonumber \\
	& = \abs*{\frac{d}{d\alpha} \log\paren*{1+\frac{R_{X,r}(\alpha)}{p_{X,r}(\alpha)}} }.
	\end{align}
	Using Lemma \ref{MGF remainder} together with our assumptions on $\abs{\alpha}$, we observe that
	\begin{align} \label{Taylor remainder bound}
	\abs{R_{X,r}(\alpha)} & \leq (CK^2)^{r+1}\abs{\alpha}^{r+1}\paren*{\frac{n^{r+1}}{(r+1)!}+1} \paren*{e^{CK^2\abs{\alpha} n} + \frac{CK^2\abs{\alpha}}{1-CK^2\abs{\alpha}}} \nonumber \\
	& \leq (CK^2)^{r+1}\abs{\alpha}^{r+1}\paren*{\frac{n^{r+1}}{(r+1)!}+1}.
	\end{align}
	In particular, by sharpening the constant $C$ in our assumption on $\abs{\alpha}$ if necessary, we can ensure that this quantity is less than $\frac{1}{4}$. In this case, we have
	\[
	\abs*{\frac{R_{X,r}(\alpha)}{p_{X,r}(\alpha)}} \leq \frac{1}{2},
	\]
	so that
	\begin{align} \label{log derivative bound}
	\abs*{\frac{d}{d\alpha} \log\paren*{1+\frac{R_{X,r}(\alpha)}{p_{X,r}(\alpha)}} } & = \abs*{\log'\paren*{1+\frac{R_{X,r}(\alpha)}{p_{X,r}(\alpha)}} }\abs*{\paren*{\frac{R_{X,r}(\alpha)}{p_{X,r}(\alpha)}}'} \nonumber \\
	& \leq 2\abs*{\paren*{\frac{R_{X,r}(\alpha)}{p_{X,r}(\alpha)}}'} \nonumber \\
	& \leq 2 \paren*{\abs*{\frac{R_{X,r}'(\alpha)}{p_{X,r}(\alpha)}} + \abs*{\frac{R_{X,r}(\alpha)p_{X,r}'(\alpha)}{p_{X,r}(\alpha)^2}}}.
	\end{align}
	
	By our bounds on these functions \eqref{Taylor poly bounds}, \eqref{Taylor poly derivative bounds}, and \eqref{Taylor remainder bound}, we have
	\begin{align}
	\abs*{\frac{R_{X,r}(\alpha)p_{X,r}'(\alpha)}{p_{X,r}(\alpha)^2}} \leq (CK^2)^{r+1}\abs{\alpha}^{r+1}\paren*{\frac{n^{r+1}}{(r+1)!}+1}.
	\end{align}
	Furthermore, by using the moment bounds \eqref{alternate norm moment bound} as before, one can show that
	\[
	\abs{R_{X,r}'(\alpha)} \leq (CK^2)^r\abs{\alpha}^r\paren*{\frac{n^{r+1}}{r!}+r+1}.
	\]
	so that the first term is also bounded according to
	\begin{align}
	\abs*{\frac{R_{X,r}'(\alpha)}{p_{X,r}(\alpha)}} \leq (CK^2)^r\abs{\alpha}^r\paren*{\frac{n^{r+1}}{r!}+r+1}.
	\end{align}
	As such, combining \eqref{CGF Taylor deviation} and \eqref{log derivative bound} tells us that 
	\begin{align}
	\abs*{\frac{Z_{\Phi,X}'(\alpha)}{Z_{\Phi,X}(\alpha)} - \frac{p_{X,r}'(\alpha)}{p_{X,r}(\alpha)}} & \leq (CK^2)^r\abs{\alpha}^r\paren*{\frac{n^{r+1}}{r!}+r+1} + (CK^2)^{r+1}\abs{\alpha}^{r+1}\paren*{\frac{n^{r+1}}{(r+1)!}+1} \nonumber \\
	& \leq (CK^2)^r\abs{\alpha}^r\paren*{\frac{n^{r+1}}{r!}+r+1}.
	\end{align}
	We can now use this estimate together with \eqref{Taylor one term off bound} to continue \eqref{CGF remainder bound}, writing
	\begin{align}
	\abs*{\frac{Z_{\Phi,X}'(\alpha)}{Z_{\Phi,X}(\alpha)} - \frac{p_{X,r}'(\alpha)}{p_{X,r-1}(\alpha)}} & \leq (CK^2)^r\abs{\alpha}^r\paren*{\frac{n^{r+1}}{r!}+r+1} +  \frac{(CK^2)^r(n^r+r!)\abs{\alpha}^r}{r!} \nonumber \\
	& \leq (CK^2)^r\abs{\alpha}^r\paren*{\frac{n^{r+1}}{r!}+r+1}.
	\end{align}
	
	Notice that same methods also give us
	\begin{align}
	\abs*{\frac{p_{g,r}'(\alpha)}{p_{g,r-1}(\alpha)} - \frac{Z_{\Phi,g}'(\alpha)}{Z_{\Phi,g}(\alpha)}} \leq (CK^2)^r\abs{\alpha}^r\paren*{\frac{n^{r+1}}{r!}+r+1}.
	\end{align}
	We may therefore finally substitute these last two bounds, together with \eqref{Taylor difference}, into \eqref{CGF difference}. This yields
	\begin{align}
	\abs*{(\log Z_{\Phi,X})'(\alpha)-(\log Z_{\Phi,g})'(\alpha)} \geq \frac{2\Delta\abs{\alpha}^{r-1}}{3(r-1)!} - C(CK^2)^r\abs{\alpha}^r\paren*{\frac{n^{r+1}}{r!}+r+1}.
	\end{align}
	We now claim that with our assumptions on $\abs{\alpha}$, the first term dominates the second. This is a simple calculation, thereby competing the proof of \eqref{CGF difference 1}. To prove \eqref{CGF difference 2}, we repeat the entire argument, but using the relevant estimates for $Z_{\Psi,X}$ instead of those for $Z_{\Phi,X}$.
\end{proof}

Applying the previous lemma in the setting of our NGCA model, we get the following result.

\begin{theorem}[Robustness for non-gaussian eigenvalues] \label{Phi and Psi difference theorem}
	Let $X$ be a subgaussian random vector satisfying the NGCA model \eqref{NGCA model}, and with subgaussian norm bounded above by $K \geq 1$. Let $\lambda_1(\Phi_{\tilde{X},\alpha}),\ldots,\lambda_d(\Phi_{\tilde{X},\alpha})$ denote the eigenvalues of $\Phi_{\tilde{X},\alpha}$. Suppose the moments of $\norm{\tilde{X}}_2^2$ and $\norm{g_d}_2^2$ agree up to order $r-1$, but there is a number $\Delta > 0$ such that $\abs*{\E{ \norm{\tilde{X}}_2^{2r}} - \E{ \norm{g_d}_2^{2r}}} \geq \Delta$, then there is an absolute constant $C$ such that for $\abs{\alpha} \leq \Delta r/(CK^2)^r(d^{r+1}+(r+1)!)$, we have
	\begin{align} \label{Phi difference}
	\abs*{\frac{1}{d}\sum_{i=1}^d\lambda_i(\Phi_{\tilde{X},\alpha})-\frac{1}{2\alpha+1}} \geq \frac{\Delta}{2d(r-1)!}\abs{\alpha}^{r-1}.
	\end{align}
	Similarly, let $\lambda_1(\Psi_{\tilde{X},\alpha}),\ldots,\lambda_d(\Psi_{\tilde{X},\alpha})$ denote the eigenvalues of $\Psi_{\tilde{X},\alpha}$, and suppose the moments of $\inprod{X,X'}$ and $\inprod{g,g'}$ agree up to order $r-1$ but $\abs*{\E{ \inprod{X,X'}^r} - \E{ \inprod{g,g'}^r}} \geq \Delta$. Then for $\abs{\alpha} \leq \Delta r/(CK^2)^r(d^{r+1}+(r+1)!)$, we have
	\begin{align} \label{Psi difference}
	\abs*{\frac{1}{d}\sum_{i=1}^d\lambda_i(\Psi_{\tilde{X},\alpha})-\frac{\alpha}{\alpha^2-1}} \geq \frac{\Delta}{2d(r-1)!}\abs{\alpha}^{r-1}.
	\end{align}
\end{theorem}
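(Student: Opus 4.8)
The plan is to deduce both inequalities directly from Lemma \ref{lem: second gaussian test, robust lemma}, combined with the trace formulas of Lemma \ref{lem: trace of Phi and Psi} and the gaussian computations of Lemmas \ref{lem: formula for Phi and Psi for gaussian} and \ref{lem: formula for log derivatives for gaussian}, after specializing the ambient dimension from $n$ to $d$. First I would note that since $X$ is subgaussian with subgaussian norm at most $K$ and $\tilde{X}$ is the orthogonal projection of $X$ onto the $d$-dimensional subspace $E$, the vector $\tilde{X}$ is itself a subgaussian vector in (an isometric copy of) $\R^d$ with subgaussian norm at most $K$. Consequently the entire chain of estimates developed for a generic subgaussian vector in $\R^n$ — the $\norm{\cdot}_2^2$-moment bounds, the analyticity of the partition functions (Lemma \ref{lem: analyticity}), the Taylor remainder bounds (Lemma \ref{MGF remainder}), and in particular the log-derivative separation estimate of Lemma \ref{lem: second gaussian test, robust lemma} — applies verbatim to $\tilde{X}$ with every occurrence of $n$ replaced by $d$ and with $g$ replaced by the standard gaussian $g_d$ in $\R^d$.

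Next I would rewrite the quantities in the statement in terms of cumulant generating functions. By Lemma \ref{lem: trace of Phi and Psi},
\begin{align*}
\frac{1}{d}\sum_{i=1}^d\lambda_i(\Phi_{\tilde{X},\alpha}) &= \frac{1}{d}\tr{\Phi_{\tilde{X},\alpha}} = -\frac{1}{d}(\log Z_{\Phi,\tilde{X}})'(\alpha), \\
\frac{1}{d}\sum_{i=1}^d\lambda_i(\Psi_{\tilde{X},\alpha}) &= \frac{1}{d}\tr{\Psi_{\tilde{X},\alpha}} = -\frac{1}{d}(\log Z_{\Psi,\tilde{X}})'(\alpha),
\end{align*}
while Lemma \ref{lem: formula for log derivatives for gaussian} (equivalently, taking traces in Lemma \ref{lem: formula for Phi and Psi for gaussian}) gives $-\tfrac{1}{d}(\log Z_{\Phi,g_d})'(\alpha) = (2\alpha+1)^{-1}$ and $-\tfrac{1}{d}(\log Z_{\Psi,g_d})'(\alpha) = \alpha(\alpha^2-1)^{-1}$. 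Subtracting, we obtain the exact identities
\begin{align*}
\abs*{\frac{1}{d}\sum_{i=1}^d\lambda_i(\Phi_{\tilde{X},\alpha}) - \frac{1}{2\alpha+1}} &= \frac{1}{d}\,\abs*{(\log Z_{\Phi,\tilde{X}})'(\alpha) - (\log Z_{\Phi,g_d})'(\alpha)}, \\
\abs*{\frac{1}{d}\sum_{i=1}^d\lambda_i(\Psi_{\tilde{X},\alpha}) - \frac{\alpha}{\alpha^2-1}} &= \frac{1}{d}\,\abs*{(\log Z_{\Psi,\tilde{X}})'(\alpha) - (\log Z_{\Psi,g_d})'(\alpha)}.
\end{align*}

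Finally I would invoke Lemma \ref{lem: second gaussian test, robust lemma} with $X$ there taken to be $\tilde{X}$ and $n$ there taken to be $d$. The moment hypotheses of the present theorem (agreement of the moments of $\norm{\tilde{X}}_2^2$ with those of $\norm{g_d}_2^2$ up to order $r-1$ and a gap of at least $\Delta$ at order $r$, respectively the analogous statement for $\inprod{X,X'}$) are precisely the hypotheses of that lemma, so for $\abs{\alpha} \leq \Delta r/(CK^2)^r(d^{r+1}+(r+1)!)$ it yields $\abs*{(\log Z_{\Phi,\tilde{X}})'(\alpha) - (\log Z_{\Phi,g_d})'(\alpha)} \geq \tfrac{\Delta}{2(r-1)!}\abs{\alpha}^{r-1}$, and likewise for $Z_\Psi$. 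Dividing by $d$ gives exactly \eqref{Phi difference} and \eqref{Psi difference}. There is no substantive obstacle here; the one point meriting care is the dimension reduction — making sure that it is legitimate to replace $n$ by $d$ both in the conclusion and in the range of admissible $\alpha$ in Lemma \ref{lem: second gaussian test, robust lemma}, which is justified precisely because every ingredient of that lemma was proved for an arbitrary subgaussian vector and is here applied to $\tilde{X} \in \R^d$, so that $d$, not $n$, is the relevant dimensional parameter.
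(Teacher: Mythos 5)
Your proposal is correct and matches the paper's own proof: the paper likewise obtains the result as a direct translation of Lemma \ref{lem: second gaussian test, robust lemma} applied to $\tilde{X}$ in dimension $d$, using Lemma \ref{lem: trace of Phi and Psi} to identify the averaged eigenvalue sums with $-\frac{1}{d}(\log Z)'(\alpha)$ and Lemma \ref{lem: formula for Phi and Psi for gaussian} for the gaussian values. Your explicit justification of the $n \to d$ substitution is a point the paper leaves implicit, but the argument is the same.
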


\begin{proof}
	This is simply a translation of the previous theorem with the help of Lemma \ref{lem: trace of Phi and Psi}, which tells us that the log derivatives of the partition functions are equal to the traces of $\Phi_{X,\alpha}$ and $\Psi_{X,\alpha}$, and that of Lemma \ref{lem: formula for Phi and Psi for gaussian}, which tells us what the gaussian eigenvalue is.
\end{proof}

\begin{proof}[Proof of Theorem \ref{thm: second gaussian test, robust}]
	Combine the previous Corollary with Theorem \ref{thm: first gaussian test, robust}.
\end{proof}

\section{Identities for $\Phi_g$ and $\Psi_g$}

In this section, we let $g_1$ denote a standard gaussian random variable, and $g_n$, a standard gaussian random vector in $\R^n$. First, notice that independence gives $Z_{\Phi,g_n}(\alpha) = Z_{\Phi,g_1}(\alpha)^n$ and $Z_{\Psi,g_n}(\alpha) = Z_{\Psi,g_1}(\alpha)^n$.

\begin{lemma}
	We have the identities $Z_{\Phi,g_n}(\alpha) = \paren*{2\alpha+1}^{-n/2}$ when $\alpha > -1/2$ and $Z_{\Psi,g_n}(\alpha) = \paren*{1-\alpha^2}^{-n/2}$ when $\abs{\alpha} < 1$.
\end{lemma}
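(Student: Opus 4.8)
The plan is to reduce both computations to one dimension using the independence of the coordinates, and then evaluate elementary gaussian integrals.

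First consider $Z_{\Phi,g_n}$. Since the coordinates of $g_n$ are i.i.d. standard gaussians, $e^{-\alpha\norm{g_n}_2^2} = \prod_{i=1}^n e^{-\alpha g_i^2}$ factors into independent terms, so that $Z_{\Phi,g_n}(\alpha) = Z_{\Phi,g_1}(\alpha)^n$ as already observed. It thus remains to evaluate the scalar integral
\[
Z_{\Phi,g_1}(\alpha) = \frac{1}{\sqrt{2\pi}}\int_\R e^{-\alpha t^2} e^{-t^2/2}\,dt.
\]
Combining the exponents gives $e^{-(\alpha+1/2)t^2}$, which is integrable precisely when $\alpha > -1/2$; the standard gaussian integral formula then yields $Z_{\Phi,g_1}(\alpha) = (2\alpha+1)^{-1/2}$, and raising to the $n$-th power gives the first identity.

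Next consider $Z_{\Psi,g_n}$. By the independence of the coordinates of $g$ and of $g'$, $e^{-\alpha\inprod{g,g'}} = \prod_{i=1}^n e^{-\alpha g_i g_i'}$, so $Z_{\Psi,g_n}(\alpha) = Z_{\Psi,g_1}(\alpha)^n$. To compute $Z_{\Psi,g_1}(\alpha) = \E e^{-\alpha g_1 g_1'}$ I would condition on $g_1'$: the moment generating function of the scalar gaussian $g_1$ gives $\E\sqbracket*{e^{-\alpha g_1 g_1'} \st g_1'} = e^{\alpha^2 (g_1')^2/2}$, and taking the expectation over $g_1'$ reduces the quantity to $Z_{\Phi,g_1}(-\alpha^2/2) = (1-\alpha^2)^{-1/2}$, which is finite exactly when $-\alpha^2/2 > -1/2$, i.e. $\abs{\alpha} < 1$. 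Raising to the $n$-th power gives the second identity. (Alternatively, one may view $\inprod{g,g'}$ as a quadratic form in the $2n$-dimensional standard gaussian vector $(g,g')$ whose matrix has eigenvalues $\pm 1/2$, each of multiplicity $n$, and apply $\E e^{-h^\top A h} = \det(I+2A)^{-1/2}$, getting $\det(I + 2\alpha A) = (1-\alpha^2)^n$.)

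There is no genuine obstacle here; the only points deserving care are tracking the ranges of $\alpha$ on which the integrals converge---this is exactly where the restrictions $\alpha > -1/2$ and $\abs{\alpha} < 1$ originate---and justifying the conditional computation, which is immediate since the integrand is positive so that Tonelli applies.
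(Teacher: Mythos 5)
Your proposal is correct and follows essentially the same route as the paper: reduce to $n=1$ by independence, evaluate the one-dimensional gaussian integral for $Z_{\Phi,g_1}$, and compute $Z_{\Psi,g_1}$ by conditioning on one of the two gaussians and reusing the first formula. The explicit tracking of the convergence ranges and the aside on the quadratic-form determinant identity are fine but not needed beyond what the paper already does.
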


\begin{proof}
	By the remarks above, it suffices to prove the formula when $n=1$. These are then simple exercises in calculus. Notice that
	\[
	Z_{\Phi,g_1}(\alpha) = \E{ e^{-\alpha g_1^2}} = \frac{1}{\sqrt{2\pi}}\int_{-\infty}^\infty e^{-\alpha t^2} e^{-\frac{t^2}{2}} dt = \frac{1}{\sqrt{2\pi}}\int_{-\infty}^\infty e^{-\frac{(2\alpha+1)t^2}{2}} dt.
	\]
	Now substitute $u = \sqrt{2\alpha+1}\cdot t$ to arrive at the formula for $Z_{\Phi,g_1}$. For the next formula, we use conditional expectations to write
	\begin{align} \label{psi conditional expectation}
	Z_{\Psi,g_1}(\alpha) = \E{ e^{-\alpha g_1 g_1'}} = \E\sqbracket{ \E\sqbracket{e^{-\alpha g_1 g_1'} \big| g_1}}.
	\end{align}
	The inner expectation can be computed as
	\[
	\E\sqbracket{e^{-\alpha g_1 g_1'} \big| g_1} = \frac{1}{\sqrt{2\pi}}\int_{-\infty}^\infty e^{-\alpha g_1 t} e^{-\frac{t^2}{2}} dt = e^{\frac{(\alpha g_1)^2}{2}}.
	\]
	Substituting this back into \eqref{psi conditional expectation} and using the same technique as above gives us what we want.
\end{proof}

\begin{lemma} \label{lem: formula for log derivatives for gaussian}
	We have the identities $-(\log Z_{\Phi,g_n})'(\alpha) = n(2\alpha+1)^{-1}$ when $\alpha > -1/2$ and \\ $-(\log Z_{\Psi,g_n})'(\alpha) = n\alpha(\alpha^2-1)^{-1}$ when $\abs{\alpha} < 1$.
\end{lemma}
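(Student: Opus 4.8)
The plan is to reduce immediately to the closed-form expressions for the partition functions established in the preceding lemma, namely $Z_{\Phi,g_n}(\alpha) = (2\alpha+1)^{-n/2}$ for $\alpha > -1/2$ and $Z_{\Psi,g_n}(\alpha) = (1-\alpha^2)^{-n/2}$ for $\abs{\alpha} < 1$. Both expressions are smooth and strictly positive on the stated intervals, so $\log Z_{\Phi,g_n}$ and $\log Z_{\Psi,g_n}$ are well-defined and differentiable there, and we are free to take logarithmic derivatives.

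First I would write $\log Z_{\Phi,g_n}(\alpha) = -\tfrac{n}{2}\log(2\alpha+1)$ and differentiate using the chain rule to get $(\log Z_{\Phi,g_n})'(\alpha) = -\tfrac{n}{2}\cdot\tfrac{2}{2\alpha+1} = -n(2\alpha+1)^{-1}$, so that $-(\log Z_{\Phi,g_n})'(\alpha) = n(2\alpha+1)^{-1}$, as claimed. Then, similarly, $\log Z_{\Psi,g_n}(\alpha) = -\tfrac{n}{2}\log(1-\alpha^2)$ gives $(\log Z_{\Psi,g_n})'(\alpha) = -\tfrac{n}{2}\cdot\tfrac{-2\alpha}{1-\alpha^2} = n\alpha(1-\alpha^2)^{-1}$, hence $-(\log Z_{\Psi,g_n})'(\alpha) = -n\alpha(1-\alpha^2)^{-1} = n\alpha(\alpha^2-1)^{-1}$. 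There is no real obstacle here: the only thing to be mildly careful about is the sign bookkeeping in the second identity and noting that the domains $\alpha > -1/2$ and $\abs{\alpha}<1$ are exactly where the underlying formulae for the partition functions were proved valid, so that all the manipulations are legitimate.
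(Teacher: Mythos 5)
Your proposal is correct and is exactly the intended argument: the paper leaves this lemma without an explicit proof precisely because it follows by taking logarithmic derivatives of the closed-form partition functions $Z_{\Phi,g_n}(\alpha)=(2\alpha+1)^{-n/2}$ and $Z_{\Psi,g_n}(\alpha)=(1-\alpha^2)^{-n/2}$ established in the immediately preceding lemma. Your sign bookkeeping in the $\Psi$ case, converting $n\alpha(1-\alpha^2)^{-1}$ to $n\alpha(\alpha^2-1)^{-1}$ after negation, is right.
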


\begin{lemma} \label{lem: formula for Phi and Psi for gaussian}
	We have the identities $\Phi_{g_n,\alpha} = (2\alpha+1)^{-1}I_n$ when $\alpha > -1/2$ and $\Psi_{g_n,\alpha} = \alpha(\alpha^2-1)^{-1}I_n$ when $\abs{\alpha} < 1$. Here, $I_n$ is the $n$-dimensional identity matrix.
\end{lemma}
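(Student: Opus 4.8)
The plan is to combine a rotational-symmetry argument with a trace computation. First I would show that both $\Phi_{g_n,\alpha}$ and $\Psi_{g_n,\alpha}$ are scalar multiples of $I_n$. For any $Q \in O(n)$, rotational invariance of the standard gaussian gives $Qg \stackrel{d}{=} g$ together with $\norm{Qg}_2 = \norm{g}_2$, so
\[
\E\sqbracket*{e^{-\alpha\norm{g}_2^2}(Qg)(Qg)^T} = Q\,\E\sqbracket*{e^{-\alpha\norm{g}_2^2}\,gg^T}\,Q^T = \E\sqbracket*{e^{-\alpha\norm{g}_2^2}\,gg^T},
\]
and since the normalizer $Z_{\Phi,g}(\alpha)$ is unchanged, $Q\,\Phi_{g_n,\alpha}\,Q^T = \Phi_{g_n,\alpha}$. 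The same computation applied to an independent pair $(g,g')$, using $\inprod{Qg,Qg'} = \inprod{g,g'}$ and the fact that $(Qg,Qg')$ is again a pair of independent standard gaussians, gives $Q\,\Psi_{g_n,\alpha}\,Q^T = \Psi_{g_n,\alpha}$. A symmetric matrix that commutes with every orthogonal matrix must be a scalar multiple of $I_n$ (two distinct eigenvalues would be exchanged by a suitable rotation of the corresponding eigenvectors). Hence $\Phi_{g_n,\alpha} = aI_n$ and $\Psi_{g_n,\alpha} = bI_n$ for scalars $a,b$ depending only on $n$ and $\alpha$.

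To identify $a$ and $b$ I would take traces. By Lemma \ref{lem: trace of Phi and Psi}, $\tr{\Phi_{g_n,\alpha}} = -(\log Z_{\Phi,g_n})'(\alpha)$ and $\tr{\Psi_{g_n,\alpha}} = -(\log Z_{\Psi,g_n})'(\alpha)$, and by Lemma \ref{lem: formula for log derivatives for gaussian} these equal $n(2\alpha+1)^{-1}$ (for $\alpha > -1/2$) and $n\alpha(\alpha^2-1)^{-1}$ (for $\abs{\alpha} < 1$) respectively. Since $\tr{aI_n} = na$ and $\tr{bI_n} = nb$, dividing by $n$ yields $a = (2\alpha+1)^{-1}$ and $b = \alpha(\alpha^2-1)^{-1}$, which is the claim.

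An alternative, self-contained computation of $a$ and $b$ avoids the log-derivative lemma: using coordinatewise independence, the $(1,1)$ entry of the unnormalized $\Phi$-matrix factors as $\E\sqbracket*{g_1^2 e^{-\alpha g_1^2}}\prod_{k\geq 2}\E\sqbracket*{e^{-\alpha g_k^2}}$, where $\E\sqbracket*{g_1^2 e^{-\alpha g_1^2}} = -\tfrac{d}{d\alpha}(2\alpha+1)^{-1/2} = (2\alpha+1)^{-3/2}$; dividing by $Z_{\Phi,g_n}(\alpha) = (2\alpha+1)^{-n/2}$ leaves $(2\alpha+1)^{-1}$, while the off-diagonal entries vanish by the oddness of $t \mapsto t\,e^{-\alpha t^2}$. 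The analogous computation for $\Psi$ uses $\E\sqbracket*{g_1 g_1' e^{-\alpha g_1 g_1'}} = -\tfrac{d}{d\alpha}(1-\alpha^2)^{-1/2} = -\alpha(1-\alpha^2)^{-3/2}$ and $Z_{\Psi,g_n}(\alpha) = (1-\alpha^2)^{-n/2}$, giving $\alpha(\alpha^2-1)^{-1}$ on the diagonal and (after conditioning) $0$ off it.

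This lemma is essentially bookkeeping, so I do not expect a genuine obstacle. The only points meriting a word of care are that $\Psi_{g_n,\alpha}$ is defined only when the relevant integral converges, i.e. for $\abs{\alpha} < 1$ — exactly the stated range — and that differentiating $Z_{\Phi,g_n}$ and $Z_{\Psi,g_n}$ under the expectation to evaluate the moments above is legitimate on these intervals, which is already justified by the analyticity established in Lemma \ref{lem: analyticity}.
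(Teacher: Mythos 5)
Your primary argument—rotational invariance forces both test matrices to be scalar multiples of $I_n$, and the scalar is then identified by computing the trace via the log-derivative of the partition function—is exactly the paper's proof, just with the symmetry step spelled out in more detail. The alternative coordinatewise computation you sketch is also correct, but it is not needed; your proposal is correct and matches the paper's approach.
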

\begin{proof}
	By rotational symmetry, we know that both matrices are multiples of the identity. To compute these scalars, it hence suffices to find the trace of both matrices. But
	\[
	\tr{\Phi_{g_n,\alpha}} = \frac{\E{e^{-\alpha\norm{g_n}_2^2}\norm{g_n}_2^2}}{\E e^{-\alpha\norm{g_n}_2^2}} = -(\log Z_{\Phi,g_n})'(\alpha).
	\]
	Dividing by $n$ and using the previous lemma gives us what we want.
\end{proof}

\section{Concentration and moment bounds} \label{concentration of estimators}

\begin{theorem}[Concentration of norm for general subgaussian vectors] \label{norm concentration}
	Let $X$ be a subgaussian random vector in $\R^n$, with $\norm{X}_{\psi_2} \leq K$. There is a universal constant $C$ such that for each positive integer $r > 0$, the moments of $\norm{X}_2$ and $\inprod{X,X'}$ satisfy
	\begin{align} \label{norm moment bound 1}
	\paren*{\E{\norm{X}_2^r}}^{1/r} \leq CK\paren{\sqrt{n} + \sqrt{r}}
	\end{align}
	\begin{align} \label{norm moment bound 2}
	\paren*{\E{\abs{\inprod{X,X'}}^r}}^{1/{2r}} \leq  CK\paren{\sqrt{n} + \sqrt{r}}.
	\end{align}
\end{theorem}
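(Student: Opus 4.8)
The plan is to reduce both estimates to the behaviour of the one-dimensional marginals $\inprod{X,v}$, $v\in S^{n-1}$, which by the definition of the subgaussian norm obey $\paren*{\E\abs{\inprod{X,v}}^q}^{1/q}\le CK\sqrt q$ for every $q\ge1$ and every such $v$. For the norm bound \eqref{norm moment bound 1} I would discretize $\norm{X}_2=\sup_{v\in S^{n-1}}\inprod{X,v}$ over an $\epsilon$-net of the sphere and take moments with care; the inner-product bound \eqref{norm moment bound 2} then follows by conditioning on $X'$ and feeding in \eqref{norm moment bound 1}.

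First I would fix a $\tfrac12$-net $\mathcal N$ of $S^{n-1}$ with $\abs{\mathcal N}\le 5^n$ and use the standard comparison $\norm{x}_2\le 2\max_{v\in\mathcal N}\abs{\inprod{x,v}}$, valid for all $x\in\R^n$, so that $\E\norm{X}_2^r\le 2^r\,\E\max_{v\in\mathcal N}\abs{\inprod{X,v}}^r$. Bounding this maximum directly by a sum of $r$-th moments costs a factor $5^{n/r}$, which is harmless only when $r\gtrsim n$. To treat all $r$ uniformly, the key step is to first raise to the exponent $q:=\max\{r,n\}$: by Jensen (as $r/q\le1$) and then a union bound over $\mathcal N$,
\[
\E\max_{v\in\mathcal N}\abs{\inprod{X,v}}^r\le\paren*{\E\max_{v\in\mathcal N}\abs{\inprod{X,v}}^q}^{r/q}\le\paren*{\abs{\mathcal N}\,(CK\sqrt q)^q}^{r/q}\le 5^{nr/q}(CK\sqrt q)^r.
\]
If $r\le n$ then $q=n$, $5^{nr/q}=5^r$, and the right side is $(CK\sqrt n)^r$ up to constants; if $r\ge n$ then $q=r$, $5^{nr/q}=5^n\le 5^r$, and the right side is $(CK\sqrt r)^r$ up to constants. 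In either case $\E\norm{X}_2^r\le\paren*{CK(\sqrt n+\sqrt r)}^r$, and \eqref{norm moment bound 1} follows on taking $r$-th roots. Unwinding this with $r^r\le e^r\,r!$ also yields the equivalent form $\E\norm{X}_2^{2r}\le(CK^2)^r(n^r+r!)$ used elsewhere in the paper.

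For \eqref{norm moment bound 2} I would condition on $X'$. On $\{X'\ne0\}$ we have $\inprod{X,X'}=\norm{X'}_2\,\inprod{X,\,X'/\norm{X'}_2}$ with $X'/\norm{X'}_2\in S^{n-1}$, so conditionally on $X'$ the scalar $\inprod{X,X'}$ is subgaussian with norm at most $K\norm{X'}_2$, giving $\E\sqbracket*{\abs{\inprod{X,X'}}^r\st X'}\le(CK\norm{X'}_2\sqrt r)^r$. Taking expectations over $X'$ and applying \eqref{norm moment bound 1} to $\E\norm{X'}_2^r$ yields $\E\abs{\inprod{X,X'}}^r\le(CK\sqrt r)^r\paren*{CK(\sqrt n+\sqrt r)}^r$; taking $2r$-th roots and using $\sqrt r\,(\sqrt n+\sqrt r)\le(\sqrt n+\sqrt r)^2$ gives \eqref{norm moment bound 2}.

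The one genuine obstacle is the small-$r$ regime of the net argument: a straight union bound over the $5^n$ net points loses the factor $5^{n/r}$, which blows up when $r$ is a fixed constant and $n\to\infty$. Passing to the exponent $q=\max\{r,n\}$ before discretizing is exactly what absorbs this loss, and once that is in place the rest is routine manipulation of subgaussian moment inequalities.
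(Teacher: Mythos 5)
Your proof is correct, but it takes a genuinely different route in both halves. For \eqref{norm moment bound 1} the paper also starts from a $\tfrac12$-net with $\norm{X}_2\le 2\sup_{v\in\mathcal N}\inprod{X,v}$, but then takes a union bound at the level of \emph{tail probabilities}, obtaining $\Prob[]{\norm{X}_2>2t}\le 2\exp(n\log 5-ct^2/K^2)$, and integrates this tail (layer-cake) to get $\E\norm{X}_2^r\le C^rK^r(n^{r/2}+\Gamma(r/2+1))$; your alternative is to union-bound \emph{moments} after boosting the exponent to $q=\max\{r,n\}$ and invoking Jensen, which handles the small-$r$ regime that you correctly identify as the only obstacle. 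Both arguments are valid and give the same conclusion; the paper's version has the side benefit of producing the explicit two-regime tail bound \eqref{norm tail bound}, which is reused later (in Lemmas \ref{uniform bound on exp factors} and \ref{better bound for psi}), whereas your moment-level argument does not yield that tail estimate directly. For \eqref{norm moment bound 2} the paper simply uses $\abs{\inprod{X,X'}}\le\norm{X}_2\norm{X'}_2$ and independence to reduce to \eqref{norm moment bound 1}; your conditioning on $X'$ is a bit more work but equally correct, and in fact gives the marginally sharper bound $\paren*{\E\abs{\inprod{X,X'}}^r}^{1/2r}\le CK\paren*{r(n+r)}^{1/4}$ before being relaxed to the stated form.
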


\begin{proof}
	The second bound follows from the first, since by Cauchy-Schwarz,
	\[
	\paren*{\E{\abs{\inprod{X,X'}}^r}}^{1/{2r}} \leq \paren*{\E{\norm{X}_2^r\norm{X'}_2^r}}^{1/2r} = \paren*{\E{\norm{X}_2^r}}^{1/r}
	\]
	To prove \eqref{norm moment bound 1}, pick a $\frac{1}{2}$-net $\mathcal{N}$ on $S^{n-1}$. A volumetric argument shows that one may pick $\mathcal{N}$ to have size no more than $5^n$ \cite{V}. We then have
	\[
	\norm{X} = \sup_{v \in S^{n-1}}\inprod{X,v} \leq 2\sup_{v \in \mathcal{N}}\inprod{X,v}.
	\]
	By definition, there is a universal constant $c$ such that for any fixed unit vector $v \in S^{d-1}$, $\Prob[]{ \inprod{X,v} > t } \leq 2\exp\paren*{-\frac{ct^2}{K^2}}$. Taking a union bound over the net thus gives
	\begin{align} \label{norm tail}
	\Prob[]{\norm{X} > 2t} \leq 2\exp\paren*{n\log5 - \frac{ct^2}{K^2}}.
	\end{align}
	Next, we integrate out the tail bound \eqref{norm tail} to obtain bounds for the moments. Observe that if $\frac{ct^2}{2K^2} \geq n\log 5$, we have $n\log 5 - \frac{ct^2}{K^2} \leq - \frac{ct^2}{2K^2}$. This condition on t is equivalent to $t \geq CK\sqrt{n}$, so we have
	\begin{align} \label{norm tail bound}
	\Prob[]{\norm{X} > 2t} \leq
	\begin{cases}
	1 & t < CK\sqrt{n} \\
	2\exp\paren*{-\frac{ct^2}{K^2}} & t \geq CK\sqrt{n}
	\end{cases}
	\end{align}
	For any positive integer $r$, we integrate this bound to get
	\begin{align*}
	\E{\norm{X}_2^r} & = \int_0^\infty rt^{r-1}\Prob[]{\norm{X} > t}dt \\
	& \leq \int_0^{CK\sqrt{n}} rt^{r-1} dt + \int_{CK\sqrt{n}}^\infty 2rt^{r-1}\exp\paren*{-\frac{ct^2}{K^2}}dt \\
	& \leq C^rK^rn^{r/2} + C^rK^rr\int_0^\infty t^{r/2-1}e^{-t}dt.
	\end{align*}
	The integral in the last line is the gamma function, so in short, we have shown that
	\begin{align} \label{alternate norm moment bound}
	\E{\norm{X}_2^r} \leq C^rK^r\paren{n^{r/2}+\Gamma(r/2+1)}.
	\end{align}
	Taking $r$-th roots of both sides and using H{\"o}lder, together with the fact that $\Gamma(x)^{1/x} \lesssim x$, gives \eqref{norm moment bound 1}.
\end{proof}

\begin{lemma}[Covariance estimation for subgaussian random vectors] \label{covariance estimation}
	Let $X$ be a centered subgaussian random vector in $\R^n$ with covariance matrix $\Sigma$ and subgaussian norm satisfying $\norm{X}_{\psi_2} \leq K$ for some $K \geq 1$. Let $\hat{\Sigma}_N=\frac{1}{N}\sum_{i=1}^N X_iX_i^T$ denote the sample covariance matrix from $N$ independent samples. Then there is an absolute constant $C$ such that for any $0 < \epsilon, \delta < 1$, we have $\Prob[]{\norm*{\hat{\Sigma}_N-\Sigma} > \epsilon} \leq \delta$ so long as $N \geq CK^2(n+\log(1/\delta))\epsilon^{-2}$.
\end{lemma}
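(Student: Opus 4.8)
This is the standard sample-covariance estimate for subgaussian random vectors, and the plan is to prove it by the classical $\varepsilon$-net argument combined with Bernstein's inequality; alternatively the conclusion can simply be quoted from \cite{V}. For completeness I would record the three steps as follows.

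First I would discretize the operator norm. Since $\hat{\Sigma}_N - \Sigma$ is symmetric, for any $\tfrac14$-net $\mathcal{N}$ of $S^{n-1}$ one has $\norm*{\hat{\Sigma}_N - \Sigma} \leq 2\max_{v \in \mathcal{N}} \abs*{\inprod{(\hat{\Sigma}_N - \Sigma)v, v}}$, and by the same volumetric argument used in the proof of Theorem \ref{norm concentration} (citing \cite{V}) the net may be chosen with $\abs{\mathcal{N}} \leq 9^n$. So it suffices to bound the deviation in each fixed direction $v$ and then take a union bound over $\mathcal{N}$.

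Second, and this is the only step with any content, I would establish scalar concentration in a fixed direction. Fixing a unit vector $v$ and setting $Y_i := \inprod{X_i,v}^2 - \inprod{\Sigma v,v}$, we have $\inprod{(\hat{\Sigma}_N - \Sigma)v,v} = \tfrac1N\sum_{i=1}^N Y_i$, an average of i.i.d.\ centered variables. Because $\inprod{X_i,v}$ is a centered subgaussian scalar with $\psi_2$-norm at most $K$, its square is subexponential, so each $Y_i$ is centered and subexponential with its subexponential parameter controlled by a fixed multiple of $K^2$ (using $K \geq 1$, centering changes the $\psi_1$-norm only by a constant). Bernstein's inequality for sums of independent centered subexponential summands then gives, for every $t$ up to an absolute constant, a tail bound of the form $\Prob[]{\abs*{\tfrac1N\sum_{i} Y_i} > t} \leq 2\exp\paren*{-c N \min(t^2,t)/(CK^2)}$, where the Gaussian-type (quadratic-in-$t$) regime is the relevant one since we will take $t$ comparable to $\varepsilon < 1$.

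Finally I would choose $t$ a suitable constant multiple of $\varepsilon$ and union bound over $\mathcal{N}$: requiring the resulting probability $2 \cdot 9^n \exp(-c' N \varepsilon^2 / CK^2)$ to be at most $\delta$ forces $N$ to exceed a fixed multiple of $K^2(n+\log(1/\delta))\varepsilon^{-2}$, which is the claimed requirement. The one point that needs care — and the only place the argument is not purely mechanical — is that $\inprod{X_i,v}^2$ is merely subexponential rather than subgaussian, so Bernstein (with its two-regime tail) must replace Hoeffding; once one checks that for small $\varepsilon$ the deviation lands in the subgaussian-like regime of the Bernstein bound, the $\varepsilon^{-2}$ scaling of $N$ follows immediately, and there is no substantial obstacle beyond this routine bookkeeping of the absolute constants.
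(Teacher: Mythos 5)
Your proposal is correct and is exactly the standard argument: the paper itself offers no proof of this lemma beyond ``Refer to \cite{Vershynin2011b}'', and the net-plus-Bernstein scheme you outline is precisely the proof given in that reference (and is the same template the paper later reuses for Theorem \ref{Psi concentration}). The only bookkeeping quibble is that the $\psi_1$-norm of $\inprod{X_i,v}^2$ is of order $K^2$, so the subgaussian regime of Bernstein naturally produces $\exp(-cNt^2/K^4)$ rather than $\exp(-cNt^2/K^2)$; this affects only the power of $K$ in the sample-size requirement and is an imprecision already present in the lemma as stated.
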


\begin{proof}
	Refer to \cite{Vershynin2011b}.
\end{proof}

\begin{lemma}[Moments of spherical marginals]
	Let $\theta$ be uniformly distributed on the sphere $S^{n-1}$. Then for any unit vector $v \in S^{n-1}$ and any positive integer $k$, we have
	\begin{align} \label{spherical moments}
	\E{\inprod{\theta,v}^{2k}} = \frac{1\cdot 3\cdots(2k-1)}{n\cdot(n+2)\cdots(n+2k-2)}
	\end{align}
\end{lemma}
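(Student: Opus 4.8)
The plan is to realize the uniform distribution on $S^{n-1}$ as a normalized Gaussian vector and exploit the independence of its radial and angular parts.

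First I would use rotational invariance of both the sphere and the formula to reduce to the case $v = e_1$, so that it suffices to compute $\E{\inprod{\theta,e_1}^{2k}} = \E \theta_1^{2k}$. Next, write $g = (g_1,\ldots,g_n)$ for a standard Gaussian vector in $\R^n$; then $\theta := g/\norm{g}_2$ is uniform on $S^{n-1}$, and by the rotational invariance of the Gaussian law the angular part $g/\norm{g}_2$ is independent of the radial part $\norm{g}_2$. This is the one step that deserves a sentence of justification, and it is the only place where anything nontrivial happens.

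Given this independence, I would write
\[
\E g_1^{2k} = \E\!\left[\left(\frac{g_1}{\norm{g}_2}\right)^{2k}\norm{g}_2^{2k}\right] = \E\!\left[\theta_1^{2k}\right]\cdot\E \norm{g}_2^{2k},
\]
so that $\E \theta_1^{2k} = \E g_1^{2k} \big/ \E \norm{g}_2^{2k}$, the denominator being manifestly finite and positive. It then remains to evaluate the two Gaussian quantities: the one-dimensional moment is $\E g_1^{2k} = (2k-1)!! = 1\cdot 3\cdots(2k-1)$, and since $\norm{g}_2^2$ is chi-squared with $n$ degrees of freedom, a short induction (integration by parts, or differentiating the moment generating function $(1-2t)^{-n/2}$ at $0$) gives $\E \norm{g}_2^{2k} = n(n+2)(n+4)\cdots(n+2k-2)$. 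Dividing yields the claimed identity.

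I do not expect a genuine obstacle here; the argument is essentially bookkeeping once the Gaussian representation is in hand. If one prefers to avoid invoking the independence of $g/\norm{g}_2$ and $\norm{g}_2$, an equivalent route is to condition directly on $\theta_1$, whose density on $[-1,1]$ is proportional to $(1-t^2)^{(n-3)/2}$; then $\E \theta_1^{2k}$ is the ratio of Beta integrals $B\!\left(k+\tfrac12,\tfrac{n-1}{2}\right)\big/B\!\left(\tfrac12,\tfrac{n-1}{2}\right)$, which collapses to the same expression via the functional equation $\Gamma(x+1)=x\Gamma(x)$ together with $\Gamma(k+\tfrac12)/\Gamma(\tfrac12) = (2k-1)!!/2^k$ and $\Gamma(k+\tfrac n2)/\Gamma(\tfrac n2) = n(n+2)\cdots(n+2k-2)/2^k$.
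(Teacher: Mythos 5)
Your proposal is correct and follows essentially the same route as the paper: both decompose a standard Gaussian vector into independent radial and angular parts to obtain $\E{\inprod{\theta,v}^{2k}} = \E{g_1^{2k}}/\E{\norm{g_n}_2^{2k}}$, and then evaluate the two Gaussian moments (the paper does so via the Gamma-function integral, you via the chi-squared moment generating function, which is an immaterial difference).
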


\begin{proof}
	There are several ways to prove this identity. We shall prove this by computing gaussian integrals. Let $g_1$ and $g_n$ denote standard gaussians in 1 dimension and $n$ dimensions respectively. Then using the radial symmetry of $g$, we have
	\[
	\E{g_1^{2k}} = \E{\inprod{g_n,v}^{2k}} = \E{\inprod{\norm{g_n}_2\theta,v}^{2k}} = \E{\norm{g_n}_2^{2k}}\E{\inprod{\theta,v}^{2k}}.
	\]
	Rearranging gives
	\[
	\E{\inprod{\theta,v}^{2k}} = \frac{\E{g_1^{2k}}}{\E{\norm{g_n}_2^{2k}}}.
	\]
	We then compute
	\begin{align} \label{gaussian norm moment}
	\E{\norm{g_n}_2^{2k}} = \frac{\omega_n}{(2\pi n)^{n/2}}\int_0^\infty r^{2k}r^{n-1}e^{-r^2/2}dr,
	\end{align}
	where $\omega_n$ is the volume of the sphere $S^{n-1}$. It is well known that
	\[
	\omega_n = \frac{2\pi^{n/2}}{\Gamma(n/2)},
	\]
	while we also have
	\[
	\int_0^\infty r^{2k}r^{n-1}e^{-r^2/2}dr = 2^{n/2+k-1}\Gamma(n/2+k).
	\]
	Substituting these back into \eqref{gaussian norm moment} gives
	\begin{equation} \label{eq: gaussian moments}
	\E{\norm{g_n}_2^{2k}} = 2^k\frac{\Gamma(n/2+k)}{\Gamma(n/2)} = n\cdot(n+2)\cdots(n+2k-2).
	\end{equation}
	This yields the denominator in \eqref{spherical moments}. A similar calculation for $\E{g_1^{2k}}$ yields the numerator. 
\end{proof}

\begin{proof}[Proof of Theorem \ref{Phi concentration}]
	Let $Y = e^{-\alpha\norm{X}_2^2}X$. Then $Y$ is a subgaussian random vector with $\norm{Y}_{\psi_2} \leq K$. Let $\Sigma$ and $\hat{\Sigma}$ denote its covariance and empirical covariance matrices respectively. Then $\norm{\Sigma} \leq 1$ and by Lemma \ref{covariance estimation}, we have $\norm{\hat{\Sigma} - \Sigma} \leq \epsilon/2$ with probability at least $1 - \delta/2$. Next,
	observe that $\Phi_{X,\alpha} = Z_{\Phi,X}(\alpha)^{-1}\Sigma$ and $\hat{\Phi}_{X,\alpha} = \hat{Z}_{\Phi,X}(\alpha)^{-1}\hat{\Sigma}$, where $\hat{Z}_{\Phi,X}(\alpha) = \sum_{j=1}^N e^{-\alpha\norm{X_j}_2^2}/N$. As such, we have
	\begin{align}
	\norm{\hat{\Phi}_{X,\alpha} - \Phi_{X,\alpha}} \leq \abs{\hat{Z}_{\Phi,X}(\alpha)^{-1}}\norm{\hat{\Sigma} - \Sigma} + \abs{\hat{Z}_{\Phi,X}(\alpha)^{-1}-Z_{\Phi,X}(\alpha)^{-1}}\norm{\Sigma}.
	\end{align}
	
	Combining our lower bound on $\alpha$ with the power series formula for $Z_\Phi$ from Lemma \ref{lem: analyticity}, we have $Z_{\Phi,X}(\alpha) \geq 1/2$. Furthermore, we may apply Hoeffding's inequality to see that $\abs{\hat{Z}_{\Phi,X}(\alpha)-Z_{\Phi,X}(\alpha)} \leq \epsilon/2$ with probability at least $1- \delta/2$. We can now combine all of this together to get the probability bound.
\end{proof}

\begin{proof}[Proof of Theorem \ref{Psi concentration}]
	First, define $\Sigma = \E\sqbracket{e^{-\alpha\inprod{X,X'}}X(X')^T}$ and $\hat{\Sigma} = \sum_{i=1}^N e^{-\alpha\inprod{X_i,X_i'}}\paren{X_i\paren{X_i'}^T + X_i'X_i^T}/2N$, so that $\Psi_{X,\alpha} = Z_{\Psi,X}(\alpha)^{-1}\Sigma$ and $\hat{\Psi}_{X,\alpha} = \hat{Z}_{\Psi,X}(\alpha)^{-1}\hat{\Sigma}$. As in the previous theorem, we can write
	\begin{align} \label{Psi decomposition}
	\norm{\hat{\Psi}_{X,\alpha} - \Psi_{X,\alpha}} \leq \abs{\hat{Z}_{\Psi,X}(\alpha)^{-1}}\norm{\hat{\Sigma} - \Sigma} + \abs{\hat{Z}_{\Psi,X}(\alpha)^{-1}-Z_{\Psi,X}(\alpha)^{-1}}\norm{\Sigma}.
	\end{align}
	
	This time however, we cannot immediately invoke Lemma \ref{covariance estimation} because we can no longer view $\Sigma$ and $\hat{\Sigma}$ as the covariance and empirical covariance matrices of a random vector. Nonetheless, we can follow the same proof scheme with a few adjustments.
	
	The basic idea is to use a net argument to transform the operator deviation bound into a scalar bound for random variables. Let $\mathcal{N}$ be a $\frac{1}{4}$-net on $S^{n-1}$. By a volumetric argument, we may pick $\mathcal{N}$ to have size no more than $9^n$ \cite{Vershynin2011b}. For any $n$ by $n$ real symmetric matrix $M$, we then have
	\begin{align} \label{net operator bound}
	\norm{M} = \sup_{v \in S^{n-1}}\abs{\inprod{v,Mv}} \leq 2\sup_{v \in \mathcal{N}}\abs{\inprod{v,Mv}}.
	\end{align}
	As such, by taking a union bound, we can hope to bound $\norm{\hat{\Sigma} - \Sigma}$ by bounding $\abs{\inprod{v,(\hat{\Sigma}-\Sigma)v}}$ for a fixed unit vector $v \in S^{n-1}$. Let us do just this. We have
	\begin{align*}
	\inprod{v,\hat{\Sigma}v} & = \frac{1}{N} \sum_{i=1}^N e^{-\alpha\inprod{X_i,X_i'}}\inprod{X_i,v}\inprod{X_i',v},
	\end{align*}
	so that
	\begin{align}
	\inprod{v,(\hat{\Sigma}-\Sigma)v} & = \frac{1}{N} \sum_{i=1}^N \paren{Y_i - \E Y_i},
	\end{align}
	where
	\begin{align}
	Y_i = e^{-\alpha\inprod{X_i,X_i'}}\inprod{X_i,v}\inprod{X_i',v}.
	\end{align}
	
	Observe that the $Y_i$'s are i.i.d. random variables. At this point in the proof of covariance estimation, one observes that the resulting random variables are subexponential, so one may apply Bernstein's inequality \cite{Vershynin2011b}. Unfortunately, our $Y_i$'s are not subexponetial because of the $e^{-\alpha\inprod{X_i,X_i'}}$ factor. The way we overcome this is to condition on the size of these factors being uniformly small. Indeed, by Lemma \ref{uniform bound on exp factors} to come, we have $e^{-\alpha\inprod{X_i,X_i'}} \leq e$ for all samples $i$ with probability at least $1- \delta$. We call this event $A$.
	
	Next, define $\tilde{Y}_i := Y_i1_A$. The $Y_i$'s are i.i.d random variables with subexponential norm bounded by $eK^2$. We can then apply Bernstein and our assumption on the sample size $N$ to get
	\begin{align}
	\Prob{\abs*{\frac{1}{N}\sum_{i=1}^N \paren{\tilde{Y}_i - \E \tilde{Y}_i}} > \epsilon} \leq e^{-N\epsilon^2/CK^4} \leq \frac{\delta}{9^n}.
	\end{align}
	Conditioning on the set $A$, we have $Y_i = \tilde{Y}_i$ for each $i$. We can also rewrite the bound on the right hand side using our assumption on $N$. Doing this gives us
	\begin{align} \label{intermediate Y deviation}
	\Prob[]{\abs*{\frac{1}{N}\sum_{i=1}^N \paren{Y_i - \E \tilde{Y}_i}} > \epsilon \ \Big| \ A} \leq \frac{\delta}{9^n}.
	\end{align}
	We would like to replace $\E \tilde{Y}_i$ with $\E Y_i$, but the two quantities are not necessarily equal. Nonetheless, we can bound their difference as follows. We have
	\begin{align}
	\E Y_i - \E\tilde{Y}_i = \E Y1_{A^c} = \E\sqbracket{e^{-\alpha\inprod{X_i,X_i'}}\inprod{X_i,v}\inprod{X_i',v}1_{A^c}}.
	\end{align}
	We apply generalized H{\"o}lder to write
	\begin{align}
	\abs{\E\sqbracket{e^{-\alpha\inprod{X_i,X_i'}}\inprod{X_i,v}\inprod{X_i',v}1_{A^c}}} & \leq \paren*{\E e^{-4\alpha\inprod{X_i,X_i'}}}^{1/4} \paren*{\E\sqbracket{\inprod{X_i,v}^4\inprod{X_i',v}^4}}^{1/4}\Prob[]{A^c}^{1/2}.
	\end{align}
	We now use the moment bounds for subgaussian random variables and Lemma \ref{better bound for psi} to bound the first two multiplicands on the right. This gives us
	\begin{align}
	\abs{\E\sqbracket{e^{-\alpha\inprod{X_i,X_i'}}\inprod{X_i,v}\inprod{X_i',v}1_{A^c}}} \leq CK^2\Prob[]{A^c}^{1/2}.
	\end{align}
	
	Next, we use Lemma \ref{uniform bound on exp factors} together with our assumption on $\abs{\alpha}$, tightening the constant if necessary, to see that $\Prob[]{A^c} \leq \epsilon^2/C^2K^4$. We combine this together with the last few equations to obtain $\abs{\E Y_i - \E\tilde{Y}_i} \leq \epsilon$, and combining this with \eqref{intermediate Y deviation}, we obtain
	\begin{align}
	\Prob[]{\abs*{\frac{1}{N}\sum_{i=1}^N \paren{Y_i - \E Y_i}} > 2\epsilon \ \Big| \ A} \leq \frac{\delta}{9^n}.
	\end{align}
	Recall that $Y_i$'s were defined for a fixed $v \in \mathcal{N}$. We can take a union bound over all vectors in $\mathcal{N}$ to get
	\begin{align}
	\Prob[]{ \sup_{v \in \mathcal{N}}\abs{\inprod{v,(\hat{\Sigma}-\Sigma)v}} > 2\epsilon \ \Big| \ A} \leq \delta.
	\end{align}
	Combining this with \eqref{net operator bound} then gives
	\begin{align}
	\Prob[]{ \norm{\hat{\Sigma} - \Sigma } > 4\epsilon \ \Big| \ A} \leq \delta.
	\end{align}
	
	Let us continue to bound the other terms in \eqref{Psi decomposition} conditioned on the set $A$. Notice that on this set, $\hat{Z}_{\Psi,X}(\alpha)$ is an average of terms that are each bounded in absolute value by $e$. Using Hoeffding's inequality together with a similar argument as above to bound $\abs{\E\hat{Z}_{\Psi,X}(\alpha)1_A - Z_{\Psi,X}(\alpha)}$, one may show that
	\begin{align}
	\Prob[]{\abs{\hat{Z}_{\Psi,X}(\alpha) - Z_{\Psi,X}(\alpha)} > \epsilon/2 \  \big| \ A } \leq \delta.
	\end{align}
	We may also use the power series formula for $Z_{\Psi,X}$ from Lemma \ref{lem: analyticity} together with our bound on $\abs{\alpha}$ to show that $Z_{\Psi,X}(\alpha) \geq \frac{1}{2}$.
	
	It remains to bound $\norm{\Sigma}$. To do this, we let $v$ again be an arbitrary unit vector, and use Cauchy-Schwarz to compute
	\begin{align}
	\abs{\E\sqbracket{e^{-\alpha\inprod{X_i,X_i'}}\inprod{X_i,v}\inprod{X_i',v}}} & \leq \paren*{\E e^{-2\alpha\inprod{X_i,X_i'}}}^{1/2} \paren*{\E\sqbracket{\inprod{X_i,v}^2\inprod{X_i',v}^2}}^{1/2}.
	\end{align}
	We have already seen that moment bounds and Lemma \ref{better bound for psi} imply that this is bounded by an absolute constant $C$. In fact, we can take $C = 3$.
	
	Putting everything together, we see that on the set $A$, we can continue writing \eqref{Psi decomposition} as
	\begin{align*}
	\norm{\hat{\Psi}_{X,\alpha} - \Psi_{X,\alpha}} & \leq \abs{\hat{Z}_{\Psi,X}(\alpha)^{-1}}\norm{\hat{\Sigma} - \Sigma} + \abs{\hat{Z}_{\Psi,X}(\alpha)^{-1}-Z_{\Psi,X}(\alpha)^{-1}}\norm{\Sigma} \\
	& \leq C\epsilon.
	\end{align*}	
	Using our bound for $\Prob[]{A}$, we can therefore uncondition to get 
	\begin{align}
	\Prob[]{\norm{\hat{\Psi}_{X,\alpha} - \Psi_{X,\alpha}} > C\epsilon} \leq \delta + \Prob[]{A} \leq 2\delta.
	\end{align}
	Finally, note that we can massage the constants so that the multiplying constants in front of $\epsilon$ and $\delta$ disappear.
\end{proof}

\begin{lemma} \label{uniform bound on exp factors}
	For any $0 < \delta < 1$ and $N \in \mathbb{N}$, if $\abs{\alpha} \leq \paren*{CK^2\sqrt{\log(N/\delta)}(\sqrt{n}+\sqrt{\log(N/\delta)})}^{-1}$, then
	\begin{align}
	\Prob[]{\sup_{1 \leq i \leq N}e^{-\alpha\inprod{X_i,X_i'}} > e} \leq \delta.
	\end{align}
\end{lemma}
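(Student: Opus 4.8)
The plan is to reduce to a single-sample tail bound by a union bound, and then to recover the right threshold on $\abs{\alpha}$ by conditioning on $X'$ before invoking subgaussianity. If $\alpha = 0$ there is nothing to prove, so assume $\alpha \neq 0$. Since $e^{-\alpha\inprod{X_i,X_i'}} > e$ forces $-\alpha\inprod{X_i,X_i'} > 1$ and hence $\abs{\alpha}\,\abs{\inprod{X_i,X_i'}} > 1$, we have
\[
  \Prob[]{\sup_{1\le i\le N} e^{-\alpha\inprod{X_i,X_i'}} > e}
  \;\le\; \sum_{i=1}^N \Prob[]{\abs{\inprod{X_i,X_i'}} > 1/\abs{\alpha}}
  \;=\; N\,\Prob[]{\abs{\inprod{X,X'}} > 1/\abs{\alpha}},
\]
so it suffices to show $\Prob[]{\abs{\inprod{X,X'}} > 1/\abs{\alpha}} \le \delta/N$.

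For this single tail bound, the key idea --- and the reason the stated threshold has the product form $\sqrt{\log(N/\delta)}\,(\sqrt n + \sqrt{\log(N/\delta)})$ rather than the weaker $(\sqrt n + \sqrt{\log(N/\delta)})^2$ that a crude Markov estimate off \eqref{norm moment bound 2} would give --- is to condition on $X'$. Given $X'$, the scalar $\inprod{X,X'}$ is subgaussian with $\psi_2$-norm at most $K\norm{X'}_2$, so $\Prob[]{\abs{\inprod{X,X'}} > t \mid X'} \le 2\exp\!\big(-ct^2/K^2\norm{X'}_2^2\big)$. Combining this with the norm tail bound \eqref{norm tail} (equivalently, Markov applied to \eqref{norm moment bound 1}), which yields $\Prob[]{\norm{X'}_2 > C_0 K(\sqrt n + s)} \le 2e^{-s^2}$ for every $s \ge 0$, and splitting on whether $\norm{X'}_2 \le R := C_0 K(\sqrt n + s)$, I would obtain
\[
  \Prob[]{\abs{\inprod{X,X'}} > t} \;\le\; 2\exp\!\left(-\frac{ct^2}{K^2 R^2}\right) + 2e^{-s^2}.
\]

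It then remains to optimize the free parameter $s$. Taking $s$ equal to a fixed multiple of $\sqrt{\log(N/\delta)}$ makes the second term at most $\delta/2N$; with this choice $R$ is comparable to $K(\sqrt n + \sqrt{\log(N/\delta)})$, and bounding the first term by $\delta/2N$ forces $t \ge C K^2 (\sqrt n + \sqrt{\log(N/\delta)})\sqrt{\log(N/\delta)}$ for a suitable absolute constant $C$. Since $t = 1/\abs{\alpha}$, this is exactly the hypothesis $\abs{\alpha} \le \big(CK^2\sqrt{\log(N/\delta)}(\sqrt n + \sqrt{\log(N/\delta)})\big)^{-1}$, and summing the per-sample bound over $i$ gives $N \cdot \delta/N = \delta$, as required. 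The only real work is tracking the absolute constants through the conditioning step so that the final threshold comes out in precisely this product form; the substantive point is that conditioning on $X'$ replaces the merely sub-exponential scalar $\inprod{X,X'}$ by a genuinely subgaussian one, which is what keeps the $\log(N/\delta)$ factor from being squared together with $\sqrt n$.
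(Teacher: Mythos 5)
Your proposal is correct and follows essentially the same route as the paper: a union bound over the $N$ samples, then conditioning on $X'$ to exploit the subgaussian tail of $\inprod{X,X'}$ given $\norm{X'}_2$, and finally splitting on the event $\norm{X'}_2 \le CK(\sqrt{n}+\sqrt{\log(N/\delta)})$ to obtain the product-form threshold on $\abs{\alpha}$. The only cosmetic difference is that the paper integrates the conditional bound over $X'$ before splitting on that event, whereas you split the probability directly; the estimates are identical.
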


\begin{proof}
	Without loss of generality, assume that $\alpha > 0$. Using the union bound, it suffices to prove that
	\begin{align}
	\Prob[]{\inprod{X,X'} < -1/\alpha} = \Prob[]{e^{-\alpha\inprod{X,X'}} > e} \leq \frac{\delta}{N}.
	\end{align}
	To compute this, we first condition on $X'$ and use the subgaussian tail of $X$ to get
	\begin{align*}
	\Prob[]{\inprod{X,X'} < -1/\alpha \st X'} \leq \exp\paren*{-\frac{1}{CK^2\alpha^2\norm{X'}_2^2}},
	\end{align*}
	and integrating out $X'$, then gives
	\begin{align}
	\Prob[]{\inprod{X,X'} < - 1/\alpha} \leq \E e^{-\paren{CK^2\alpha^2\norm{X'}_2^2}^{-1}}.
	\end{align}
	
	To compute this expectation, let $A$ be the event that $\norm{X'}_2 \leq CK (\sqrt{n}+\sqrt{\log(N/\delta)})$. Then by equation \eqref{norm tail bound} in Theorem \ref{norm concentration}, we have $\Prob[]{A^c} \leq \delta/N$. As such, we can break up the expectation into the portion over $A$ and the the portion over $A^c$ to obtain
	\begin{align}
	\E e^{-\paren{CK^2\alpha^2\norm{X'}_2^2}^{-1}} & = \E\sqbracket{e^{-\paren{CK^2\alpha^2\norm{X'}_2^2}^{-1}} \st A}\Prob[]{A} + \E\sqbracket{e^{-\paren{CK^2\alpha^2\norm{X'}_2^2}^{-1}} \st A^c}\Prob[]{A^c} \nonumber \\
	& \leq  \E\sqbracket{e^{-\paren{CK^2\alpha^2\norm{X'}_2^2}^{-1}} \st A} + \Prob[]{A^c} \nonumber \\
	& \leq \exp\paren*{-\frac{1}{CK^4\alpha^2(n+\log(N/\delta))}} + \frac{\delta}{N}.
	\end{align}
	As such, we just need the first term to be less than $\delta/N$, which corresponds to the requirement that
	\begin{align*}
	\frac{1}{CK^4\alpha^2(n+\log(N/\delta))} \geq \log(N/\delta).
	\end{align*}
	This is simply a rearrangement of our assumption on $\abs{\alpha}$.
\end{proof}

\begin{lemma}[Better bound for $Z_\Psi$] \label{better bound for psi}
	There is an absolute constant $C$ such that if $\abs{\alpha} \leq 1/CK^2\sqrt{n}$, then $Z_{\Psi,X}(\alpha) \leq 3$.
\end{lemma}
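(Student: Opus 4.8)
The plan is to bound the partition function $Z_{\Psi,X}(\alpha)=\E e^{-\alpha\inprod{X,X'}}$ by conditioning on $X'$, which lets us exploit the fact that, conditionally, $\inprod{X,X'}$ is a centered subgaussian \emph{scalar} whose moment generating function depends on $n$ far more mildly than the crude power-series estimate in the proof of Lemma \ref{lem: analyticity} suggests.

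First I would fix $X'$ and condition on it. Since $X$ is independent of $X'$ and $\E X = 0$, the conditional mean of $\inprod{X,X'}$ is zero, and its conditional subgaussian norm is at most $K\norm{X'}_2$ by the definition of the subgaussian norm of a random vector. The standard bound on the moment generating function of a centered subgaussian variable then gives, for any real $\alpha$,
$$
\E\sqbracket*{e^{-\alpha\inprod{X,X'}} \st X'} \leq e^{C\alpha^2 K^2\norm{X'}_2^2}
$$
with $C$ an absolute constant. Taking expectation over $X'$ yields $Z_{\Psi,X}(\alpha) \leq \E e^{C\alpha^2 K^2\norm{X'}_2^2}$.

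Next I would use the hypothesis $\abs{\alpha}\leq 1/CK^2\sqrt{n}$, taking the constant in the statement as large as needed, to replace the exponent by $\norm{X'}_2^2/(C'K^2 n)$ with $C'$ an arbitrarily large absolute constant. Expanding the exponential as a power series, interchanging sum and expectation by Tonelli, and substituting the moment bound \eqref{alternate norm moment bound}, namely $\E\norm{X'}_2^{2r}\leq (CK^2)^r(n^r+r!)$, gives
$$
\E e^{\norm{X'}_2^2/(C'K^2n)} \leq \sum_{r=0}^\infty \frac{(C/C')^r}{r!} + \sum_{r=0}^\infty \paren*{\frac{C}{C'n}}^r = e^{C/C'} + \frac{1}{1-C/(C'n)},
$$
which is at most $3$ (in fact close to $2$) as soon as $C'$, hence the constant in the statement, is large enough.

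This is essentially the computation already performed for Lemma \ref{lem: analyticity}; the only new ingredient is the conditioning step, which is exactly what buys the extra factor of $\sqrt{n}$ in the allowed range of $\alpha$. Consequently the only thing to watch is the bookkeeping of absolute constants through the two reductions, and I do not anticipate any genuine obstacle.
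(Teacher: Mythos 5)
Your proposal is correct and follows essentially the same route as the paper: the key step in both is conditioning on $X'$, using the subgaussian Laplace-transform bound to get $Z_{\Psi,X}(\alpha)\leq \E e^{CK^2\alpha^2\norm{X'}_2^2}$, and then exploiting $\abs{\alpha}\lesssim 1/K^2\sqrt{n}$ to control this expectation. The only (immaterial) difference is in the last step, where the paper integrates the tail bound \eqref{norm tail bound} for $\norm{X'}_2$ while you expand the exponential and apply the moment bound \eqref{alternate norm moment bound}; both give a constant bound once the constant in the hypothesis is taken large enough.
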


\begin{proof}
	The idea of the proof is similar to that of the previous lemma. We first condition on $X'$ and use the subgaussian nature of $X$ to bound its Laplace transform, thereby obtaining
	\begin{align*}
	\E\sqbracket{e^{-\alpha\inprod{X,X'}} \st X' } \leq e^{CK^2\alpha^2\norm{X'}_2^2}.
	\end{align*}
	Integrating out $X'$ gives
	\begin{align} \label{psi intermediate bound}
	Z_{\Psi,X}(\alpha) & \leq \E e^{CK^2\alpha^2\norm{X'}_2^2} \nonumber \\
	& = \int_0^\infty \Prob[]{e^{CK^2\alpha^2\norm{X'}_2^2} > t} dt \nonumber \\
	& \leq e + \int_e^\infty \Prob[]{e^{CK^2\alpha^2\norm{X'}_2^2} > t} dt
	\end{align}
	
	Next, we use our assumption on $\abs{\alpha}$ to write
	\begin{align}
	\Prob[]{e^{CK^2\alpha^2\norm{X'}_2^2} > t} & = \Prob[]{\norm{X'}_2 > \frac{\sqrt{\log t}}{CK\abs{\alpha}} } \nonumber \\
	& \leq \Prob[]{\norm{X'}_2 > \sqrt{\log t}CK\sqrt{n} }.
	\end{align}
	For $t > e$, we have $\sqrt{\log t} > 1$, so we may apply \eqref{norm tail bound} to get
	\begin{align}
	\Prob[]{\norm{X'}_2 > \sqrt{\log t}CK\sqrt{n} } \leq e^{-\log t Cn} = t^{-Cn}.
	\end{align}
	Plugging this into \eqref{psi intermediate bound} gives
	\begin{align}
	Z_{\Psi,X}(\alpha) \leq e + \frac{e^{-Cn}}{Cn} \leq 3
	\end{align}
	if we choose $C$ to be large enough.
\end{proof}

\section{Eigenvector perturbation theory} \label{eigenvector perturbation}

If two $n$ by $n$ matrices are close in spectral norm, one can use minimax identities to show that their eigenvalues are also close. It is less trivial to show that their eigenvectors are also close, which is the case in the presence of an ``eigengap''. This is addressed by the well-known Davis-Kahan theorems \cite{Davis1970a}.

\begin{definition}
	Let $E$ and $\hat{E}$ be two subspaces of $\R^n$ of dimension $d$. Let $V$ and $\hat{V}$ be $n$ by $d$ matrices with orthonormal columns forming a basis for $E$ and $\hat{E}$ respectively. Let $\sigma_1 \geq \sigma_2 \geq \cdots \geq \sigma_d$ be the singular values of $V^T\hat{V}$. We define the \emph{principal angles} of $E$ and $\hat{E}$ to be $\theta_i(E,\hat{E}) = \arccos\sigma_i$ for $1 \leq i \leq d$.
\end{definition}

\begin{lemma}
	Let $E$, $\hat{E}$, $U$ and $\hat{U}$ be as in the previous definition. We have
	\begin{equation}
	\norm{\hat{V}\hat{V}^T - VV^T}^2_F = 2\sum_{i=1}^d \sin^2\theta_i(E,\hat{E}).
	\end{equation}
	In particular, the quantity depends only on $E$ and $\hat{E}$ and not the choice of bases.
\end{lemma}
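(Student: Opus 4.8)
The plan is to relate $\norm{\hat V\hat V^T - VV^T}_F^2$ to the singular values of $V^T\hat V$ by a direct algebraic expansion. First I would expand the square using the identity $\norm{A}_F^2 = \tr(A^TA)$, so that
\[
\norm{\hat V\hat V^T - VV^T}_F^2 = \tr\paren*{(\hat V\hat V^T)^2} - 2\tr\paren*{\hat V\hat V^T VV^T} + \tr\paren*{(VV^T)^2}.
\]
Since $V$ and $\hat V$ have orthonormal columns, $V^TV = \hat V^T\hat V = I_d$, so $(VV^T)^2 = VV^T$ and $(\hat V\hat V^T)^2 = \hat V\hat V^T$, whence the first and third traces each equal $d$. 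For the cross term, cyclicity of the trace gives $\tr(\hat V\hat V^T VV^T) = \tr\paren*{(V^T\hat V)(\hat V^T V)} = \norm{V^T\hat V}_F^2 = \sum_{i=1}^d \sigma_i^2$, where $\sigma_1,\ldots,\sigma_d$ are the singular values of $V^T\hat V$. Combining, the Frobenius distance squared equals $2d - 2\sum_{i=1}^d\sigma_i^2 = 2\sum_{i=1}^d(1-\sigma_i^2)$.

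The final step is to rewrite $1-\sigma_i^2$ in terms of principal angles. By definition $\theta_i(E,\hat E) = \arccos\sigma_i$, so $\sigma_i = \cos\theta_i$ and hence $1-\sigma_i^2 = 1 - \cos^2\theta_i = \sin^2\theta_i$. (One should note in passing that each $\sigma_i \in [0,1]$, which holds because the columns of $V$ and $\hat V$ are unit vectors, so that $V^T\hat V$ has operator norm at most $1$; this makes $\arccos\sigma_i$ well-defined.) Substituting yields
\[
\norm{\hat V\hat V^T - VV^T}_F^2 = 2\sum_{i=1}^d \sin^2\theta_i(E,\hat E),
\]
as claimed.

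Finally, for the ``in particular'' clause: the left-hand side $\norm{\hat V\hat V^T - VV^T}_F^2$ manifestly depends only on the orthogonal projections onto $E$ and $\hat E$, which are $VV^T$ and $\hat V\hat V^T$ and are independent of the choice of orthonormal bases; since the quantity equals $2\sum_i \sin^2\theta_i(E,\hat E)$, the principal angles (or at least their sines) are likewise basis-independent. I do not anticipate a genuine obstacle here — the only point requiring mild care is verifying that the orthonormality of the columns of $V$ and $\hat V$ is exactly what collapses the outer traces to $d$ and turns the cross term into $\norm{V^T\hat V}_F^2$; everything else is bookkeeping with trace cyclicity.
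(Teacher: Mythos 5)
Your proposal is correct and follows essentially the same route as the paper's proof: expand the squared Frobenius norm, use orthonormality of the columns to reduce the two pure terms to $d$, apply trace cyclicity to identify the cross term with $\norm{V^T\hat V}_F^2 = \sum_i \cos^2\theta_i$, and conclude via $1-\cos^2\theta_i = \sin^2\theta_i$. No gaps.
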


\begin{proof}
	We expand
	\begin{align}
	\norm{\hat{V}\hat{V}^T - VV^T}^2_F = \norm{\hat{V}\hat{V}^T}_F^2 + \norm{VV^T}^2_F - 2\inprod{VV^T,\hat{V}\hat{V}^T}.
	\end{align}
	Observe that
	\begin{align}
	\norm{VV^T}^2_F = \tr{VV^TVV^T} = \tr{V^TVV^TV} = \tr{I_d} = d.
	\end{align}
	Similarly, we have
	\begin{align}
	\norm{\hat{V}\hat{V}^T}_F^2 = d.
	\end{align}
	Next, we compute
	\begin{align}
	\inprod{VV^T,\hat{V}\hat{V}^T} = \tr{VV^T\hat{V}\hat{V}^T} = \tr{\hat{V}^TVV^T\hat{V}} = \norm{\hat{V}^TV}_F^2.
	\end{align}
	Next, we use the fact that the squared Frobenius norm of a matrix is the sum of squares of its singular values to write
	\begin{align}
	\norm{\hat{V}^TV}_F^2 = \sum_{i=1}^d\sigma_i^2 = \sum_{i=1}^d \cos^2\theta_i(E,\hat{E}).
	\end{align}
	We may then combine these identities to write
	\begin{align}
	\norm{\hat{V}\hat{V}^T - VV^T}^2_F = 2\sum_{i=1}^d(1-\cos^2\theta_i(E,\hat{E})) = 2\sum_{i=1}^d\sin^2\theta_i(E,\hat{E}).
	\end{align}
	as was to be shown.
\end{proof}

Using the previous lemma, it is easy to see that the distance between subspaces is preserved under taking orthogonal complements.

\begin{lemma}
	Let $F$ and $F'$ be subspaces of $\R^n$ of dimensions $m$, and let $F'$ and $F'^\perp$ denote their orthogonal complements. We have $d(F,F') = d(F^\perp,F'^\perp)$.
\end{lemma}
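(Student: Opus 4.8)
The plan is to observe that the subspace distance of Definition~\ref{def: subspace distance}\footnote{Referenced here as the preceding definition of $d(F,F')$.} is simply the Frobenius distance between the corresponding orthogonal projection matrices, and that passing to orthogonal complements replaces each projection $P$ by $I_n - P$, an operation that leaves the difference of two such matrices unchanged. So the identity will fall out of a one-line computation once the setup is in place.

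Concretely, I would first record that if $V$ is an $n\times m$ matrix whose columns form an orthonormal basis of $F$, then $P_F := VV^T$ is the orthogonal projection of $\R^n$ onto $F$; in particular $P_F$ depends only on the subspace $F$ and not on the chosen basis $V$ (this is already implicit in the previous lemma, which shows that $\norm{\hat V\hat V^T - VV^T}_F$ depends only on the two subspaces). Hence $d(F,F') = \norm{P_F - P_{F'}}_F$. Next, since $F$ and $F'$ both have dimension $m$, their orthogonal complements $F^\perp$ and $F'^\perp$ both have dimension $n-m$, so $d(F^\perp,F'^\perp)$ is a legitimate instance of the definition, and the orthogonal projection onto $F^\perp$ is $P_{F^\perp} = I_n - P_F$ (and likewise $P_{F'^\perp} = I_n - P_{F'}$). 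The computation is then immediate:
\[
d(F^\perp, F'^\perp) = \norm{P_{F^\perp} - P_{F'^\perp}}_F = \norm{(I_n - P_F) - (I_n - P_{F'})}_F = \norm{P_{F'} - P_F}_F = d(F,F').
\]

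There is essentially no obstacle here: the only points needing a word of justification are the basis-independence of $P_F$ (supplied by the previous lemma) and the remark that $\dim F^\perp = \dim F'^\perp = n-m$, which is needed merely so that the left-hand side is defined. If one instead wishes to argue purely at the level of principal angles, one can show that the multiset of principal angles of $(F^\perp,F'^\perp)$ coincides with that of $(F,F')$ after discarding or appending zero angles, and then invoke the identity $\norm{\hat V\hat V^T - VV^T}_F^2 = 2\sum_i \sin^2\theta_i(E,\hat E)$ from the previous lemma; but the projection-matrix argument above is shorter and avoids that bookkeeping.
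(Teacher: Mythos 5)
Your proof is correct, and the paper in fact offers no proof of this lemma at all --- it only remarks that the claim follows ``easily'' from the preceding lemma. Your projection-matrix computation ($P_{F^\perp}=I_n-P_F$, so the difference of projections is unchanged up to sign) is the natural and complete way to fill that gap, and you correctly flag the two points that need justification: basis-independence of $UU^T$ (supplied by the preceding lemma) and the fact that the complements have equal dimension so the left-hand side is defined. Your argument is arguably cleaner than the principal-angle route the paper's pointer suggests, since it avoids having to match the angle multisets of $(F,F')$ and $(F^\perp,F'^\perp)$.
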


We can now use these observations to state Theorem 2 from \cite{Yu2015} in a convenient form.

\begin{theorem}[Davis-Kahan theorem] \label{Davis-Kahan theorem}
	Let $\Sigma$ and $\hat{\Sigma}$ be two $n$ by $n$ symmetric real matrices, with eigenvalues $\lambda_1 \geq \cdots \geq \lambda_n$ and $\hat{\lambda}_1 \geq \cdots \geq \hat{\lambda}_n$. Fix $1 \leq r \leq s \leq n$, and assume that $\min\braces{\lambda_r-\lambda_{r+1},\lambda_s-\lambda_{s+1}} > 0$, where we define $\lambda_0 = \infty$ and $\lambda_{n+1} = - \infty$. Let $d$ = $r+n-s$, and let $V = \paren{v_1,v_2,\ldots,v_r,v_{s+1},\ldots,v_n}$ and $\hat{V} = \paren{\hat{v}_1,\hat{v}_2,\ldots,\hat{v}_r,\hat{v}_{s+1},\ldots,\hat{v}_n}$ be $n$ by $d$ matrices whose columns are orthonormal eigenvectors to $\lambda_1,\lambda_2,\ldots,\lambda_r,\lambda_{s+1},\ldots,\lambda_n$ and $\hat{\lambda}_1,\hat{\lambda}_2,\ldots,\hat{\lambda}_r,\hat{\lambda}_{s+1},\ldots,\hat{\lambda}_n$ respectively. Then
	\begin{equation}
	\norm{\hat{V}\hat{V}^T - VV^T}_F \leq \frac{2\sqrt{2d}\norm{\hat{\Sigma}-\Sigma}}{\min\braces{\lambda_r-\lambda_{r+1},\lambda_s-\lambda_{s+1}}}.
	\end{equation}
\end{theorem}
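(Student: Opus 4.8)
The plan is to reduce the stated inequality --- in which the columns of $V$ and $\hat V$ are eigenvectors for the \emph{non-contiguous} index set $\{1,\dots,r\}\cup\{s+1,\dots,n\}$ --- to the Davis--Kahan $\sin\theta$ theorem for a single \emph{contiguous} block of eigenvectors, and then to invoke that theorem in the sharp form of Yu, Wang and Samworth \cite{Yu2015}. Fix full orthonormal eigenbases $v_1,\dots,v_n$ of $\Sigma$ and $\hat v_1,\dots,\hat v_n$ of $\hat\Sigma$ extending the given eigenvectors, and let $W,\hat W$ be the $n\times(s-r)$ matrices with columns $v_{r+1},\dots,v_s$ and $\hat v_{r+1},\dots,\hat v_s$. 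The hypotheses $\lambda_r>\lambda_{r+1}$ and $\lambda_s>\lambda_{s+1}$ guarantee that the column space of $W$ is a sum of whole eigenspaces of $\Sigma$ and equals the orthogonal complement of $\operatorname{col}(V)$; and $\operatorname{col}(\hat W)=\operatorname{col}(\hat V)^\perp$ by construction. The orthogonal-complement lemma above therefore gives
\[
\norm{\hat V\hat V^T-VV^T}_F \;=\; d(V,\hat V)\;=\;d(W,\hat W)\;=\;\norm{\hat W\hat W^T-WW^T}_F .
\]
Since $\operatorname{col}(W)$ has dimension $s-r$ inside $\R^n$ and $n-(s-r)=r+n-s=d$, the number $m$ of nonzero principal angles in play is at most $\min(s-r,d)\le d$.

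It remains to bound $\norm{\hat W\hat W^T-WW^T}_F$ for the contiguous block. Writing $\delta:=\min\{\lambda_r-\lambda_{r+1},\lambda_s-\lambda_{s+1}\}$, the eigenvalues of $\Sigma$ on $\operatorname{col}(W)$ lie in $[\lambda_s,\lambda_{r+1}]$, flanked by spectral gaps of sizes $\lambda_r-\lambda_{r+1}$ and $\lambda_s-\lambda_{s+1}$. The Davis--Kahan $\sin\theta$ theorem in the form of \cite[Theorem~2]{Yu2015}, applied to the index range $\{r+1,\dots,s\}$, then yields the operator-norm estimate
\[
\norm{\hat W\hat W^T-WW^T} \;\le\; \frac{2\,\norm{\hat\Sigma-\Sigma}}{\delta}.
\]
The classical route to such an estimate is the Sylvester-equation argument: with $P:=WW^T$ and $Q:=\hat W\hat W^T$, the projector $P$ commutes with $\Sigma$ and $I-Q$ with $\hat\Sigma$, which gives $\hat\Sigma\,(I-Q)P-(I-Q)P\,\Sigma=(I-Q)(\hat\Sigma-\Sigma)P$, and bounding the solution of this Sylvester equation in terms of the separation between the spectrum of $\Sigma$ on $\operatorname{col}(P)$ and the spectrum of $\hat\Sigma$ on $\operatorname{col}(I-Q)$ gives the bound. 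The point of the Yu--Wang--Samworth refinement is that, after first disposing of the trivial case $\norm{\hat\Sigma-\Sigma}\ge\delta$ (where the estimate holds because $\norm{\hat W\hat W^T-WW^T}\le 1$), one can arrange the denominator to be the full gap $\delta$ of $\Sigma$ alone, at the cost of the factor $2$.

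Finally, converting back to Frobenius norm via $\norm{\hat W\hat W^T-WW^T}_F=\sqrt2\bigl(\sum_{i=1}^m\sin^2\theta_i\bigr)^{1/2}\le\sqrt{2m}\,\norm{\hat W\hat W^T-WW^T}$ together with $m\le d$, the three displays combine to
\[
\norm{\hat V\hat V^T-VV^T}_F \;\le\; \sqrt{2d}\cdot\frac{2\,\norm{\hat\Sigma-\Sigma}}{\delta} \;=\; \frac{2\sqrt{2d}\,\norm{\hat\Sigma-\Sigma}}{\min\{\lambda_r-\lambda_{r+1},\lambda_s-\lambda_{s+1}\}},
\]
which is the claimed bound. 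The only substantive ingredient is the operator-norm Davis--Kahan estimate with a one-sided gap, which we import wholesale from \cite{Yu2015}; everything else is bookkeeping with principal angles and orthogonal complements. I expect the one subtlety deserving care is verifying that the two cut points of $\Sigma$'s spectrum never lie inside an eigenspace, so that $\operatorname{col}(V)$ --- and hence the reduction to a contiguous block --- is unambiguously defined; this is exactly what the strict inequalities $\lambda_r>\lambda_{r+1}$ and $\lambda_s>\lambda_{s+1}$ provide.
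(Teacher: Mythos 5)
Your argument is correct and matches what the paper actually does: the paper offers no proof of this theorem, merely remarking that it is Theorem 2 of Yu--Wang--Samworth restated ``in a convenient form'' using the two preceding lemmas on principal angles and orthogonal complements, and your proposal supplies exactly the bookkeeping that restatement requires (passing to the complementary contiguous block $\{r+1,\dots,s\}$, invoking the operator-norm Davis--Kahan bound with the population gap $\delta$, and converting to the Frobenius norm via the count $m\le\min(s-r,\,r+n-s)\le d$ of nonzero principal angles). Your route through the operator-norm version is in fact the right one for obtaining the factor $\sqrt{2d}$ with $d=r+n-s$ rather than $\sqrt{2(s-r)}$, which a naive Frobenius-norm application of \cite[Theorem~2]{Yu2015} to the block $\{r+1,\dots,s\}$ would give.
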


\end{document}